\newtheorem{theorem}{Theorem}[section]
\newtheorem{lemma}[theorem]{Lemma}
\newtheorem{definition}{Definition}
\newcommand{\indep}{\perp \!\!\! \perp}
\begin{document}

%

%

\twocolumn[

\aistatstitle{Ultra-marginal Feature Importance: Learning from Data with Causal Guarantees}

\aistatsauthor{ Joseph Janssen \And Vincent Guan \And Elina Robeva }

\aistatsaddress{ University of British Columbia \And University of British Columbia\And University of British Columbia } ]

\begin{abstract}
  Scientists frequently prioritize learning from data rather than training the best possible model; however, research in machine learning often prioritizes the latter. Marginal contribution feature importance (MCI) was developed to break this trend by providing a useful framework for quantifying the relationships in data. In this work, we aim to improve upon the theoretical properties, performance, and runtime of MCI by introducing ultra-marginal feature importance (UMFI), which uses dependence removal techniques from the AI fairness literature as its foundation. We first propose axioms for feature importance methods that seek to explain the causal and associative relationships in data, and we prove that UMFI satisfies these axioms under basic assumptions. We then show on real and simulated data that UMFI performs better than MCI, especially in the presence of correlated interactions and unrelated features, while partially learning the structure of the causal graph and reducing the exponential runtime of MCI to super-linear. 
\end{abstract}

\section{INTRODUCTION}

Scientists often seek to understand the relationships between a set of characteristics and some outcome of interest \citep{kruskal1984concepts}. These relationships are ideally determined by performing carefully controlled experiments so that causality can be established. However, experiments can be difficult and costly to pursue, unethical to perform, or impossible to control \citep{wright1921correlation,vowels2021d}, leaving only observational data available. The relationships that are hidden within vast quantities of observational data are often difficult to determine, so statistical tools, such as feature importance, have been explored. 

Recently, feature importance methods such as Shapley values \citep{shapley1953value,cohen2007feature,lundberg2017unified}, Shapley additive global importance (SAGE) \citep{covert2020understanding}, accumulated local effects (ALE) \citep{apley2020visualizing}, permutation importance (PI) \citep{Breiman2001}, and conditional permutation importance (CPI) \citep{debeer2020conditional}, have been used in high-impact journal papers by scientists who want to explain the mechanisms behind observational data \citep{addor2018ranking,bazaga2020genome,stein2021climate,johnsen2021new,schmidt2020challenges,gill2017capacity,janssen2022application}. However, these methods are predominantly for model explanation or feature selection, so they have many shortcomings when used for other purposes such as scientific inference \citep{freiesleben2022scientific,catav}. ALE can nicely display how changes in inputs lead to altered model predictions but important higher order effects are omitted \citep{molnar2020interpretable}, and although CPI improves upon some limitations of PI, CPI gives zero importance to perfectly correlated features even if they offer significant explanatory power towards the response \citep{covert2020understanding}. Similarly, Shapley values diminish the importance of duplicated or highly correlated features \citep{catav}. Further, only one model is trained in ALE, CPI, and PI. Thus, correlated features, which can alter the model assembly process, could be given artificially low importance if the goal is to explain the data \citep{hooker2021unrestricted}. Due to the multiplicity of near optimal models with vastly different functioning, one cannot use a single model to explain the data generating processes \citep{marx2022but,molnar2021relating}. Instead of exploring a single model, the developers of SAGE, SPVIM, and marginal contribution feature importance (MCI) evaluate the difference in accuracy between a model trained with the feature of interest and a model trained without it, across all feature subsets \citep{catav, covert2020understanding,williamson2020efficient}. However, these methods have not been accepted by a wider scientific audience because of their high computational cost. In particular, we note that MCI is a recently developed method for explaining data, and it was shown in extensive experiments to have better quality and robustness when compared to Shapley values, SAGE, ablation, and bivariate methods \citep{catav}. 

Although MCI is a powerful and innovative method for explaining data, it has three key shortcomings. First, MCI demands an exponential number of model trainings, making it unsuitable even for small-to-medium-sized datasets. Second, although it can handle complex feature interactions and data with correlated features, MCI underestimates the importance of correlated features that form interaction effects because MCI usually ignores features that share information with the feature of interest, as explained further in Section \ref{sec:CorrelatedInteractions}. Third, MCI can give non-zero importance to features that are completely unrelated to the response variable, as experimentally shown in \citet[Figure S3]{catav} and theoretically shown in \citet{harel2022inherent}. We hypothesize that constructing information-preserving representations of the data that are independent of the feature of interest could resolve these three issues. With this in mind, we introduce ultra-marginal feature importance (UMFI), a new variable importance method that can better describe data while drastically reducing runtime.

The rest of this paper is organized as follows. Axioms for explaining data are proposed in Section \ref{sec:axioms}. The framework for UMFI is then formally presented in Section \ref{sec:UMFI} along with its theoretical properties and its simple algorithm. In Section \ref{sec:experiments}, we conduct experiments on simulated and real data to assess the quality, robustness, and time complexity of UMFI compared to MCI. Finally, an overview of the work, its limitations, and ideas for future work are discussed in Section \ref{Conclusion}.

\subsection*{Related Work}

This paper is greatly inspired by the development of marginal contribution feature importance (MCI) by \citet{catav}. Let $F= \{x_1, ..., x_p\}$ be the set of features used to predict the response variable, $y$. The universal predictive power of a set of features $S \subseteq F$ is given by
\begin{equation}
    \nu(S)= \min_{f \in G(\emptyset)} \mathbb{E}[l(f(\emptyset),y)]- \min_{f \in G(S)} \mathbb{E}[l(f(S),y)],
    \label{machine_nu}
\end{equation}
where $l$ is a specified loss function and $G(S)$ is the set of all predictive models restricted to using features in $S \subseteq F$. $\nu$ is closely related to mutual information, with equality under ideal conditions \citep{covert2020understanding}, but in practice, $\nu$ is often approximated by machine learning evaluation functions \citep{covert2020understanding,catav}. Using this, \citet{catav} defined the marginal contribution feature importance (MCI) of a feature $x_i \in F$ as
\begin{equation}
    I_\nu(x_i) = \max\limits_{S \subseteq F} \quad \nu(S \cup \{x_i\}) - \nu(S).
    \label{MCI}
\end{equation}

To achieve our goal of improving upon the shortcomings of MCI, we evaluate the importance of a feature of interest $x_i$ after preprocessing the data to remove dependencies on $x_i$. Finding independent representations of predictors for creating improved feature importance methods is a novel objective, though similar ideas have been alluded to as future work in \citet{konig2021decomposition}, \citet{chen2020true}, and \citet{fan2008sure}. The weaker concept of finding orthogonal representations of data has been discussed previously \citep{gibson1962orthogonal}, though the discussion has been limited to relative importance measures for multiple linear regression \citep{bi2012review,wurm2014residualizing}. Methods which can not only orthogonalize features, but also remove more general dependencies, have seen great progress within the domains of AI fairness and privacy. Some examples of these \textit{preprocessing} techniques include regression \citep{bird2020fairlearn}, optimal transport \citep{johndrow2019algorithm}, neural networks \citep{song2019learning,moyer2018invariant,gitiaux2021learning,gitiaux2021fair}, convex optimization \citep{calmon2017optimized}, kernels \citep{tan2020learning}, and principal inertial components \citep{wang2017estimation}. Linear regression and optimal transport are used in this paper. \citet{gurushankar2022extracting} studies the problem of using fairness techniques to perform partial information decomposition, which leads to interesting connections with our work.

\section{AXIOMS FOR EXPLAINING DATA}
\label{sec:axioms}
\subsection{Background and Notation}

We use $x_i$ to denote an observed feature from the feature set $F$, and $X_i$ to denote the random variable that $x_i$ is sampled from. Suppose that $(F,y)=(x_1, ..., x_p, y)$ is sampled jointly from the joint distribution $( X_{obs},Y)$, where $X_{obs} \subseteq X_{full}= \{X_1, X_2,... X_r \}$, and $p \leq r$. To accommodate dependency removal, we will require that the universal predictive power $\nu$, defined in Equation \eqref{machine_nu}, is also defined for transformations of the feature set. We therefore define the space of \textit{information subsets} of a feature set $F$ as 
\begin{equation}
\mathcal{I}(F)=\{g(F): g \text{ is a deterministic function of }F\}.
\label{info_subset}
\end{equation}
We call these information subsets of $F$ because $I(Y;g(F)) \le I(Y;F)$ holds for any deterministic function $g$ by Theorem \ref{mut_info_bounds}. 

Given the data $(F,y)$ and an evaluation function $\nu$, the feature importance of $x_i \in F$ is denoted by 

\begin{equation}
Imp_{\nu}^{F,y}(x_i) \in \mathbb{R}_{\geq 0}.
\label{imp}
\end{equation}




In contrast to $X_{obs}$, we assume that $X_{full}$ is causally sufficient, meaning that there are no latent confounders in the underlying data generating process \citep{yu2018mining}. Hence, we may consider the full causal graphical model with graph $G=(V,E)$, such that each vertex from $V=\{1, ..., r+1\}$ is associated to a random variable from $X_{full} \cup Y:= \{V_1, ..., V_r, V_{r+1}\}$, and the directed edge set $E$ enables the graphical model to obey the global Markov property \citep{kang2009markov}. The graphical model can often be given by a structural causal model $C=(G, U,\mathcal{E})$, where $G$ is as defined above, $\mathcal{E}$ is a set of mutually independent noise variables $\epsilon_1, ..., \epsilon_{r+1}$, and $U=\{u_1, ..., u_{r+1}\}$ is the set of functions relating the variables $V_1, ..., V_{r+1}$ to their parents in $G$ via the relation $V_i=u_i(V_{pa(i)}, \epsilon_i)$. 

\subsection{Axioms}
Any attempt to build a method that explains the data should begin by rigorously defining what explaining the data truly means. Different, but closely related definitions and goals have been formulated by \citet{chen2020true}, \citet{catav}, \citet{gromping2009variable}, \citet{benard2022shaff}, and \citet{tolocsi2011classification}. Inspired by the above works, we provide three intuitive axioms for feature importance methods intended for explaining data and scientific inference.



\begin{enumerate}
    \item \textbf{Elimination axiom:} Eliminating a feature $x_j$ from $F$ can only decrease the importance of other features:  $$\forall x_i \in F \setminus \{ x_j\}, Imp_{\nu}^{F \setminus \{x_j\},y}(x_i) \leq Imp_{\nu}^{F,y}(x_i).$$
    \item \textbf{Invariance under Redundant Information and Symmetry under Duplication (IRI \& SD) axiom:} Let $\hat{x}$ be a redundant feature, i.e. $\hat{x} \in \mathcal{I}(F)$. Then adding $\hat{x}$ to the feature set $F$ to create $F'= F \cup \hat{x}$ will not change the importance given to any preexisting feature in $F$. Also, in the case where the added feature $\hat{x}$ is a duplicate of a preexisting feature $x_j \in F$ (i.e., $\hat{x} = h(x_j)$ and $h$ is bijective) both $\hat{x}$ and $x_j$ will be given equal importance: 
    $$\hat{x} \in \mathcal{I}(F) \implies Imp_{\nu}^{F,y}(x_i)=Imp_{\nu}^{F',y}(x_i) \textrm{ } \forall x_i \in F$$
    $$\hat{x} = h(x_j) \implies Imp_{\nu}^{F',y}(\hat{x})=Imp_{\nu}^{F',y}(x_j).$$
    \item \textbf{Blood relation axiom:}
    A feature $x_i$ will be given non-zero and positive importance if and only if it is blood related to the response $Y$ in the full causal graphical model $G=(V,E)$. Two vertices are said to be blood related if there is a directed path between them or if there is a backdoor path between them via a common ancestor. Figure \ref{fig:causalGraph_bloodrel} illustrates an example.
    $$Imp_{\nu}^{F,y}(x_i)>0 \iff X_i \in BR_G(Y).$$
\end{enumerate}

The \textit{elimination axiom} comes directly from \citet{catav}. Once a feature is observed to be related to the response, the relationship strength between the feature and response should not drop, regardless of the additional features added. In fact, the importance should often increase, since adding features could reveal further synergistic information about the response \citep{griffith2014quantifying,williams2010nonnegative}. Hence, the elimination axiom suggests that a feature importance method intended for scientific inference should be expressive enough to capture complex feature interactions.

The \textit{invariance under redundant information (IRI)} portion of our second axiom is a generalization of the duplication invariance property introduced in \citet{catav} and the group size invariance property introduced in \citet{tolocsi2011classification}. If a dataset contains two duplicate features, a model may use them equally often and therefore divide the importance equally between them (random forests), or only one of the features may be given importance (lasso) \citep{chen2020true}. However, from the data's perspective, the original importance found before duplication should be maintained \citep{tolocsi2011classification,catav}. Further, after adding duplicate features, no additional interaction capability is available \citep{griffith2014quantifying}, so the importance of all other features should remain the same. The same arguments can be made for features offering only redundant information, which arise much more frequently than exact duplicates, thus motivating IRI. The \textit{symmetry under duplication (SD)} aspect of our second axiom is functionally equivalent to the symmetry property presented in \citet{catav} and is similar to the correlated group property in \citet{tolocsi2011classification}. The purpose of this part of the axiom, as pointed out by \citet{tolocsi2011classification}, is to prevent the feature importance method from dismissing features that are not needed for better predictions, but which may be part of the true underlying causal model.

We call the final axiom the \textit{blood relation axiom} because it asserts that a feature should have non-zero importance with respect to the response $y$ if and only if it is connected to $y$ in the causal graph through a directed path or via a common ancestor, much like how blood relatives are connected within a family tree. This axiom suggests that feature importance scores intended for data explanation should extract reliable knowledge about the underlying causal graph and data generating process. When two features are blood related, or equivalently, when there is an open path between them, the two features are said to be associated (see \citet{greenland1999causal} and \citet{williams2018directed}). Thus, a feature importance metric satisfying this axiom would give non-zero importance to a feature if and only if there is a statistical association between that feature and the response. Statistical association is not only a quality of interest for many applications (e.g., genome-wide association studies), it also forms the foundation of Pearl's causal hierarchy \citep{shpitser2008complete}. Some may argue that we should only assign importance to features with a directed path to the response, but as pointed out by \citet{gromping2009variable}, even if this were desired, this is an impossible goal since $X \rightarrow Z \rightarrow Y$ is Markov equivalent to $X \leftarrow Z \rightarrow Y$. It is argued by \citet{gromping2009variable} that any importance measure aimed at explanatory or causal purposes must give importance to $X$ and $Z$ in both scenarios. 

Additionally, a feature importance metric satisfying this axiom can partition the feature set into features that are blood related to the response and features that are not blood related to the response. Although it does not enable us to immediately recover the full causal graph, this partitioning may be a helpful supplemental tool for other fast partial causal graph discovery methods \citep{soleymani2022causal}. This would be an especially relevant research direction since \citet{reisach2021beware} recently showed that classic causal discovery algorithms are more inaccurate than previously thought. This partitioning could also be useful for sure independence screening \citep{fan2008sure,schellhas2020distance}.

The axioms introduced in this section are only intended to define ideal properties for feature importance methods that are focused on scientific inference. We do not claim that a unique solution exists since all three axioms are invariant to scalar multiplication, nor do we claim that a method exists that satisfies all axioms in all scenarios. 

\section{ULTRA-MARGINAL FEATURE IMPORTANCE}
\label{sec:UMFI}

Let $F= \{x_1, ..., x_p\}$ be a set of $p$ features of arbitrary type used to predict the response $Y$. 



\begin{definition}
\label{def:remove_dep}
We denote $S^F_{x_i}$ as a preprocessed feature set after dependencies on the feature of interest $x_i$ have been removed from $F$. An optimally preprocessed feature set is denoted by $\hat{S}^F_{x_i}$, and we say that a preprocessing $S^F_{x_i}$ is optimal if it obeys the following properties:
\begin{enumerate}
\label{list }
    \item $S^F_{x_i} \in \mathcal{I}(F)$
    \item $S^F_{x_i} \indep X_i$
    \item $I(Y;S^F_{x_i},X_i)=I(Y;F)$
\end{enumerate}
\label{optimal_preprocessing}
\end{definition}

The first property ensures that $S^F_{x_i}=g(F)$ for some deterministic function $g$, and hence no information from outside of $F$ is gained during the transformation. The second property upholds that $S^F_{x_i}$ is independent of $X_i$. The last property affirms that there is no unnecessary information loss or distortion incurred during preprocessing \citep{gitiaux2022sofair}. For simplicity, we use $F$ and $\hat{S}^F_{x_i}$ to denote both their respective data and random vector instantiations.

Provided that it exists, an optimal preprocessing $\hat{S}^F_{x_i}$ is not unique since scaling by a constant does not affect any of the optimality conditions. Therefore, given a feature of interest $x_i$ and a feature set $F$, we may consider the (possibly empty) equivalence class $[\hat{S}^F_{x_i}]$ of optimal preprocessings. In practice, the last two properties of Definition \ref{optimal_preprocessing} can be difficult to guarantee, but we later observe in Section \ref{sec:experiments} that non-optimal preprocessings can often be good enough to provide highly accurate feature importance scores for scientific inference.

\begin{definition}
\label{def:umfi}
Given an evaluation function $\nu: \mathcal{I} (F) \to \mathbb{R}_{\geq 0}$, a feature set $F$, and a response $y$, we define the ultra-marginal feature importance (UMFI) of the feature of interest $x_i \in F$ as 
\begin{equation}
    U^{F,y}_\nu (x_i) = \nu (S^F_{x_i} \cup \{x_i\}) - \nu(S^F_{x_i}).
    \label{umfi}
\end{equation}
\end{definition}
We note that this definition allows for the use of any preprocessing $S^F_{x_i}$. When the utilized preprocessing $\hat{S}^F_{x_i}$ is optimal, we say that UMFI is optimally computed.
\begin{theorem}
\label{theorem:umfi_master}
Suppose that the data $(F,y)$ comes from a multivariate Gaussian distribution and that $\nu(\cdot)$ is positively linearly related to $I(Y;\cdot)$, then we can ensure that  $U^{F,y}_{\nu}$ satisfies (i) the elimination axiom and (ii) the redundant information invariance and duplication symmetry axiom, by using linear regression to optimally compute $U^{F,y}_{\nu}$. If we additionally assume that the joint distribution of the variables $(X_{full},Y)$ is faithful to the causal graph $G$, then $U^{F,y}_{\nu}$ also satisfies (iii) the blood relation axiom.
\end{theorem}
\begin{proof} Since $\nu(\cdot) = aI(Y;\cdot)+b$ for some $a \ge 0$ and $b \in \mathbb{R}$, $U^{F,y}_{\nu} = \nu (S^F_{x_i} \cup \{x_i\}) - \nu(S^F_{x_i})$, and scaling by a constant factor does not alter any axiom, without loss of generality, it suffices to prove the axioms for the case where $\nu(\cdot) = I(Y;\cdot)$. \\

(i) Elimination: $U^{F \setminus \{x_j\},y}_{\nu}(x_i)\le U^{F,y}_{\nu}(x_i)$. 

Let $x_j$ be the eliminated feature, and let $x_i \in F \setminus \{x_j\}$ be the feature of interest. Theorem \ref{existence_optimal_preprocessing} in the Supplement shows that pairwise linear regression attains an optimal preprocessing $\hat{S}^F_{x_i}$ since $(F,y)$ is multivariate Gaussian. After mean centering, pairwise linear regression transforms each feature $x_k \in F \setminus \{x_j\}$ into $\tilde{x}_k=x_k-\beta_k x_i$, where $\beta_k$ is the regression coefficient of $x_k$ on $x_i$. Hence, $\hat{S}^F_{x_i} = (\hat{S}^{F \setminus \{x_j\}}_{x_i}, \tilde{x}_j)$ and
\begin{align*}
    &U^{F,y}_{\nu}(x_i)=I(Y; \hat{S}^{F}_{x_i}, X_i) - I(Y; \hat{S}^{F}_{x_i})\\
    &= I(Y; \hat{S}^{F \setminus \{x_j\}}_{x_i}, \tilde{X}_j, X_i) - I(Y; \hat{S}^{F \setminus \{x_j\}}_{x_i}, \tilde{X}_j)\\
     & \ge I(Y; \hat{S}^{F \setminus \{x_j\}}_{x_i}, X_i) - I(Y; \hat{S}^{F \setminus \{x_j\}}_{x_i})\\
     &= U^{F \setminus \{x_j\},y}_{\nu}(x_i),
\end{align*}
where the inequality is given by the supermodularity of mutual information under independence (Theorem \ref{supermodularity_theorem}), since $X_i \indep (\hat{S}^{F \setminus \{x_j\}}_{x_i}, \tilde{X}_j)$ follows from the optimality of $\hat{S}^F_{x_i}$.

(ii) IRI: Let $\hat{x} \in \mathcal{I}(F)$, then $U^{F,y}_{\nu}(x_i)=U^{F \cup \{\hat{x}\},y}_{\nu}(x_i)$.

By the optimality of the preprocessings and the invariance of mutual information under redundant information,
\begin{align*}
    U^{F,y}_{\nu}(x_i)&=I(Y; \hat{S}^F_{x_i},X_i)-I(Y; \hat{S}^F_{x_i})\\ 
    &=I(Y;F) - I(Y;\hat{S}^F_{x_i})\\
    U^{F \cup \{\hat{x}\},Y}_{\nu}(x_i)&=I(Y; \hat{S}^{F \cup \{\hat{x}\}}_{x_i},X_i)-I(Y; \hat{S}^{F \cup \{\hat{x}\}}_{x_i})\\
    &= I(Y; F) - I(Y; \hat{S}^{F \cup \{\hat{x}\}}_{x_i})
\end{align*}

From Lemma \ref{lemma:equivalence_class}, $[\hat{S}^F_{x_i}] \equiv [\hat{S}^{F \cup \{\hat{x}\}}_{x_i}]$. In other words, every optimal preprocessing $\hat{S}^F_{x_i} \in [\hat{S}^F_{x_i}]$ also exists in $[\hat{S}^{F \cup \{\hat{x}\}}_{x_i}]$ and vice versa. Thus, we may select the same optimal preprocessing from each equivalence class to ensure that $I(Y; \hat{S}^{F}_{x_i}) = I(Y; \hat{S}^{F \cup \{\hat{x}\}}_{x_i})$, which concludes the proof.

SD: Let $\hat{x} = h(x_j)$ for some bijective function $h$. Then, $U^{F \cup \{ \hat{x} \},y}_{\nu}(\hat{x})=U^{F \cup \{\hat{x}\},y}_{\nu}(x_j)$.

We have $U^{F \cup \{ \hat{x} \},y}_{\nu}(\hat{x})=
I(Y;F) - I(Y;\hat{S}^{F \cup \{\hat{x}\}}_{\hat{x} })$ and $U^{F \cup \{\hat{x}\},y}_{\nu}(x_j) = I(Y;F) - I(Y;\hat{S}^{F \cup \{\hat{x}\}}_{x_j})$. Similarly to the proof of IRI, it suffices to show that $[\hat{S}^{F \cup \{\hat{x}\}}_{x_j}] \equiv [\hat{S}^{F \cup \{\hat{x}\}}_{\hat{x}}]$, which is proven in Lemma \ref{lemma:equivalence_class} in the Supplement. 

(iii) Blood relation: $U^{F,y}_\nu(x_i)>0 \iff x_i \in BR_G(Y)$.

We equivalently prove $U^{F,y}_\nu(x_i)=0 \iff X_i \not \in BR_G(Y)$. First, we write $U^{F,y}_{\nu}(x_i)=I(Y;S^F_{x_i},X_i)-I(Y;S^F_{x_i})=I(Y;x_i|S^F_{x_i})$. Also, from the definition of conditional mutual information, $$U^{F,y}_{\nu}(x_i)=0 \iff I(Y;X_i|S^F_{x_i})=0 \iff X_i \indep Y | S^F_{x_i}.$$ Next, we claim that $U^{F,y}_{\nu}(x_i)=0 \iff x_i \indep Y$. A conditional independence axiom that holds for the variables in a structural causal model is the contraction axiom \citep{dawid1979conditional}: $X \indep Y \text{ and } X \indep W | Y \iff X \indep (W,Y)$. From this, we obtain the statement
\begin{align*}
    X_i \indep S^F_{x_i} \text{ and } X_i \indep Y | S^F_{x_i} \iff X_i \indep (Y,S^F_{x_i})
\end{align*}
Since $X_i \indep S^F_{x_i}$ is true from the optimality of the preprocessing, this property can be reduced to 
\begin{equation}
    X_i \indep Y | S^F_{x_i} \iff X_i \indep (Y,S^F_{x_i}) \iff X_i \indep Y
    \label{BR_gaussian_eq}
\end{equation}
where the last equivalence is due to $X_i \indep S^F_{x_i}$ and the fact that $(\hat{S}^F_{x_i}, X_i, Y)$ is multivariate Gaussian \citep{steudel2015information,lauritzen2018unifying}.

All that is left to prove is $X_i \indep Y \iff X_i \not \in BR_G(Y)$. The claim $X_i \not \in BR_G(Y) \implies X_i \indep Y$ follows from the global Markov property. Indeed, it is easy to check that $X_i$ and $Y$ are d-separated by the empty set in this case. The converse holds because we have assumed that the graphical model obeys faithfulness. Indeed, $X_i$ and $Y$ would be d-connected by the empty set if $X_i \in BR_G(Y)$. 
\end{proof}

We note that the linear relation between $\nu(\cdot)$ and $I(Y;\cdot)$ is pivotal to the proof of Theorem \ref{theorem:umfi_master}. Under ideal conditions, this relationship holds \citep{covert2020understanding}, but in practice, the accuracy of the approximation depends on the quality of the method, the loss function, and the response variable's distribution. See \citet{covert2021explaining} and Supplement \ref{MI_ML_eval} for a more thorough overview. Simulations have demonstrated that Gaussian graphical models with sparse graphs are generally faithful to the graph; hence, assuming faithfulness is often reasonable \citep{malouche2008estimating}. 

We remark that UMFI does not require Gaussianity to satisfy all axioms. Indeed, since the last equivalence in Equation \eqref{BR_gaussian_eq} can be relaxed to be an implication without Gaussianity, UMFI satisfies one direction of the blood relation axiom, $U_\nu^{F,y}(x_i)=0 \implies x_i \not \in BR_G(Y)$, for arbitrary distributions. Additionally, we show that UMFI satisfies the blood relation axiom under alternate assumptions in Supplement \ref{sec:UMFI_appendix}. We also note that the proof of Theorem \ref{theorem:umfi_master}(ii) does not require the Gaussian assumption, making the IRI \& SD axiom true for arbitrary distributions.

\begin{algorithm}[h!]
\caption{Algorithm for computing UMFI}\label{algo:umfi}
\begin{algorithmic}[1] 
\STATE Let $y$ be the response variable of the set of predictors $F$. Choose a feature $x_i \in F$.\\
\STATE Obtain $S^{F}_{x_i}$ by using a technique that removes dependencies on $x_i$ from $F$.\\
\STATE Specify a method $f$ and a corresponding evaluation function $\nu_f$.\\
\STATE Estimate the predictive power, $\nu_f(S^{F}_{x_i})$, that $S^{F}_{x_i}$ has about $y$. \\
\STATE Estimate the predictive power, $\nu_f(S^{F}_{x_i} \cup \{x_i\})$, that $S^{F}_{x_i} \cup \{x_i\}$ has about $y$. \\
\RETURN $U^{F,y}_{\nu_f}(x_i)=\nu_f(S^{F}_{x_i} \cup \{x_i\})- \nu_f(S^{F}_{x_i})$
\end{algorithmic}
\end{algorithm}

Since UMFI is model-agnostic, we provide a general algorithm, which can be applied using any pair of preprocessing functions and evaluation functions $\nu_f$ (Algorithm \ref{algo:umfi}). We note that $\nu_f$ is not restricted to the domain of machine learning models or even models in general. For example, one could also implement UMFI with measures of dependence such as QMD \citep{griessenberger2022multivariate} or FOCI \citep{azadkia2021simple}. 

\section{EXPERIMENTS}
\label{sec:experiments}

We perform experiments to compare UMFI and MCI with respect to quality, robustness, and time complexity. To implement UMFI, we consider optimal transport \citep{johndrow2019algorithm} (UMFI\_OT) and linear regression \citep{bird2020fairlearn} (UMFI\_LR) as methods to remove dependencies from the data. A detailed overview of these implementations is shown in Supplement \ref{sec:RemovingDependencies} and experiments comparing their performance appear in Supplement \ref{sec:LR_vs_OT}. For all experiments, we use random forests' out-of-bag accuracy (cross-validation $R^2$ for regression and cross-validation overall accuracy for classification) as the evaluation metric $\nu_f$ since it can capture nonlinearities and interaction effects \citep{Breiman2001,taufiq2023manifold,wright2016little}. We use the \emph{ranger} package to implement random forests with $100$ trees and default hyperparameters \citep{wright2015ranger}. All experiments were run in Microsoft R Open Version 4.0.2. Supplement \ref{sec:FurtherEXP_feature_imp} contains additional experiments comparing UMFI with other feature importance metrics including ablation, permutation importance, and conditional permutation importance. In the same section, we rerun the experiments comparing MCI and UMFI using extremely randomized trees instead of random forests and do an additional comparison on a real dataset from hydrology \citep{addor2017camels}. Code for all experiments in R, inlcuding basic MCI and UMFI functions, can be found at \url{https://github.com/HydroML/UMFI}. The analagous Python package can be found at \url{https://github.com/guanton/umfi_python}.

\subsection{Experiments on Simulated Data}
\label{sec:SimulatedData}
In the following subsections, we compare UMFI and MCI on simulated data. The data in all scenarios contains one response variable $y$, four explanatory features $x_1, x_2, x_3, x_4$, and $1000$ randomly generated observations. Each study is repeated $100$ times to test and ensure stability \citep{yu2013stability}.

\subsubsection{Nonlinear Interactions}
\label{sec:Interactions}

Interaction effects are common in many scientific disciplines where assessing feature importance is prevalent, including hydrology \citep{janssen2021hydrologic,addor2018ranking,le2022snow,li2022statistical}, genomics \citep{catav,wang2021genome,orlenko2021comparison,wright2016little}, and glaciology \citep{edwards2021projected,bach2018sensitive,brenning2010statistical,sevestre2015climatic}. So, as was done in \citet{catav}, we assess the ability of MCI and UMFI to detect nonlinear interaction effects in the data \citep{marx2021weaker}. We consider:
\begin{align*}
    &x_1,x_2,x_3,x_4 \sim \mathcal{N}(0,1)\\
    &y=x_1+x_2+sign(x_1*x_2)+x_3+x_4.
\end{align*}

Feature importance metrics should ideally conclude that $x_1$ and $x_2$ have higher importance compared to $x_3$ and $x_4$ because of the extra interaction term, $sign(x_1*x_2)$. Figure \ref{fig:int} shows consistently good performance across all methods. Each method gave high relative importance scores to $x_1$ and $x_2$, while $x_3$ and $x_4$ received less, but still substantial importance. All methods show similar variability.

\subsubsection{Correlated Interactions}
\label{sec:CorrelatedInteractions}
Interacting features are often correlated \citep{jakulin2003quantifying,janssen2021hydrologic}. So, this simulation study aims to repeat the nonlinear interactions study, except now $x_1$ and $x_2$ are highly correlated with eachother. In the same way, $x_3$ and $x_4$ are highly correlated with eachother. Let $A,B,C,D,E,G \sim \mathcal{N}(0,1)$. We consider:
\begin{align*}
    &x_1=A+B, \text{ } x_2=B+C, \text{ }x_3=D+E,\text{ } x_4=E+G \\
    &y=x_1+x_2+sign(x_1*x_2)+x_3+x_4.
\end{align*}

Just as with the interaction experiment with independent features, we would expect $x_1$ and $x_2$ to be more important than $x_3$ and $x_4$ because of the extra interaction term, $sign(x_1*x_2)$. The results in Figure \ref{fig:corInt} show that UMFI provides substantially better feature importance scores compared to MCI when correlated interactions are present. MCI estimates that all features have approximately the same feature importance scores, while both UMFI methods appropriately give significantly greater importance to $x_1$ and $x_2$ compared with $x_3$ and $x_4$. MCI fails in this experiment because it penalizes feature subsets that share information with the feature of interest (Equation \eqref{MCI}). For example, if we are assessing the MCI score for $x_1$, since $x_2$ is strongly correlated with $x_1$, then the predictive power offered by $x_1$ on top of a subset $S$ would be diminished by the presence of $x_2 \in S$. Therefore, $x_2$ is not utilized in the MCI score for $x_1$, which prevents the detection of the interaction term $sign(x_1*x_2)$. UMFI is able to detect this interaction because it can extract the information from $x_2$ that interacts with $x_1$ while keeping this extracted feature independent of $x_1$. We suspect that similar results could be demonstrated in the presence of dependent, but uncorrelated interactions.

\begin{figure}
\begin{subfigure}{0.49592\linewidth}
\centering
  \includegraphics[width=\linewidth]{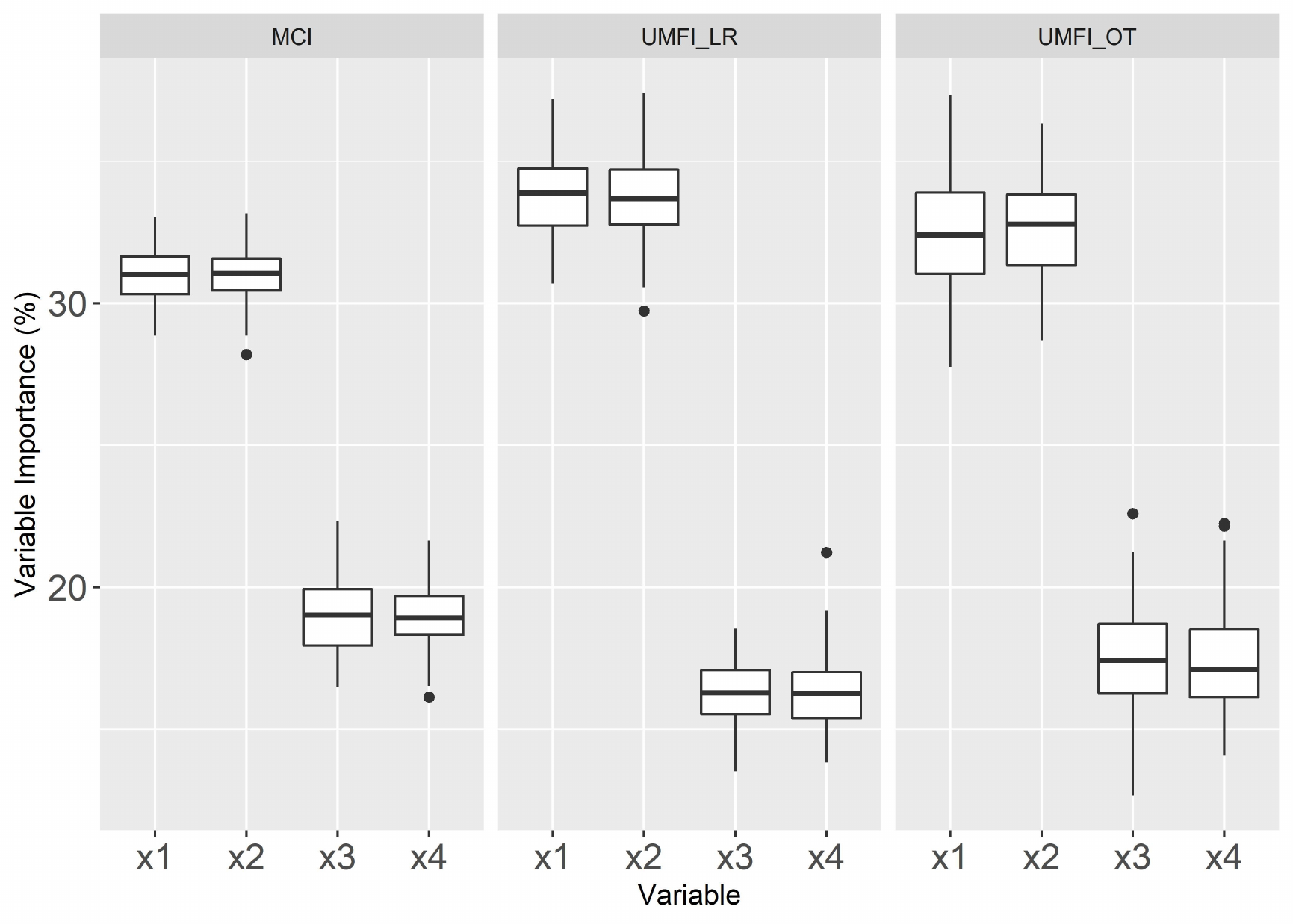}
  \caption{Nonlinear interactions}
  \label{fig:int}
\end{subfigure}
\begin{subfigure}{.49592\linewidth}
  \centering
  \includegraphics[width=\linewidth]{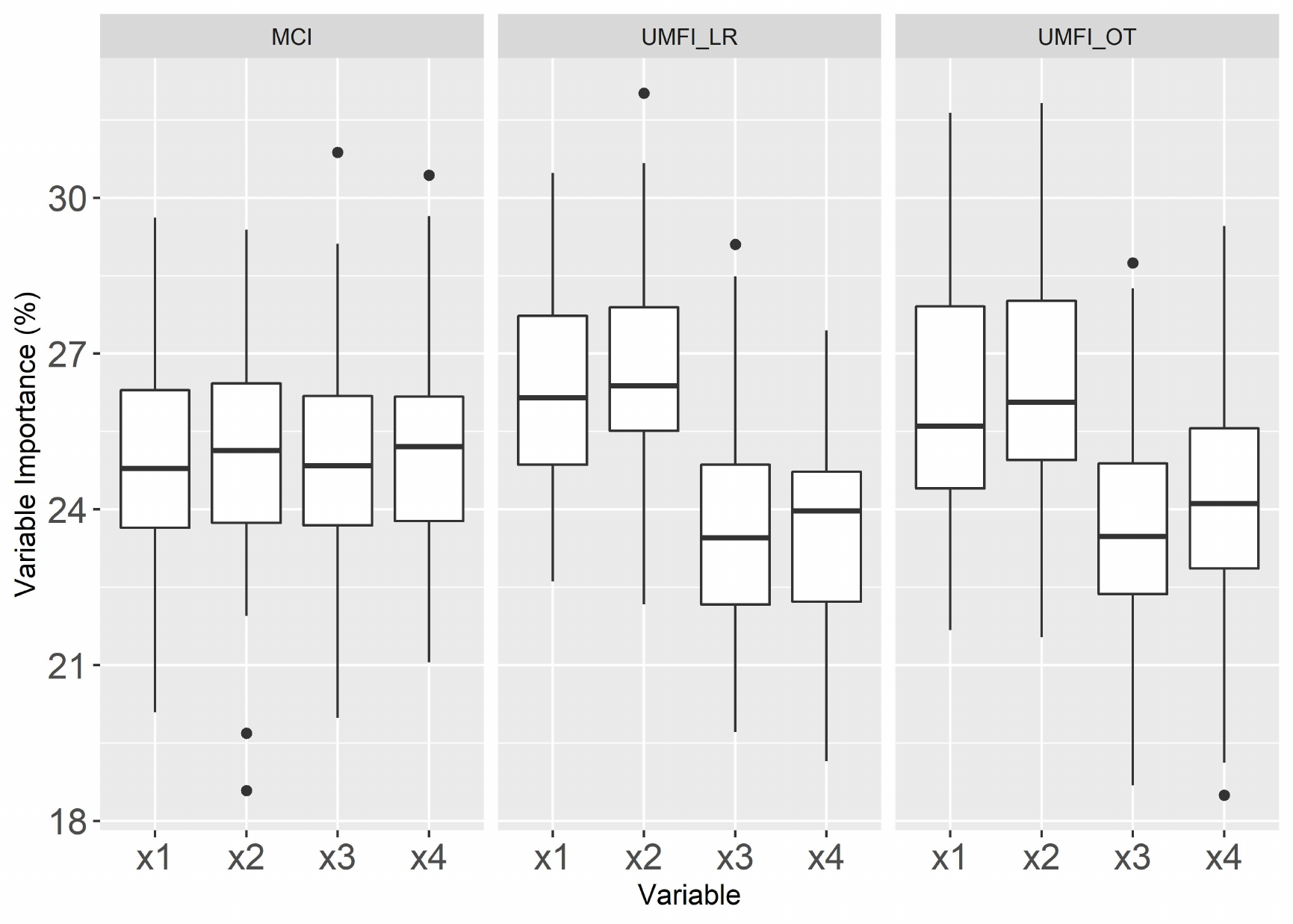}
  \caption{Correlated interactions}
  \label{fig:corInt}
\end{subfigure}\\[1ex]
\begin{subfigure}{.49592\linewidth}
  \centering
  \includegraphics[width=\linewidth]{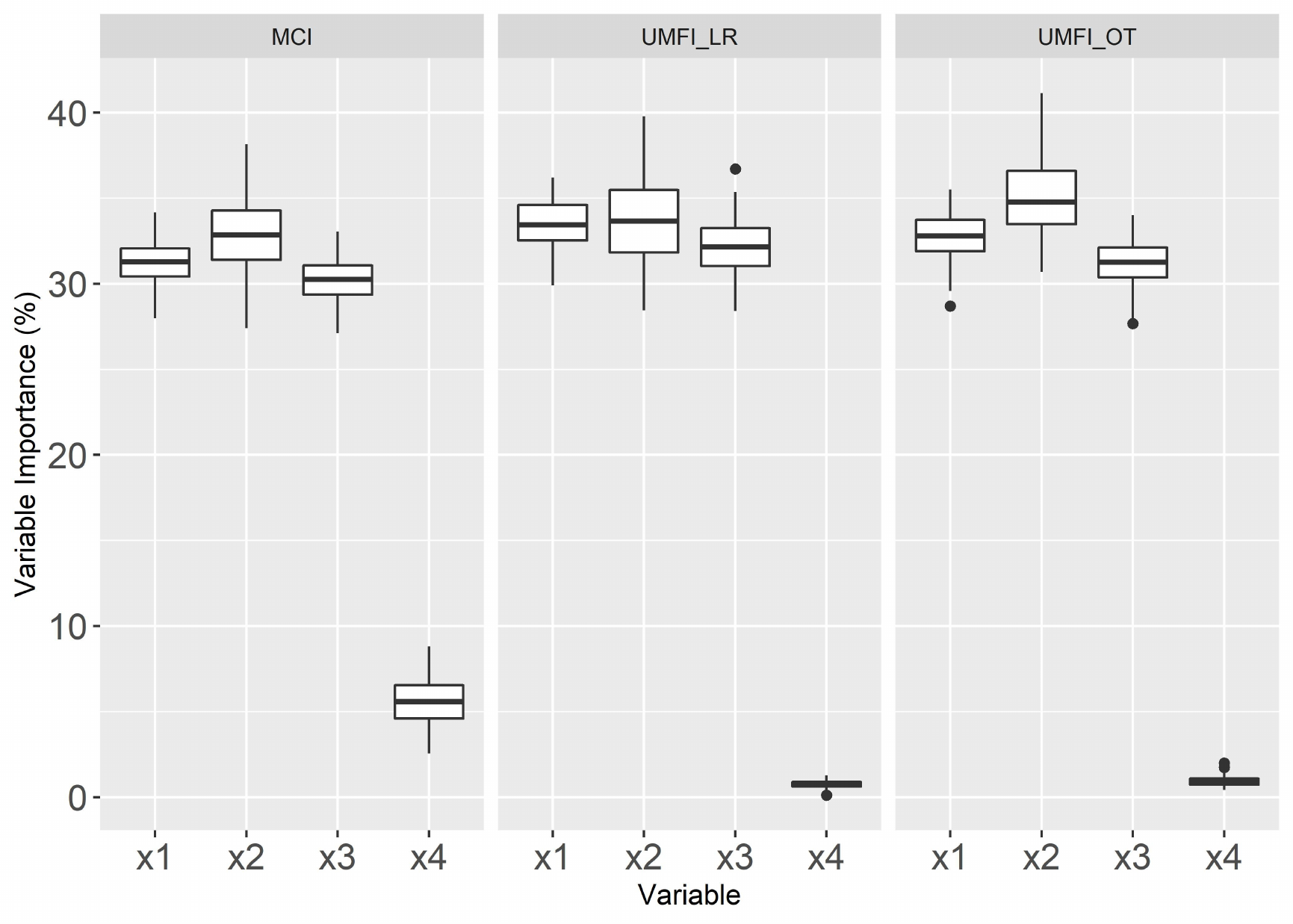}
  \caption{Correlation}
  \label{fig:cor}
\end{subfigure}
\begin{subfigure}{0.49592\linewidth}
\centering
  \includegraphics[width=\linewidth]{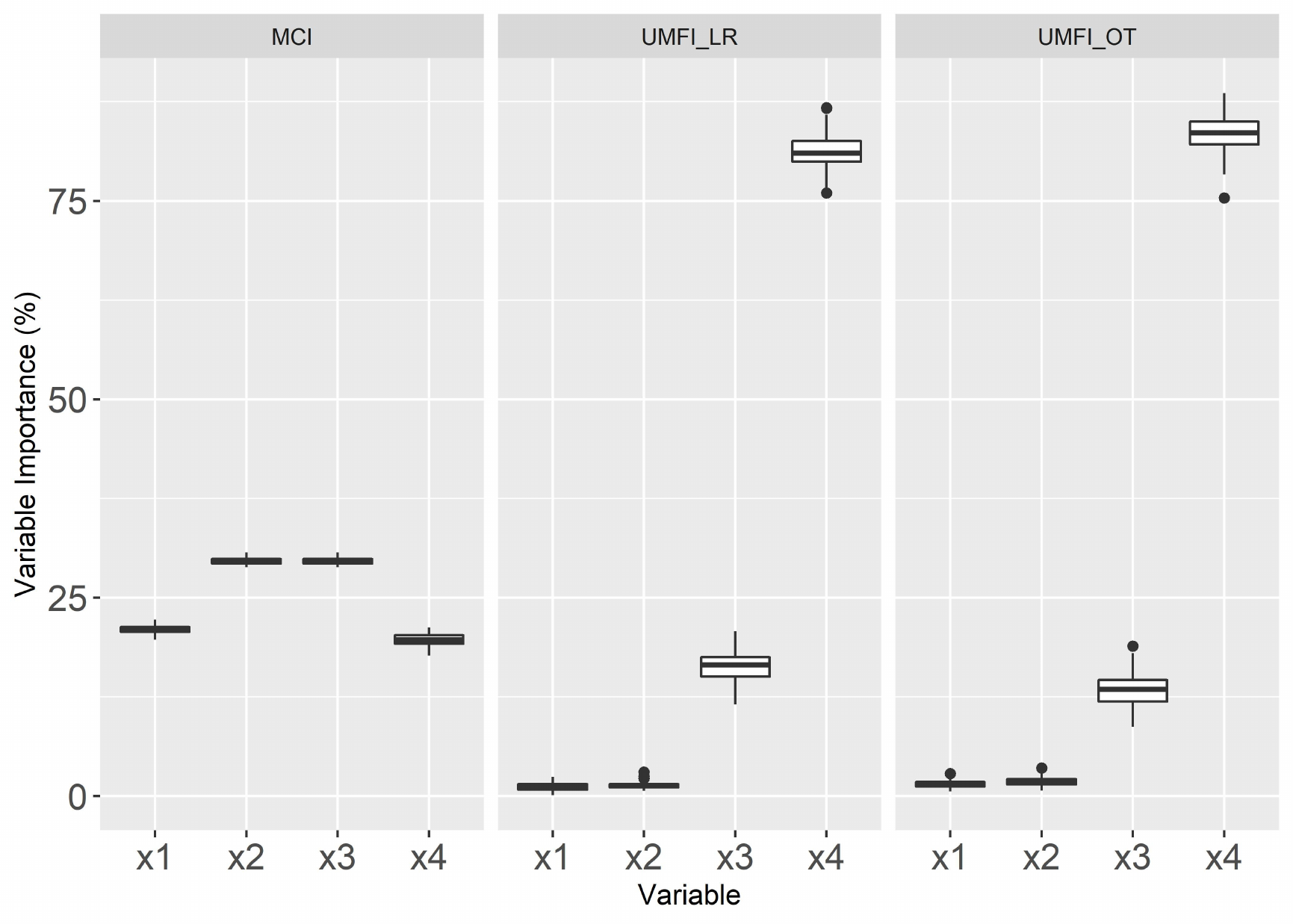}
  \caption{Blood relation}
  \label{fig:Blood}
\end{subfigure}
\newline
\caption{Results for the experiments on simulated data from Subsection \ref{sec:SimulatedData}. Feature importance scores are shown as a percentage of the total for each of $x_1$ to $x_4$ from $100$ replications. Results are shown for marginal contribution feature importance (MCI), ultra-marginal feature importance with linear regression (UMFI\_LR), and ultra-marginal feature importance with pairwise optimal transport (UMFI\_OT).}
\label{fig:test}
\end{figure}

\subsubsection{Correlation}
\label{sec:Correlations}

Feature importance methods that seek to explain data should not change the measured importance of features in the presence of redundant variables according to the IRI axiom. To test this, we implement a simulation study similar to the ones found in \citet{catav}. Let $\epsilon \sim \mathcal{N}(0,0.01)$. We consider:
\begin{align*}
    & x_1,x_2,x_4 \sim \mathcal{N}(0,1), \text{ } x_3=x_1+\epsilon \\
    & y=x_1+x_2.
\end{align*}
The addition of $x_3$, which is approximately a duplicate of $x_1$, should not alter the importance of $x_1$, which should remain equally as important as $x_2$ since they have the same influence on the response $y$. The results shown in Figure \ref{fig:cor} show that both MCI and UMFI work reasonably well. As with the previous simulation experiment, the variability is consistent across methods. As was desired, UMFI with linear regression shows approximately equal relative importance scores for $x_1$ and $x_2$. The importance given to $x_2$ was slightly greater than $x_1$ according to MCI and UMFI with optimal transport. Interestingly, MCI assigns some importance to $x_4$, which was independent of the response, while both UMFI methods assign importance scores close to zero. Because of this, we conclude that UMFI with linear regression performs the best in this simulated scenario.

\subsubsection{Blood Relation}
\label{sec:Blood_relation_exp}

To ensure that UMFI is suitable for scientific inference, and that it could be used to learn part of the structure of causal graphs in theory as well as in practice, we implement the blood relation simulation experiment. In this study, data is generated from the causal graph in Figure \ref{fig:causalGraph_bloodrel}, which was inspired by the collider causal graph found in \citet{harel2022inherent}. The feature $S$ is unobserved, thus $x_3$ and $x_4$ are the only observed features that are blood related to the response $y$. According to the blood relation axiom, $x_3$ and $x_4$ should be given high and positive importance while $x_1$ and $x_2$ should receive zero importance. In Section \ref{sec:UMFI} and Supplement \ref{sec:UMFI_appendix}, we prove that in ideal scenarios, UMFI will satisfy the blood relation axiom. We hypothesize that we can extend this to real-world scenarios where non-Gaussian features and interaction information appear. To test this, we consider:
\begin{align*}
    & \delta \sim \mathcal{U}(-1,1),\ \gamma \sim Exp(1), \ \epsilon \sim \mathcal{U}(-0.5,0.5) \\
    & x_1,S \sim \mathcal{N}(0,1), \ x_2 = 3x_1 + \delta,\ x_3 = x_2+S \\
    & y = S+ \epsilon, \ x_4 = y + \gamma.
\end{align*}

\begin{figure}[h!]
\begin{center}
\begin{tikzpicture}[node distance={15mm}, thick, main/.style = {draw, circle}] 
\node[circle, draw=black, fill=red!40] (1) {$S$}; 
\node[main] (2) [left of=1] {$X_1$}; 
\node[main] (3) [below of=2] {$X_2$}; 
\node[circle, draw=black, fill=red!40] (4) [below of=3] {$X_3$}; 
\node[main, draw=black, fill=blue!40] (5) [below right of=1] {$Y$}; 
\node[circle, draw=black, fill=red!40] (6) [below of=5] {$X_4$}; 
\draw[->] (1) -- (4); 
\draw[->] (1) -- (5); 
\draw[->] (2) -- (3); 
\draw[->] (3) -- (4); 
\draw[->] (5) -- (6); 
\end{tikzpicture} 
\end{center}
\caption{The full causal graph generating the data for the blood relation simulation experiment in Section \ref{sec:Blood_relation_exp}. Blood related vertices to the response $Y$ (blue) are coloured in red. $S$ and $X_4$ are directly causally related to $Y$, whereas $X_3$ is related to $Y$ via the common ancestor $S$.}
\label{fig:causalGraph_bloodrel}
\end{figure}
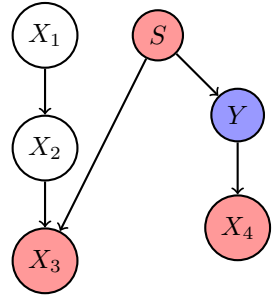

The results shown in Figure \ref{fig:Blood} indicate that MCI fails to distinguish the blood related features since more than half of the total importance is shared between $x_1, x_2 \not \in BR(Y)$, while $x_3,x_4 \in BR(Y)$ together received less than half of the total importance. In contrast, UMFI\_LR and UMFI\_OT detect that $x_1$ and $x_2$ should have approximately zero importance while giving most of the importance to $x_4$ and the rest of the relative importance to $x_3$.

\subsection{BRCA Experiments}
\label{sec:BRCA}
We use the same breast cancer (BRCA) classification dataset \citep{tomczak2015cancer} used in previous feature importance studies including \citet{catav} and \citet{covert2020understanding} to test the quality and robustness of UMFI on real data. The original data contains over $17,000$ genes and $571$ anonymous patients with one of four types of breast cancer. We consider the same subset of $50$ genes as in \citet{catav} and \citet{covert2020understanding} for easier computation and result visualization. Of the $50$ selected genes, $10$ are known to be associated with breast cancer, while the other $40$ genes are randomly sampled. This data was downloaded from \href{https://github.com/TAU-MLwell/Marginal-Contribution-Feature-Importance/tree/main/BRCA_dataset}{the MCI GitHub page}. In \citet{catav} and \citet{covert2020understanding}, these $40$ randomly sampled genes are assumed to be unassociated with breast cancer. However, to ensure a more definitive ground truth, we also randomly permute the values of these $40$ genes across their respective $571$ observations to further reduce the chance that these genes have any association with breast cancer. 

Quality is then measured with the true positive and true negative rates: the $10$ BRCA associated genes should have some non-zero importance (positive), and the other $40$ genes should have exactly zero importance (negative). These experiments were run $200$ times on different seeds and with a different random sample of $500$ patients for each iteration. Robustness is measured using the standardized interquartile range (SIQR) from the repeated experiments, which is calculated by dividing the average IQR across the $50$ features by the average median. This experiment is too computationally intensive for MCI to be calculated exactly, so we implement MCI assuming soft 2-size submodularity (see the supplement of \citet{catav} for details).

\begin{figure}[h!]
  \centering
  \includegraphics[width=0.48\textwidth]{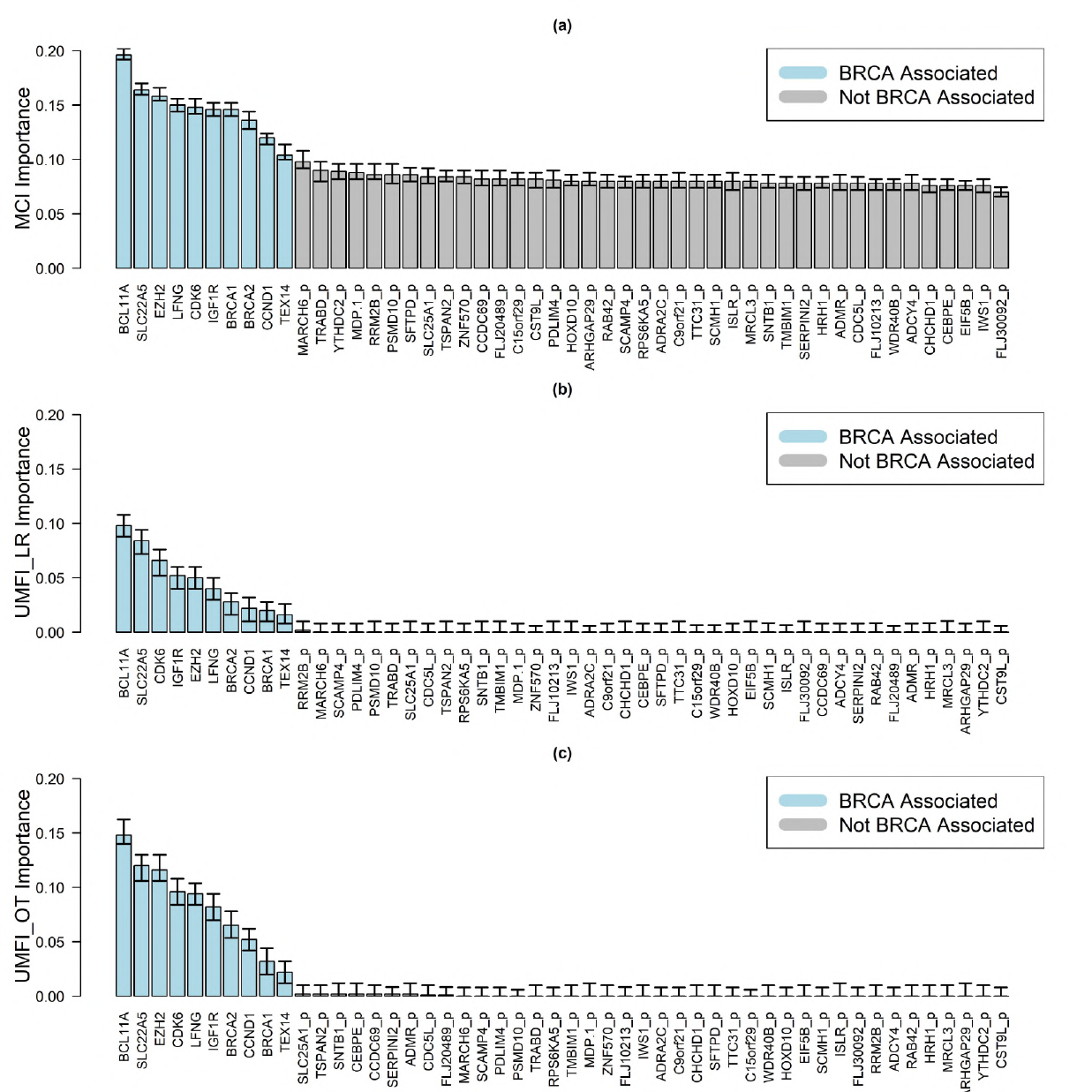}
  \caption{Median feature importance scores provided by (a) MCI, (b) UMFI with linear regression, and (c) UMFI with pairwise optimal transport, for each gene in the BRCA dataset after 200 iterations. Genes colored in blue are known to be associated with breast cancer while genes colored in grey are random permutations of randomly selected genes, which we assume to be unassociated with breast cancer. The first and third quantiles of the scores are visualized as error bars for each gene.}
  \label{fig:BRCA}
\end{figure}

We found that MCI and UMFI (UMFI\_LR and UMFI\_OT) correctly gave significant importance to the $10$ genes that are known to be associated with breast cancer (Figure \ref{fig:BRCA}). Interestingly, the order of important features was similar across methods, with BCL11A and SLC22A5 always ranking first, and TEX14 always being the least important of the $10$ BRCA-associated genes. However, MCI consistently gives non-zero median importance to all features, while UMFI correctly gives zero median importance to the majority of the randomized genes. Furthermore, UMFI's performance in this experiment improves with increased iterations. After running the experiment $5000$ times, both UMFI methods achieve perfect accuracy when distinguishing between important and permuted features (Supplement \ref{subsec:BRCA5000}). Although UMFI scores have higher variability than MCI (Table \ref{tab:table}), Figure \ref{fig:BRCA} shows that UMFI provides a more accurate and interpretable scoring, since it better separates the $10$ associated genes from the $40$ unassociated genes.

\begin{table}[h!]
 \caption{The standardized interquartile range (SIQR), true positive rate (TPR), true negative rate (TNR), overall accuracy (OA), and the number of features for which feature importance can be calculated within 60 minutes are displayed after running the methods on the BRCA data.}
  \centering
  \begin{tabular}{llllllll}             \\
    \textbf{Method} & \textbf{SIQR} & \textbf{TPR} & \textbf{TNR} & \textbf{OA} & \textbf{@1hr}\\
    \hline
    MCI (k=2) & \textbf{6.6 \%}  & \textbf{1} & 0 & 0.20 & 130  \\
    UMFI (LR) & 41.9\%  & \textbf{1} & \textbf{0.975} & \textbf{0.98} &  \textbf{4010}     \\
    UMFI (OT) & 28.5\%  & \textbf{1} & 0.775 & 0.82 & 3000 \\
    \hline \\
  \end{tabular}
  \label{tab:table}
\end{table}

\subsection{Computational Complexity}
\label{sec:time}

MCI must train and evaluate a model for each element of the power set of the feature set, which implies $O(2^p)$ model trainings if there are $p$ features. If the evaluation function $\nu$ obeys soft $k$-size submodularity, then the maximizing subset has no more than $k$ elements, which reduces the number of model trainings to $O(p^{k+1})$ \citep{catav}. UMFI circumvents the exponential training time since it can be evaluated after $O(p)$ model trainings, though the extra step of dependency removal is required. To confirm the above statements, and to show that the extra model trainings required for MCI dominate the computation time for removing dependencies in UMFI, we ran a simple timed experiment. For a range of dataset sizes from the BRCA data, we evaluate the computation time for calculating the feature importance scores of all features using MCI and UMFI. We ran this experiment for a dataset with $5$ features, and then slowly added features until our given time budget of $1$ hour ran out. Once all $50$ BRCA features were used, more features were randomly generated. All datasets had 571 observations. These experiments were run using an Intel Core i9-9980HK CPU 2.40GHz with 32GB of RAM. Code was parallelized in R, and $12$ of the $16$ available threads were used. 
\begin{figure}[h!]
\centering
{\includegraphics[width=0.48\textwidth]{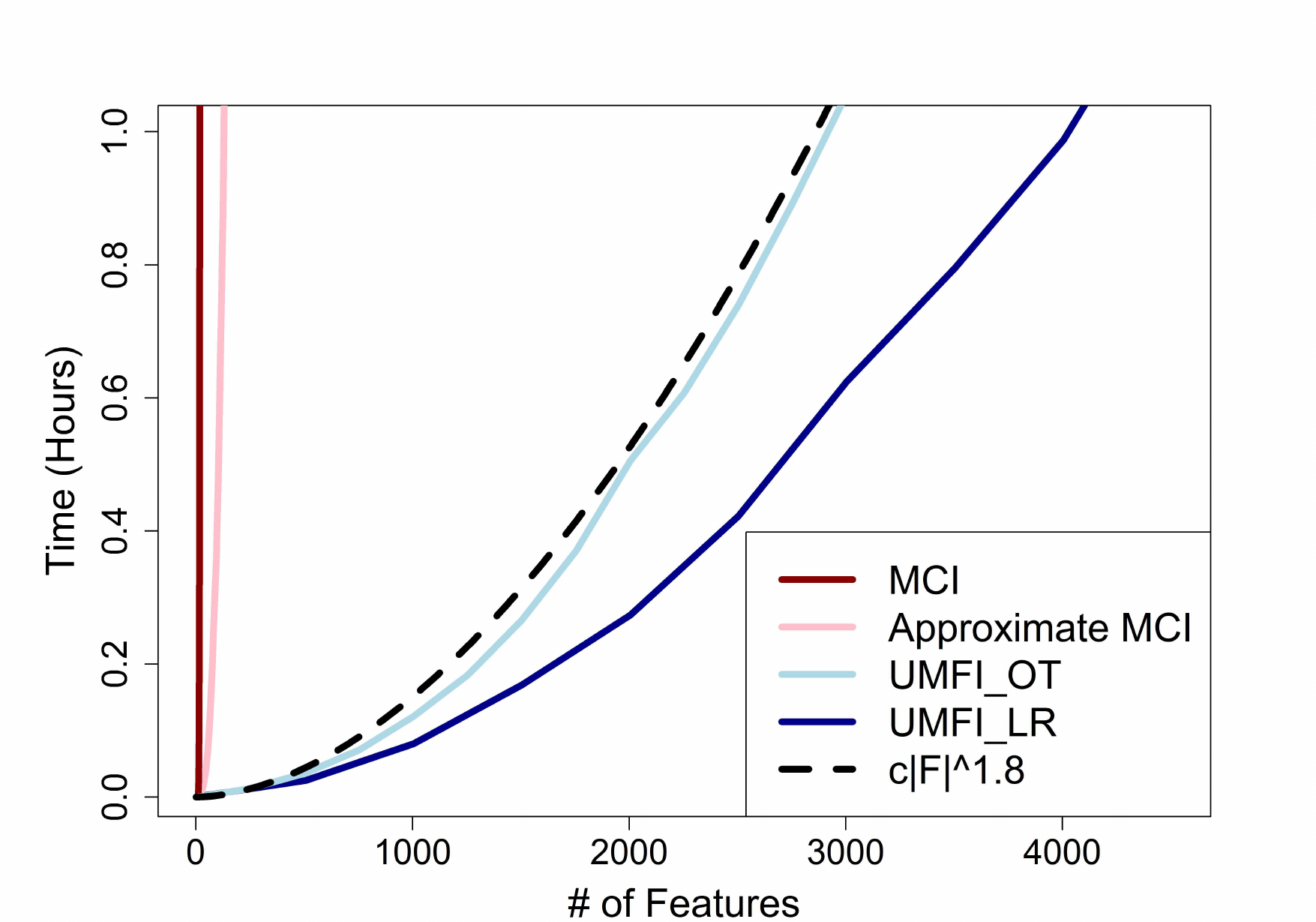}
 { \caption{Computation time for a single iteration of each method including: MCI (dark red), MCI with the soft 2-size-submodularity assumption (pink), UMFI\_OT (light blue), and UMFI\_LR (dark blue), plotted against the number of processed features.}
  \label{fig:time}}
}
\end{figure}

From Figure \ref{fig:time}, we can observe that UMFI is approximately super-linear, with UMFI\_OT incurring more computational cost compared to UMFI\_LR. Giving each method one hour to run, MCI processed 19 features, MCI with the soft 2-size submodularity assumption processed 130 features, UMFI\_OT processed about 3000 features, and UMFI\_LR processed about 4000 features (Table \ref{tab:table}).

\section{CONCLUSION}
\label{Conclusion}
In this study, we introduced three ideal axioms that feature importance measures should satisfy if they claim to be useful for learning from data. We then introduced ultra-marginal feature importance (UMFI), a new method that uses preprocessing techniques, originally developed in the domain of AI fairness, to provide fast and accurate feature importance scores for the purpose of explaining data. We proved that UMFI satisfies all three of the introduced axioms under certain assumptions. When compared with MCI, experimental results showed that UMFI, implemented with linear regression and optimal transport as preprocessing techniques, was able to provide more accurate estimates of feature importance on real and simulated data, particularly in the presence of correlated interactions and unrelated features. Supplement \ref{sec:FurtherEXP_feature_imp} shows that UMFI also compares favourably against other baseline methods including permutation importance, conditional permutation importance, and ablation.

Throughout the work on this paper, several shortcomings appeared. First, we only considered two simple methods for removing dependencies, linear regression and pairwise optimal transport. Other methods certainly exist in the literature, including optimal transport with chaining \citep{johndrow2019algorithm}, neural networks \citep{calmon2017optimized,song2019learning}, or principal inertial components \citep{wang2019privacy}. Though our two methods performed fairly well on the real and simulated datasets in Section \ref{sec:experiments}, optimal transport and linear regression failed to find representations of the data that were independent of the protected attribute when we tested the methods on a hydrology dataset with more shared information compared to BRCA \citep{addor2017camels} (Supplement \ref{sec:hydrology_exp}). However, neural nets, principal inertial components, or implementing optimal transport with better estimates of the conditional CDF certainly could have given better results. Although dependencies were not removed optimally for the hydrology dataset, the importance scores were still reasonably accurate. 
Second, UMFI scores are less robust than MCI since they have higher variability, however, because of the significantly lower computational cost, UMFI can be run multiple times and averaged to increase robustness and compute confidence intervals. Third, it is not clear how closely random forests or other measures of multivariate dependence can approximate the universal predictive power $\nu$ and mutual information in practice. In the same vein, it is unclear if multivariate measures of dependence usually satisfy properties such as monotonicity, redundancy invariance, and supermodularity, which we used UMFI's adherence to our proposed axioms. Finally, though UMFI can work for any arbitrary feature type, in this paper, we have only considered datasets with continuous explanatory variables. 


In future work, we would like to test how well other methods, such as neural networks, pair with UMFI while further testing on a wider variety of variable types (i.e., binary, categorical, and ordinal). We also believe that exploring the effectiveness of dependency removal techniques is worthwhile in its own right. We would also like to test how well UMFI scales to large datasets. In these settings, UMFI would benefit from a fast algorithm for computing confidence intervals and p-values to counteract its variability. Comparing different measures of multivariate dependence in their ability to approximate universal predictive power and satisfy common assumptions such as monotonicity would also be of interest to the broader feature importance community. Currently, UMFI is limited to providing a single measure of global variable importance. Extending our methods such that UMFI can be decomposed into different orders of interaction effects or redundant information in a similar way to functional ANOVA \citep{lengerich2020purifying,martens2020neural,hooker2007generalized,huang1998projection,stone1994use}, commonality analysis \citep{amado1999partitioning,daube2019quantitatively,seibold1979commonality,stoffel2021partr2,ray2014using}, partial information decomposition \citep{rosas2020reconciling,varley2023flickering,gutknecht2021bits,mediano2022greater,kolchinsky2022novel,suzuki2022decomposing,goodwell2022source,goodwell2020debates,gurushankar2022extracting,wollstadt2021rigorous}, or functional decomposition \citep{bordt2022shapley,hiabu2022unifying} could provide interesting future research directions. 

To reiterate, UMFI is a powerful tool for detecting and explaining the relationships hidden within observational data. We emphasise that UMFI is just a framework. A variety of other methods can be used to estimate the universal predictive power $\nu$ including, but not limited to, XGBoost, neural networks, QMD, or FOCI \citep{griessenberger2022multivariate,azadkia2021simple}. Furthermore, new preprocessing techniques for dependence removal are still being developed in the AI fairness community, so these, in addition to other existing methods, can be used in future applications of UMFI for additional improvements. We hope that UMFI will be a useful tool for learning from data in a variety of disciplines including bioinfomatics, earth sciences, psychology, and health science.

\subsubsection*{Acknowledgements}
We thank Dr. Ali A. Ameli for his support and motivation during this process. We thank all of the authors of the papers we cited, especially Dr. Chandra Nair for his help with concepts in information theory and Boyang Fu for his experimental advice.

\bibliography{references}

\begin{thebibliography}{}

\bibitem[Addor et~al., 2018]{addor2018ranking}
Addor, N., Nearing, G., Prieto, C., Newman, A., Le~Vine, N., and Clark, M.~P.
  (2018).
\newblock A ranking of hydrological signatures based on their predictability in
  space.
\newblock {\em Water Resources Research}, 54(11):8792--8812.

\bibitem[Addor et~al., 2017]{addor2017camels}
Addor, N., Newman, A.~J., Mizukami, N., and Clark, M.~P. (2017).
\newblock The camels data set: catchment attributes and meteorology for
  large-sample studies.
\newblock {\em Hydrology and Earth System Sciences}, 21(10):5293--5313.

\bibitem[Al-Ani et~al., 2003]{al2003new}
Al-Ani, A., Deriche, M., and Chebil, J. (2003).
\newblock A new mutual information based measure for feature selection.
\newblock {\em Intelligent Data Analysis}, 7(1):43--57.

\bibitem[Amado, 1999]{amado1999partitioning}
Amado, A.~J. (1999).
\newblock Partitioning predicted variance into constituent parts: A primer on
  regression commonality analysis.

\bibitem[Apley and Zhu, 2020]{apley2020visualizing}
Apley, D.~W. and Zhu, J. (2020).
\newblock Visualizing the effects of predictor variables in black box
  supervised learning models.
\newblock {\em Journal of the Royal Statistical Society: Series B (Statistical
  Methodology)}, 82(4):1059--1086.

\bibitem[Azadkia and Chatterjee, 2021]{azadkia2021simple}
Azadkia, M. and Chatterjee, S. (2021).
\newblock A simple measure of conditional dependence.
\newblock {\em The Annals of Statistics}, 49(6):3070--3102.

\bibitem[Bach et~al., 2018]{bach2018sensitive}
Bach, E., Radi{\'c}, V., and Schoof, C. (2018).
\newblock How sensitive are mountain glaciers to climate change? insights from
  a block model.
\newblock {\em Journal of Glaciology}, 64(244):247--258.

\bibitem[Battiti, 1994]{battiti1994using}
Battiti, R. (1994).
\newblock Using mutual information for selecting features in supervised neural
  net learning.
\newblock {\em IEEE Transactions on neural networks}, 5(4):537--550.

\bibitem[Bazaga et~al., 2020]{bazaga2020genome}
Bazaga, A., Leggate, D., and Weisser, H. (2020).
\newblock Genome-wide investigation of gene-cancer associations for the
  prediction of novel therapeutic targets in oncology.
\newblock {\em Scientific reports}, 10(1):1--10.

\bibitem[B{\'e}nard et~al., 2022]{benard2022shaff}
B{\'e}nard, C., Biau, G., Da~Veiga, S., and Scornet, E. (2022).
\newblock Shaff: Fast and consistent shapley effect estimates via random
  forests.
\newblock In {\em International Conference on Artificial Intelligence and
  Statistics}, pages 5563--5582. PMLR.

\bibitem[Bennasar et~al., 2015]{bennasar2015feature}
Bennasar, M., Hicks, Y., and Setchi, R. (2015).
\newblock Feature selection using joint mutual information maximisation.
\newblock {\em Expert Systems with Applications}, 42(22):8520--8532.

\bibitem[Bi, 2012]{bi2012review}
Bi, J. (2012).
\newblock A review of statistical methods for determination of relative
  importance of correlated predictors and identification of drivers of consumer
  liking.
\newblock {\em Journal of Sensory Studies}, 27(2):87--101.

\bibitem[Bird et~al., 2020]{bird2020fairlearn}
Bird, S., Dud{\'\i}k, M., Edgar, R., Horn, B., Lutz, R., Milan, V., Sameki, M.,
  Wallach, H., and Walker, K. (2020).
\newblock Fairlearn: A toolkit for assessing and improving fairness in ai.
\newblock {\em Microsoft, Tech. Rep. MSR-TR-2020-32}.

\bibitem[Bordt and von Luxburg, 2022]{bordt2022shapley}
Bordt, S. and von Luxburg, U. (2022).
\newblock From shapley values to generalized additive models and back.
\newblock {\em arXiv preprint arXiv:2209.04012}.

\bibitem[Breiman, 2001]{Breiman2001}
Breiman, L. (2001).
\newblock Random forests.
\newblock {\em Machine learning}, 45(1):5--32.

\bibitem[Breiman et~al., 2017]{breiman2017classification}
Breiman, L., Friedman, J.~H., Olshen, R.~A., and Stone, C.~J. (2017).
\newblock {\em Classification and regression trees}.
\newblock Routledge.

\bibitem[Brenning and Az{\'o}car, 2010]{brenning2010statistical}
Brenning, A. and Az{\'o}car, G. (2010).
\newblock Statistical analysis of topographic and climatic controls and
  multispectral signatures of rock glaciers in the dry andes, chile (27--33 s).
\newblock {\em Permafrost and Periglacial Processes}, 21(1):54--66.

\bibitem[Calmon et~al., 2017]{calmon2017optimized}
Calmon, F., Wei, D., Vinzamuri, B., Natesan~Ramamurthy, K., and Varshney, K.~R.
  (2017).
\newblock Optimized pre-processing for discrimination prevention.
\newblock {\em Advances in neural information processing systems}, 30.

\bibitem[Catav et~al., 2020]{catav2020marginal}
Catav, A., Fu, B., Ernst, J., Sankararaman, S., and Gilad-Bachrach, R. (2020).
\newblock Marginal contribution feature importance--an axiomatic approach for
  the natural case.
\newblock {\em arXiv preprint arXiv:2010.07910}.

\bibitem[Catav et~al., 2021]{catav}
Catav, A., Fu, B., Zoabi, Y., Meilik, A. L.~W., Shomron, N., Ernst, J.,
  Sankararaman, S., and Gilad-Bachrach, R. (2021).
\newblock Marginal contribution feature importance - an axiomatic approach for
  explaining data.
\newblock In Meila, M. and Zhang, T., editors, {\em Proceedings of the 38th
  International Conference on Machine Learning}, volume 139 of {\em Proceedings
  of Machine Learning Research}, pages 1324--1335. PMLR.

\bibitem[Chen et~al., 2020]{chen2020true}
Chen, H., Janizek, J.~D., Lundberg, S., and Lee, S.-I. (2020).
\newblock True to the model or true to the data?
\newblock {\em arXiv preprint arXiv:2006.16234}.

\bibitem[Chen et~al., 2015]{chen2015xgboost}
Chen, T., He, T., Benesty, M., Khotilovich, V., Tang, Y., Cho, H., Chen, K.,
  et~al. (2015).
\newblock Xgboost: extreme gradient boosting.
\newblock {\em R package version 0.4-2}, 1(4):1--4.

\bibitem[Cohen et~al., 2007]{cohen2007feature}
Cohen, S., Dror, G., and Ruppin, E. (2007).
\newblock Feature selection via coalitional game theory.
\newblock {\em Neural Computation}, 19(7):1939--1961.

\bibitem[Consortium et~al., 2007]{wellcome2007genome}
Consortium, W. T. C.~C. et~al. (2007).
\newblock Genome-wide association study of 14,000 cases of seven common
  diseases and 3,000 shared controls.
\newblock {\em Nature}, 447(7145):661.

\bibitem[Cover and Thomas, 2006]{Cover2006}
Cover, T.~M. and Thomas, J.~A. (2006).
\newblock {\em Elements of Information Theory 2nd Edition (Wiley Series in
  Telecommunications and Signal Processing)}.
\newblock Wiley-Interscience.

\bibitem[Covert et~al., 2020]{covert2020understanding}
Covert, I., Lundberg, S.~M., and Lee, S.-I. (2020).
\newblock Understanding global feature contributions with additive importance
  measures.
\newblock {\em Advances in Neural Information Processing Systems},
  33:17212--17223.

\bibitem[Covert et~al., 2021]{covert2021explaining}
Covert, I., Lundberg, S.~M., and Lee, S.-I. (2021).
\newblock Explaining by removing: A unified framework for model explanation.
\newblock {\em J. Mach. Learn. Res.}, 22:209--1.

\bibitem[Darlington, 1968]{darlington1968multiple}
Darlington, R.~B. (1968).
\newblock Multiple regression in psychological research and practice.
\newblock {\em Psychological bulletin}, 69(3):161.

\bibitem[Daube et~al., 2019]{daube2019quantitatively}
Daube, C., Giordano, B., Schyns, P.~G., and Ince, R.~A. (2019).
\newblock Quantitatively comparing predictive models with the partial
  information decomposition.

\bibitem[Dawid, 1979]{dawid1979conditional}
Dawid, A.~P. (1979).
\newblock Conditional independence in statistical theory.
\newblock {\em Journal of the Royal Statistical Society: Series B
  (Methodological)}, 41(1):1--15.

\bibitem[Debeer et~al., 2021]{debeer2021package}
Debeer, D., Hothorn, T., Strobl, C., and Debeer, M.~D. (2021).
\newblock Package ‘permimp’.

\bibitem[Debeer and Strobl, 2020]{debeer2020conditional}
Debeer, D. and Strobl, C. (2020).
\newblock Conditional permutation importance revisited.
\newblock {\em BMC bioinformatics}, 21(1):1--30.

\bibitem[DelSole and Tippett, 2007]{delsole2007predictability}
DelSole, T. and Tippett, M.~K. (2007).
\newblock Predictability: Recent insights from information theory.
\newblock {\em Reviews of Geophysics}, 45(4).

\bibitem[Dutta et~al., 2020]{dutta2020information}
Dutta, S., Venkatesh, P., Mardziel, P., Datta, A., and Grover, P. (2020).
\newblock An information-theoretic quantification of discrimination with exempt
  features.
\newblock In {\em Proceedings of the AAAI Conference on Artificial
  Intelligence}, volume~34, pages 3825--3833.

\bibitem[Dutta et~al., 2021]{dutta2021fairness}
Dutta, S., Venkatesh, P., Mardziel, P., Datta, A., and Grover, P. (2021).
\newblock Fairness under feature exemptions: Counterfactual and observational
  measures.
\newblock {\em IEEE Transactions on Information Theory}, 67(10):6675--6710.

\bibitem[Easton et~al., 2007]{easton2007genome}
Easton, D.~F., Pooley, K.~A., Dunning, A.~M., Pharoah, P.~D., Thompson, D.,
  Ballinger, D.~G., Struewing, J.~P., Morrison, J., Field, H., Luben, R.,
  et~al. (2007).
\newblock Genome-wide association study identifies novel breast cancer
  susceptibility loci.
\newblock {\em Nature}, 447(7148):1087--1093.

\bibitem[Edwards et~al., 2021]{edwards2021projected}
Edwards, T.~L., Nowicki, S., Marzeion, B., Hock, R., Goelzer, H., Seroussi, H.,
  Jourdain, N.~C., Slater, D.~A., Turner, F.~E., Smith, C.~J., et~al. (2021).
\newblock Projected land ice contributions to twenty-first-century sea level
  rise.
\newblock {\em Nature}, 593(7857):74--82.

\bibitem[Fan and Lv, 2008]{fan2008sure}
Fan, J. and Lv, J. (2008).
\newblock Sure independence screening for ultrahigh dimensional feature space.
\newblock {\em Journal of the Royal Statistical Society: Series B (Statistical
  Methodology)}, 70(5):849--911.

\bibitem[Freiesleben et~al., 2022]{freiesleben2022scientific}
Freiesleben, T., K{\"o}nig, G., Molnar, C., and Tejero-Cantero, A. (2022).
\newblock Scientific inference with interpretable machine learning: Analyzing
  models to learn about real-world phenomena.
\newblock {\em arXiv preprint arXiv:2206.05487}.

\bibitem[Galton, 1889]{galton1889co}
Galton, F. (1889).
\newblock I. co-relations and their measurement, chiefly from anthropometric
  data.
\newblock {\em Proceedings of the Royal Society of London},
  45(273-279):135--145.

\bibitem[Geurts et~al., 2006]{geurts2006extremely}
Geurts, P., Ernst, D., and Wehenkel, L. (2006).
\newblock Extremely randomized trees.
\newblock {\em Machine learning}, 63(1):3--42.

\bibitem[Gibson, 1962]{gibson1962orthogonal}
Gibson, W. (1962).
\newblock Orthogonal predictors: A possible resolution of the hoffman-ward
  controversy.
\newblock {\em Psychological reports}, 11(1):32--34.

\bibitem[Gill et~al., 2017]{gill2017capacity}
Gill, D.~A., Mascia, M.~B., Ahmadia, G.~N., Glew, L., Lester, S.~E., Barnes,
  M., Craigie, I., Darling, E.~S., Free, C.~M., Geldmann, J., et~al. (2017).
\newblock Capacity shortfalls hinder the performance of marine protected areas
  globally.
\newblock {\em Nature}, 543(7647):665--669.

\bibitem[Gitiaux and Rangwala, 2021a]{gitiaux2021fair}
Gitiaux, X. and Rangwala, H. (2021a).
\newblock Fair representations by compression.
\newblock In {\em Proceedings of the AAAI Conference on Artificial
  Intelligence}, volume~35, pages 11506--11515.

\bibitem[Gitiaux and Rangwala, 2021b]{gitiaux2021learning}
Gitiaux, X. and Rangwala, H. (2021b).
\newblock Learning smooth and fair representations.
\newblock In {\em International conference on artificial intelligence and
  statistics}, pages 253--261. PMLR.

\bibitem[Gitiaux and Rangwala, 2022]{gitiaux2022sofair}
Gitiaux, X. and Rangwala, H. (2022).
\newblock Sofair: Single shot fair representation learning.
\newblock {\em arXiv preprint arXiv:2204.12556}.

\bibitem[Gong et~al., 2013]{gong2013estimating}
Gong, W., Gupta, H.~V., Yang, D., Sricharan, K., and Hero~III, A.~O. (2013).
\newblock Estimating epistemic and aleatory uncertainties during hydrologic
  modeling: An information theoretic approach.
\newblock {\em Water resources research}, 49(4):2253--2273.

\bibitem[Goodwell and Bassiouni, 2022]{goodwell2022source}
Goodwell, A.~E. and Bassiouni, M. (2022).
\newblock Source relationships and model structures determine information flow
  paths in ecohydrologic models.
\newblock {\em Water Resources Research}, 58(9):e2021WR031164.

\bibitem[Goodwell et~al., 2020]{goodwell2020debates}
Goodwell, A.~E., Jiang, P., Ruddell, B.~L., and Kumar, P. (2020).
\newblock Debates—does information theory provide a new paradigm for earth
  science? causality, interaction, and feedback.
\newblock {\em Water Resources Research}, 56(2):e2019WR024940.

\bibitem[Greenland et~al., 1999]{greenland1999causal}
Greenland, S., Pearl, J., and Robins, J.~M. (1999).
\newblock Causal diagrams for epidemiologic research.
\newblock {\em Epidemiology}, pages 37--48.

\bibitem[Griessenberger et~al., 2022]{griessenberger2022multivariate}
Griessenberger, F., Junker, R.~R., and Trutschnig, W. (2022).
\newblock On a multivariate copula-based dependence measure and its estimation.
\newblock {\em Electronic Journal of Statistics}, 16(1):2206--2251.

\bibitem[Griffith and Koch, 2014]{griffith2014quantifying}
Griffith, V. and Koch, C. (2014).
\newblock Quantifying synergistic mutual information.
\newblock {\em Guided self-organization: inception}, pages 159--190.

\bibitem[Gr{\"o}mping, 2009]{gromping2009variable}
Gr{\"o}mping, U. (2009).
\newblock Variable importance assessment in regression: linear regression
  versus random forest.
\newblock {\em The American Statistician}, 63(4):308--319.

\bibitem[Gurushankar et~al., 2022]{gurushankar2022extracting}
Gurushankar, K., Venkatesh, P., and Grover, P. (2022).
\newblock Extracting unique information through markov relations.
\newblock In {\em 2022 58th Annual Allerton Conference on Communication,
  Control, and Computing (Allerton)}, pages 1--6. IEEE.

\bibitem[Gutknecht et~al., 2021]{gutknecht2021bits}
Gutknecht, A.~J., Wibral, M., and Makkeh, A. (2021).
\newblock Bits and pieces: Understanding information decomposition from
  part-whole relationships and formal logic.
\newblock {\em Proceedings of the Royal Society A}, 477(2251):20210110.

\bibitem[Harder et~al., 2013]{harder2013bivariate}
Harder, M., Salge, C., and Polani, D. (2013).
\newblock Bivariate measure of redundant information.
\newblock {\em Physical Review E}, 87(1):012130.

\bibitem[Harel et~al., 2022]{harel2022inherent}
Harel, N., Gilad-Bachrach, R., and Obolski, U. (2022).
\newblock Inherent inconsistencies of feature importance.
\newblock {\em arXiv preprint arXiv:2206.08204}.

\bibitem[Hiabu et~al., 2022]{hiabu2022unifying}
Hiabu, M., Meyer, J.~T., and Wright, M.~N. (2022).
\newblock Unifying local and global model explanations by functional
  decomposition of low dimensional structures.
\newblock {\em arXiv preprint arXiv:2208.06151}.

\bibitem[Hooker, 2007]{hooker2007generalized}
Hooker, G. (2007).
\newblock Generalized functional anova diagnostics for high-dimensional
  functions of dependent variables.
\newblock {\em Journal of Computational and Graphical Statistics},
  16(3):709--732.

\bibitem[Hooker et~al., 2021]{hooker2021unrestricted}
Hooker, G., Mentch, L., and Zhou, S. (2021).
\newblock Unrestricted permutation forces extrapolation: variable importance
  requires at least one more model, or there is no free variable importance.
\newblock {\em Statistics and Computing}, 31(6):1--16.

\bibitem[Huang, 1998]{huang1998projection}
Huang, J.~Z. (1998).
\newblock Projection estimation in multiple regression with application to
  functional anova models.
\newblock {\em The annals of statistics}, 26(1):242--272.

\bibitem[Jakulin and Bratko, 2003]{jakulin2003quantifying}
Jakulin, A. and Bratko, I. (2003).
\newblock Quantifying and visualizing attribute interactions: An approach based
  on entropy.

\bibitem[Janssen et~al., 2022]{janssen2022application}
Janssen, A., Hoogendoorn, M., Cnossen, M.~H., Math{\^o}t, R.~A., Group,
  O.-C.~S., Consortium, S., Cnossen, M., Reitsma, S., Leebeek, F., Math{\^o}t,
  R., Fijnvandraat, K., et~al. (2022).
\newblock Application of shap values for inferring the optimal functional form
  of covariates in pharmacokinetic modeling.
\newblock {\em CPT: Pharmacometrics \& Systems Pharmacology}.

\bibitem[Janssen and Ameli, 2021]{janssen2021hydrologic}
Janssen, J. and Ameli, A.~A. (2021).
\newblock A hydrologic functional approach for improving large-sample hydrology
  performance in poorly gauged regions.
\newblock {\em Water Resources Research}, 57(9):e2021WR030263.

\bibitem[Jehn et~al., 2020]{jehn2020using}
Jehn, F.~U., Bestian, K., Breuer, L., Kraft, P., and Houska, T. (2020).
\newblock Using hydrological and climatic catchment clusters to explore drivers
  of catchment behavior.
\newblock {\em Hydrology and Earth System Sciences}, 24(3):1081--1100.

\bibitem[Johndrow and Lum, 2019]{johndrow2019algorithm}
Johndrow, J.~E. and Lum, K. (2019).
\newblock An algorithm for removing sensitive information: application to
  race-independent recidivism prediction.
\newblock {\em The Annals of Applied Statistics}, 13(1):189--220.

\bibitem[Johnsen et~al., 2021]{johnsen2021new}
Johnsen, P.~V., Riemer-S{\o}rensen, S., DeWan, A.~T., Cahill, M.~E., and
  Langaas, M. (2021).
\newblock A new method for exploring gene--gene and gene--environment
  interactions in gwas with tree ensemble methods and shap values.
\newblock {\em BMC bioinformatics}, 22(1):1--29.

\bibitem[Kang and Tian, 2009]{kang2009markov}
Kang, C. and Tian, J. (2009).
\newblock Markov properties for linear causal models with correlated errors.
\newblock {\em Journal of Machine Learning Research}, 10(1).

\bibitem[Kinney and Atwal, 2014]{kinney2014equitability}
Kinney, J.~B. and Atwal, G.~S. (2014).
\newblock Equitability, mutual information, and the maximal information
  coefficient.
\newblock {\em Proceedings of the National Academy of Sciences},
  111(9):3354--3359.

\bibitem[Kolchinsky, 2022]{kolchinsky2022novel}
Kolchinsky, A. (2022).
\newblock A novel approach to the partial information decomposition.
\newblock {\em Entropy}, 24(3):403.

\bibitem[K{\"o}nig et~al., 2021]{konig2021decomposition}
K{\"o}nig, G., Freiesleben, T., Bischl, B., Casalicchio, G., and
  Grosse-Wentrup, M. (2021).
\newblock Decomposition of global feature importance into direct and
  associative components (dedact).
\newblock {\em arXiv preprint arXiv:2106.08086}.

\bibitem[Kraskov et~al., 2004]{kraskov2004estimating}
Kraskov, A., St{\"o}gbauer, H., and Grassberger, P. (2004).
\newblock Estimating mutual information.
\newblock {\em Physical review E}, 69(6):066138.

\bibitem[Kruskal, 1984]{kruskal1984concepts}
Kruskal, W. (1984).
\newblock Concepts of relative importance.
\newblock {\em Q{\"u}estii{\'o}. 1984, vol. 8, n{\'u}m. 1}.

\bibitem[Lau et~al., 2022]{lau2022mutual}
Lau, K., Nair, C., and Ng, D. (2022).
\newblock A mutual information inequality and some applications.
\newblock In {\em 2022 IEEE International Symposium on Information Theory
  (ISIT)}, pages 951--956. IEEE.

\bibitem[Lauritzen and Sadeghi, 2018]{lauritzen2018unifying}
Lauritzen, S. and Sadeghi, K. (2018).
\newblock Unifying markov properties for graphical models.
\newblock {\em The Annals of Statistics}, 46(5):2251--2278.

\bibitem[Le et~al., 2022]{le2022snow}
Le, E., Ameli, A., Janssen, J., and Hammond, J. (2022).
\newblock Snow persistence explains stream high flow and low flow signatures
  with differing relationships by aridity and climatic seasonality.
\newblock {\em Hydrology and Earth System Sciences Discussions}, pages 1--22.

\bibitem[Lengerich et~al., 2020]{lengerich2020purifying}
Lengerich, B., Tan, S., Chang, C.-H., Hooker, G., and Caruana, R. (2020).
\newblock Purifying interaction effects with the functional anova: An efficient
  algorithm for recovering identifiable additive models.
\newblock In {\em International Conference on Artificial Intelligence and
  Statistics}, pages 2402--2412. PMLR.

\bibitem[Li and Ameli, 2022]{li2022statistical}
Li, H. and Ameli, A. (2022).
\newblock A statistical approach for identifying factors governing streamflow
  recession behaviour.
\newblock {\em Hydrological Processes}, 36(10):e14718.

\bibitem[Liaw et~al., 2002]{liaw2002classification}
Liaw, A., Wiener, M., et~al. (2002).
\newblock Classification and regression by randomforest.
\newblock {\em R news}, 2(3):18--22.

\bibitem[Louppe et~al., 2013]{louppe2013understanding}
Louppe, G., Wehenkel, L., Sutera, A., and Geurts, P. (2013).
\newblock Understanding variable importances in forests of randomized trees.
\newblock {\em Advances in neural information processing systems}, 26.

\bibitem[Lundberg and Lee, 2017]{lundberg2017unified}
Lundberg, S.~M. and Lee, S.-I. (2017).
\newblock A unified approach to interpreting model predictions.
\newblock {\em Advances in neural information processing systems}, 30.

\bibitem[Malouche and Sevestre-Ghalila, 2008]{malouche2008estimating}
Malouche, D. and Sevestre-Ghalila, S. (2008).
\newblock Estimating high dimensional faithful gaussian graphical models by
  low-order conditioning.
\newblock In {\em Proceeding, of 26th IASTED International Multi-Conference on
  Applied Informatics, Artificial Intelligence and Applications}, pages
  595--025.

\bibitem[M{\"a}rtens and Yau, 2020]{martens2020neural}
M{\"a}rtens, K. and Yau, C. (2020).
\newblock Neural decomposition: Functional anova with variational autoencoders.
\newblock In {\em International Conference on Artificial Intelligence and
  Statistics}, pages 2917--2927. PMLR.

\bibitem[Marx et~al., 2021]{marx2021weaker}
Marx, A., Gretton, A., and Mooij, J.~M. (2021).
\newblock A weaker faithfulness assumption based on triple interactions.
\newblock In {\em Uncertainty in Artificial Intelligence}, pages 451--460.
  PMLR.

\bibitem[Marx et~al., 2022]{marx2022but}
Marx, C., Park, Y., Hasson, H., Wang, Y.~B., Ermon, S., and Huan, J. (2022).
\newblock But are you sure? an uncertainty-aware perspective on explainable ai.

\bibitem[Matthijs et~al., 2019]{scikitFairness}
Matthijs, Warmerdam, V., and ManyOthers (2019).
\newblock scikit-fairness.
\newblock \url{scikit-fairness. https://github.com/koaning/scikit-fairness}.

\bibitem[Mediano et~al., 2022]{mediano2022greater}
Mediano, P.~A., Rosas, F.~E., Luppi, A.~I., Jensen, H.~J., Seth, A.~K.,
  Barrett, A.~B., Carhart-Harris, R.~L., and Bor, D. (2022).
\newblock Greater than the parts: a review of the information decomposition
  approach to causal emergence.
\newblock {\em Philosophical Transactions of the Royal Society A},
  380(2227):20210246.

\bibitem[Molnar, 2020]{molnar2020interpretable}
Molnar, C. (2020).
\newblock {\em Interpretable machine learning}.
\newblock Lulu. com.

\bibitem[Molnar et~al., 2021]{molnar2021relating}
Molnar, C., Freiesleben, T., K{\"o}nig, G., Casalicchio, G., Wright, M.~N., and
  Bischl, B. (2021).
\newblock Relating the partial dependence plot and permutation feature
  importance to the data generating process.
\newblock {\em arXiv preprint arXiv:2109.01433}.

\bibitem[Moyer et~al., 2018]{moyer2018invariant}
Moyer, D., Gao, S., Brekelmans, R., Galstyan, A., and Ver~Steeg, G. (2018).
\newblock Invariant representations without adversarial training.
\newblock {\em Advances in Neural Information Processing Systems}, 31.

\bibitem[Orlenko and Moore, 2021]{orlenko2021comparison}
Orlenko, A. and Moore, J.~H. (2021).
\newblock A comparison of methods for interpreting random forest models of
  genetic association in the presence of non-additive interactions.
\newblock {\em BioData mining}, 14(1):1--17.

\bibitem[Probst et~al., 2019]{probst2019tunability}
Probst, P., Boulesteix, A.-L., and Bischl, B. (2019).
\newblock Tunability: importance of hyperparameters of machine learning
  algorithms.
\newblock {\em The Journal of Machine Learning Research}, 20(1):1934--1965.

\bibitem[Ray-Mukherjee et~al., 2014]{ray2014using}
Ray-Mukherjee, J., Nimon, K., Mukherjee, S., Morris, D.~W., Slotow, R., and
  Hamer, M. (2014).
\newblock Using commonality analysis in multiple regressions: a tool to
  decompose regression effects in the face of multicollinearity.
\newblock {\em Methods in Ecology and Evolution}, 5(4):320--328.

\bibitem[Reisach et~al., 2021]{reisach2021beware}
Reisach, A., Seiler, C., and Weichwald, S. (2021).
\newblock Beware of the simulated dag! causal discovery benchmarks may be easy
  to game.
\newblock {\em Advances in Neural Information Processing Systems},
  34:27772--27784.

\bibitem[Rosas et~al., 2020]{rosas2020reconciling}
Rosas, F.~E., Mediano, P.~A., Jensen, H.~J., Seth, A.~K., Barrett, A.~B.,
  Carhart-Harris, R.~L., and Bor, D. (2020).
\newblock Reconciling emergences: An information-theoretic approach to identify
  causal emergence in multivariate data.
\newblock {\em PLoS computational biology}, 16(12):e1008289.

\bibitem[Schellhas et~al., 2020]{schellhas2020distance}
Schellhas, D., Neupane, B., Thammineni, D., Kanumuri, B., and Green, R.~C.
  (2020).
\newblock Distance correlation sure independence screening for accelerated
  feature selection in parkinson’s disease vocal data.
\newblock In {\em 2020 International Conference on Computational Science and
  Computational Intelligence (CSCI)}, pages 1433--1438. IEEE.

\bibitem[Schmidt et~al., 2020]{schmidt2020challenges}
Schmidt, L., He{\ss}e, F., Attinger, S., and Kumar, R. (2020).
\newblock Challenges in applying machine learning models for hydrological
  inference: A case study for flooding events across germany.
\newblock {\em Water Resources Research}, 56(5):e2019WR025924.

\bibitem[Seibold and McPHEE, 1979]{seibold1979commonality}
Seibold, D.~R. and McPHEE, R.~D. (1979).
\newblock Commonality analysis: A method for decomposing explained variance in
  multiple regression analyses.
\newblock {\em Human Communication Research}, 5(4):355--365.

\bibitem[Sevestre and Benn, 2015]{sevestre2015climatic}
Sevestre, H. and Benn, D.~I. (2015).
\newblock Climatic and geometric controls on the global distribution of
  surge-type glaciers: implications for a unifying model of surging.
\newblock {\em Journal of Glaciology}, 61(228):646--662.

\bibitem[Shapley, 1953]{shapley1953value}
Shapley, L.~S. (1953).
\newblock A value for n-person games, contributions to the theory of games, 2,
  307--317.

\bibitem[Shpitser and Pearl, 2008]{shpitser2008complete}
Shpitser, I. and Pearl, J. (2008).
\newblock Complete identification methods for the causal hierarchy.
\newblock {\em Journal of Machine Learning Research}, 9:1941--1979.

\bibitem[Soleymani et~al., 2022]{soleymani2022causal}
Soleymani, A., Raj, A., Bauer, S., Sch{\"o}lkopf, B., and Besserve, M. (2022).
\newblock Causal feature selection via orthogonal search.
\newblock {\em Transactions on Machine Learning Research}.

\bibitem[Song et~al., 2019]{song2019learning}
Song, J., Kalluri, P., Grover, A., Zhao, S., and Ermon, S. (2019).
\newblock Learning controllable fair representations.
\newblock In {\em The 22nd International Conference on Artificial Intelligence
  and Statistics}, pages 2164--2173. PMLR.

\bibitem[Spearman, 1961]{spearman1961general}
Spearman, C. (1961).
\newblock " general intelligence" objectively determined and measured.

\bibitem[Stein et~al., 2021]{stein2021climate}
Stein, L., Clark, M.~P., Knoben, W.~J., Pianosi, F., and Woods, R.~A. (2021).
\newblock How do climate and catchment attributes influence flood generating
  processes? a large-sample study for 671 catchments across the contiguous usa.
\newblock {\em Water Resources Research}, 57(4):e2020WR028300.

\bibitem[Steudel and Ay, 2015]{steudel2015information}
Steudel, B. and Ay, N. (2015).
\newblock Information-theoretic inference of common ancestors.
\newblock {\em Entropy}, 17(4):2304--2327.

\bibitem[Stoffel et~al., 2021]{stoffel2021partr2}
Stoffel, M.~A., Nakagawa, S., and Schielzeth, H. (2021).
\newblock partr2: partitioning r2 in generalized linear mixed models.
\newblock {\em PeerJ}, 9:e11414.

\bibitem[Stone, 1994]{stone1994use}
Stone, C.~J. (1994).
\newblock The use of polynomial splines and their tensor products in
  multivariate function estimation.
\newblock {\em The annals of statistics}, 22(1):118--171.

\bibitem[Sun et~al., 2021]{sun2021revisiting}
Sun, S., Dong, B., and Zou, Q. (2021).
\newblock Revisiting genome-wide association studies from statistical modelling
  to machine learning.
\newblock {\em Briefings in Bioinformatics}, 22(4):bbaa263.

\bibitem[Sutera et~al., 2021]{sutera2021global}
Sutera, A., Louppe, G., Huynh-Thu, V.~A., Wehenkel, L., and Geurts, P. (2021).
\newblock From global to local mdi variable importances for random forests and
  when they are shapley values.
\newblock {\em Advances in Neural Information Processing Systems}, 34.

\bibitem[Suzuki et~al., 2022]{suzuki2022decomposing}
Suzuki, K., Matsuzaki, S.-i.~S., and Masuya, H. (2022).
\newblock Decomposing predictability to identify dominant causal drivers in
  complex ecosystems.
\newblock {\em Proceedings of the National Academy of Sciences},
  119(42):e2204405119.

\bibitem[Tan et~al., 2020]{tan2020learning}
Tan, Z., Yeom, S., Fredrikson, M., and Talwalkar, A. (2020).
\newblock Learning fair representations for kernel models.
\newblock In {\em International Conference on Artificial Intelligence and
  Statistics}, pages 155--166. PMLR.

\bibitem[Taufiq et~al., 2023]{taufiq2023manifold}
Taufiq, M.~F., Bl{\"o}baum, P., and Minorics, L. (2023).
\newblock Manifold restricted interventional shapley values.
\newblock {\em arXiv preprint arXiv:2301.04041}.

\bibitem[Tolo{\c{s}}i and Lengauer, 2011]{tolocsi2011classification}
Tolo{\c{s}}i, L. and Lengauer, T. (2011).
\newblock Classification with correlated features: unreliability of feature
  ranking and solutions.
\newblock {\em Bioinformatics}, 27(14):1986--1994.

\bibitem[Tomczak et~al., 2015]{tomczak2015cancer}
Tomczak, K., Czerwi{\'n}ska, P., and Wiznerowicz, M. (2015).
\newblock The cancer genome atlas (tcga): an immeasurable source of knowledge.
\newblock {\em Contemporary oncology}, 19(1A):A68.

\bibitem[Varley, 2023]{varley2023flickering}
Varley, T.~F. (2023).
\newblock Flickering emergences: The question of locality in
  information-theoretic approaches to emergence.
\newblock {\em Entropy}, 25(1):54.

\bibitem[Vowels et~al., 2021]{vowels2021d}
Vowels, M.~J., Camgoz, N.~C., and Bowden, R. (2021).
\newblock D’ya like dags? a survey on structure learning and causal
  discovery.
\newblock {\em ACM Computing Surveys (CSUR)}.

\bibitem[Wang et~al., 2021]{wang2021genome}
Wang, H., Bennett, D.~A., De~Jager, P.~L., Zhang, Q.-Y., and Zhang, H.-Y.
  (2021).
\newblock Genome-wide epistasis analysis for alzheimer’s disease and
  implications for genetic risk prediction.
\newblock {\em Alzheimer's research \& therapy}, 13(1):1--13.

\bibitem[Wang and Calmon, 2017]{wang2017estimation}
Wang, H. and Calmon, F.~P. (2017).
\newblock An estimation-theoretic view of privacy.
\newblock In {\em 2017 55th Annual Allerton Conference on Communication,
  Control, and Computing (Allerton)}, pages 886--893. IEEE.

\bibitem[Wang et~al., 2019]{wang2019privacy}
Wang, H., Vo, L., Calmon, F.~P., M{\'e}dard, M., Duffy, K.~R., and Varia, M.
  (2019).
\newblock Privacy with estimation guarantees.
\newblock {\em IEEE Transactions on Information Theory}, 65(12):8025--8042.

\bibitem[Williams and Beer, 2010]{williams2010nonnegative}
Williams, P.~L. and Beer, R.~D. (2010).
\newblock Nonnegative decomposition of multivariate information.
\newblock {\em arXiv preprint arXiv:1004.2515}.

\bibitem[Williams et~al., 2018]{williams2018directed}
Williams, T.~C., Bach, C.~C., Matthiesen, N.~B., Henriksen, T.~B., and
  Gagliardi, L. (2018).
\newblock Directed acyclic graphs: a tool for causal studies in paediatrics.
\newblock {\em Pediatric research}, 84(4):487--493.

\bibitem[Williamson and Feng, 2020]{williamson2020efficient}
Williamson, B. and Feng, J. (2020).
\newblock Efficient nonparametric statistical inference on population feature
  importance using shapley values.
\newblock In {\em International Conference on Machine Learning}, pages
  10282--10291. PMLR.

\bibitem[Wollstadt et~al., 2021]{wollstadt2021rigorous}
Wollstadt, P., Schmitt, S., and Wibral, M. (2021).
\newblock A rigorous information-theoretic definition of redundancy and
  relevancy in feature selection based on (partial) information decomposition.
\newblock {\em arXiv preprint arXiv:2105.04187}.

\bibitem[Wright and Ziegler, 2015]{wright2015ranger}
Wright, M.~N. and Ziegler, A. (2015).
\newblock ranger: A fast implementation of random forests for high dimensional
  data in c++ and r.
\newblock {\em arXiv preprint arXiv:1508.04409}.

\bibitem[Wright et~al., 2016]{wright2016little}
Wright, M.~N., Ziegler, A., and K{\"o}nig, I.~R. (2016).
\newblock Do little interactions get lost in dark random forests?
\newblock {\em BMC bioinformatics}, 17:1--10.

\bibitem[Wright, 1921]{wright1921correlation}
Wright, S. (1921).
\newblock Correlation and causation.

\bibitem[Wurm and Fisicaro, 2014]{wurm2014residualizing}
Wurm, L.~H. and Fisicaro, S.~A. (2014).
\newblock What residualizing predictors in regression analyses does (and what
  it does not do).
\newblock {\em Journal of memory and language}, 72:37--48.

\bibitem[Yang and Ong, 2012]{yang2012effective}
Yang, J.-B. and Ong, C.-J. (2012).
\newblock An effective feature selection method via mutual information
  estimation.
\newblock {\em IEEE Transactions on Systems, Man, and Cybernetics, Part B
  (Cybernetics)}, 42(6):1550--1559.

\bibitem[Yeung, 2002]{yeung2002first}
Yeung, R.~W. (2002).
\newblock {\em A first course in information theory}.
\newblock Springer Science \& Business Media.

\bibitem[Yu, 2013]{yu2013stability}
Yu, B. (2013).
\newblock Stability.
\newblock {\em Bernoulli}, 19(4):1484--1500.

\bibitem[Yu et~al., 2018]{yu2018mining}
Yu, K., Liu, L., Li, J., and Chen, H. (2018).
\newblock Mining markov blankets without causal sufficiency.
\newblock {\em IEEE transactions on neural networks and learning systems},
  29(12):6333--6347.

\end{thebibliography}

\appendix
\onecolumn

\section{Mutual information}

\subsection{Properties of mutual information}
\label{Properties_of_mutual_information}

\begin{theorem}[Symmetry of conditional mutual information \citep{yeung2002first}]
$$I(Y;X|Z) = I(X;Y|Z)$$
\label{symmetry_mutual_info}
\end{theorem}

\begin{theorem}[Chain rule for mutual information \citep{yeung2002first}]
$$I(Y;X,Z)= I(Y;Z) + I(Y;X|Z) = I(Y;X) + I(Y;Z|X)$$
\label{chain_rule_mutual_info}
\end{theorem}

\begin{theorem}[Supermodularity under independence]
Let $S,X_1, X_2$ be random variables such that $X_1 \indep (S,X_2)$. Then, $I(Y; S, X_1, X_2) - I(Y; S,X_2) \ge I(Y;S,X_1)-I(Y;S)$ \citep{lau2022mutual,steudel2015information}.
\label{supermodularity_theorem}
\end{theorem}

\begin{proof}
\begin{align*}
    & I(Y; S, X_1, X_2) - I(Y; S,X_2) \\
    &= I(Y;S,X_2) + I(Y;X_1|S,X_2) - I(Y; S,X_2) \quad\text{(by chain rule)} \\
    & =I(Y;X_1|S,X_2) = I(X_1;Y|S,X_2) \quad\text{(by symmetry)}\\
    & =I(X_1; Y, S, X_2) - I(X_1;S,X_2) \quad\text{(by chain rule)} \\
    &= I(X_1; Y,S,X_2)= I(Y,S,X_2;X_1) \quad\text{(by $X_1 \indep (S,X_2)$ and symmetry)}\\
    &\ge I(Y,S;X_1) = I(X_1;Y,S)  \quad\text{(by monotonicity of mutual information and symmetry)} \\
    &= I(X_1;Y|S) = I(Y;X_1|S) \quad\text{(by chain rule, $X_1 \indep S$, and symmetry)}\\
    &= I(Y;S,X_1)-I(Y;S) \quad\text{(by chain rule)}
\end{align*}
\end{proof}

\begin{theorem}[Data processing inequality]
Let $X,Y,Z$ be three random variables forming a Markov chain $X \to Y \to Z$, i.e. $X \indep Z | Y$. Then, $I(X;Y) \ge I(X;Z)$.
\label{data_processing}
\end{theorem}
\begin{proof}
The proof can be found in \citet[p. 32]{Cover2006}.
\end{proof}

\begin{theorem}
Let $F$ be a set of features used to predict the response $Y$. Then $I(Y; F) \ge I(Y;g(F))$ for any deterministic function $g$. If $g$ is injective, then $I(Y; F) = I(Y;g(F))$. 
\label{mut_info_bounds}
\end{theorem}
\begin{proof}
The first claim $I(Y; F) \ge I(Y;g(F))$ follows from the data processing inequality (Theorem \ref{data_processing}) since $Y \to F \to g(F)$ forms a Markov chain.


If $g$ is injective, then we may write $F=h(g(F))$ where $h: Im(g) \to F$ is the inverse of $g$ restricted to the image of $g$. By the data processing inequality and the fact that $Y \indep h(g(F)) | g(F)$ we know that $I(Y;g(F)) \geq I(Y;h(g(F)))$. Then by the definition of $h$ and $g$, we know that $I(Y;h(g(F)))=I(Y;F)$, thus $I(Y;g(F)) \geq I(Y;F)$. Combining with $I(Y; F) \ge I(Y;g(F))$ yields the desired claim, $I(Y;g(F)) = I(Y;F)$ when $g$ is injective. 
\end{proof}

\subsection{Mutual information and feature importance}
\label{sec: mutual_info_feature_importance}
Let $F=\{x_1, ..., x_p\}$ be a set of features used to predict $Y$. As shown in \citet{griffith2014quantifying}, the mutual information $I(Y;F)=I(Y; X_1, ..., X_p)$ can be visualized using a partial information (PI) diagram \citep{williams2010nonnegative}. We may interpret the mutual information shared between $Y$ and $F$ as a collection of non-negative pieces of information, whose sum forms $I(Y;F)$. Each of these pieces of information can be classified as unique, redundant, or synergistic (Figure \ref{fig:PI_diagram}). Unique information is the information about $Y$ that comes from only one feature and nowhere else. Redundant information is information about $Y$ that comes from a single feature, but which can also be found elsewhere in $F$. Synergistic information is information about $Y$ that cannot be extracted from a single feature, but is available when multiple features are considered. See \citet{dutta2020information, dutta2021fairness} for interesting connections between PID and fairness.

We note that the distinction between feature importance methods that seek to explain data versus methods that seek to explain or optimize a model comes from their treatment of redundant information \citep{wollstadt2021rigorous}. Methods for explaining data, such as MCI or UMFI, aim to count all of the redundant information pertaining to $X_i$ in $I(Y;F)$ towards the feature importance of $x_i$. Indeed, even though this information can be found elsewhere by a model, redundant information still constitutes part of the information that $X_i$ shares about $Y$ in the data. Conversely, a method mainly made for feature selection, such as conditional permutation importance (CPI), aims to count none of the redundant information towards the evaluation of a feature's importance, since this information is already found in another feature. 

\begin{figure}[h!]
\centering
{\includegraphics[width=0.65\textwidth]{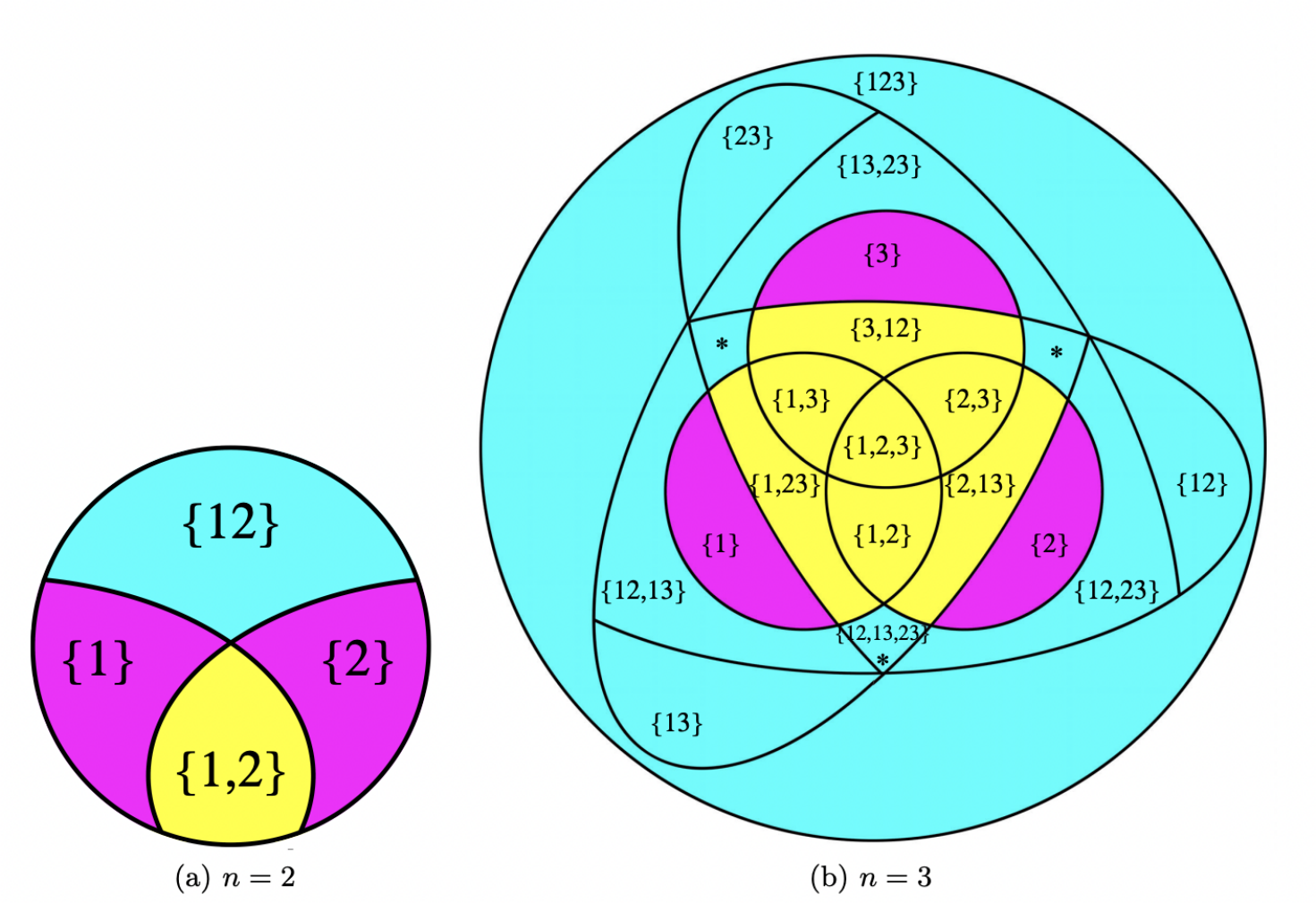}
 { \caption{PI-diagrams taken from \citet{griffith2014quantifying} for $I(Y;F)$ when $|F|=2$ (left) and $|F|=3$ (right). Magenta represents unique information, redundant information is colored with yellow, and synergistic information is in cyan. The starred regions represent a single region.}
  \label{fig:PI_diagram}}
}
\end{figure}

Mutual information itself is a common choice in the context of feature selection \citep{battiti1994using, al2003new, yang2012effective, bennasar2015feature}. However, due to the computational cost and the limited number of observations available for the calculation of the high-dimensional joint probability density function, it is not practical to compute $I(Y;S)$. For feature selection, users are only interested in the importance given to the top $k$ features. Therefore, 
mutual information-based feature selection methods typically bypass the computation of $I(Y;S)$ by instead studying the mutual information between the candidate feature and the response along with the mutual information between the candidate and the previously selected features \citep{bennasar2015feature, battiti1994using}. These methods are much less suitable for feature importance when the goal is to explain the data since interactions cannot be considered, which is why the most common approach is to train machine learning models to determine an approximation of the universal predictive power \citep{catav,chen2020true,covert2020understanding,williamson2020efficient}. 

Another connection between feature importance and mutual information comes from \citet{louppe2013understanding}, who showed that when extremely randomized trees' mean decrease in impurity (MDI) is used as a feature importance score, the MDI of a single feature converges to a weighted sum of conditional mutual information as the number of trees and the number of observations goes to infinity \citep{louppe2013understanding}. Also, the sum of the MDI scores across the feature set $F$ converges to $I(Y;F)$.

\subsection{Mutual information and machine learning evaluation functions}
\label{MI_ML_eval}

The evaluation function for a machine learning model $\nu_f(S)$ measures how well the response $Y$ can be predicted using the model $f$ given the information subset $S \in \mathcal{I}(F)$. Intuitively, the predictability or best possible accuracy $\nu_f(S)$ should ideally mirror or at least covary with the mutual information $I(Y;S)$ \citep{delsole2007predictability,gong2013estimating}. While entropy and mutual information are usually examined in discrete settings, the continuous entropy and mutual information are linearly related to the analogous discretized version \citep{gong2013estimating}. In the case of regression, one can also closely relate mutual information to the explained variance of a model. Indeed, with some assumptions, mutual information and $R^2$ accuracy are related. If we assume the response and predictions are jointly Gaussian and the predictions are unbiased \citep{Cover2006}, we can approximate the mutual information between $Y$ and $F$ as:  $$I(Y; F) \geq I(Y;g(F)) = I(Y;\hat{Y}) = -\frac{1}{2}\log[{1-\rho^2(Y,\hat{Y})}]=-\frac{1}{2}\log[{1-R^2}].$$ 


Machine learning evaluation functions and mutual information have been equated many times in the feature importance literature. \citet{covert2020understanding} demonstrated equivalence when the Bayes classifier is known and cross entropy loss is used. In a simple example, \citet{catav2020marginal} used mutual information directly as the evaluation function. The connection between machine learning evaluation functions and mutual information was further used by \citet{sutera2021global} to relate random forest feature importance with Shapely values.

\section{Additional information about marginal contribution feature importance (MCI)}
\label{sec: MCI}

Two of the methods that are compared with MCI in \citet{catav} include ablation and bivariate association. Ablation methods determine feature importance based on the difference in accuracy between the full model and the full model without the feature of interest, i.e. $A_{\nu}(x_i)=\nu(F) - \nu(F \setminus \{x_i\})$. Bivariate methods are among the most popular methods for genome-wide association studies \citep{wellcome2007genome,easton2007genome,sun2021revisiting}. In this method, the feature importance is given by the difference in the evaluation function of the model with just the feature of interest and the null model, i.e. $B_{\nu}(x_i)= \nu(x_i) - \nu(\emptyset)$. The three feature importance axioms proposed by \citet{catav} were partially motivated by the shortcomings of these two methods.

\begin{enumerate}
    \item \textbf{Marginal contribution}: Ablation methods may underestimate the importance of features when the correlation between features is high. In these scenarios, $\nu(F)$ may be approximately equal to $\nu(F \setminus \{x_i\})$ even in cases where $x_i$ is highly related to the response. Because of this, the importance of a feature $I_\nu(x_i)$ should be at least as large as the importance given by ablation methods: $I_\nu(x_i) \ge A_\nu(x_i)=\nu(F) - \nu(F \setminus \{x_i\})$ $\forall x_i \in F$.
    \item \textbf{Elimination}: Bivariate methods may underestimate the importance of features in cases where interactions exist between features. Many high-order interactions may be present in the data, so eliminating features from the feature set could prevent the detection of an important interaction. Thus, eliminating features from $F$ should only be able to decrease the feature importance of $x_i$.
    \item \textbf{Minimalism}: \citet{catav} decided to impose the minimalism axiom so that MCI can be unique. If $I_\nu(x_i)$ satisfies the first two axioms, then multiplying $I_\nu(x_i)$ by any constant $\lambda >1$ would not change this. The minimalism axiom helps disambiguate MCI from these trivial variations.
\end{enumerate}

We intentionally excluded some of the MCI axioms and properties included by \citet{catav} when proposing axioms for explaining data in Section \ref{sec:axioms}. Most importantly, the marginal contribution axiom is not included because it conflicts directly with the blood relation axiom. Indeed, ablation methods could give too much importance from a scientific inference perspective. For example, in the collider example presented by \citet{harel2022inherent}, they present the causal graph $Y \leftarrow S \rightarrow G \leftarrow E$, where $S$ is unmeasured. Let $F=\{E, G\}$ be used to predict $Y$. Then, the marginal contribution axiom requires that feature $E$ is given importance. Indeed, if we know $G$, then feature $E$ can help predict the response by denoising $G$ to recover information from the unobserved cause $S$. Thus, $I_\nu(E) \ge A_\nu(E)=\nu(\{E, G\})- \nu(\{G\})>0$. However, as stated in \citet{harel2022inherent}, feature $E$ has no relation to the response $Y$, since it can be thought of as a noise variable, so it would be more reasonable to give $E$ zero importance. We note that $E$ is given zero importance under the blood relation axiom, so the blood relation axiom is more reasonable and justified compared to the marginal contribution axiom. In contrast, $G$ inherently contains information about $Y$ via $S$, but this information is noised up by $E$. Therefore, although $E$ can be used to denoise $G$ and predict $Y$ better, only $G$ should be given importance when explaining the data when $F=\{E, G\}$, and indeed, $G$ is blood related to $Y$. We note that UMFI obeys the blood relation axiom under some assumptions, and hence does not obey the marginal contribution axiom. We additionally exclude the minimalism axiom since we do not prioritize uniqueness.

\section{Additional information about ultra-marginal feature importance (UMFI)}
\label{sec:UMFI_appendix}

\begin{theorem}[Existence of optimal preprocessing $\hat{S}^F_{x_i}$ when all features are jointly Gaussian]
\label{existence_optimal_preprocessing} Let $x_i \in F$ and suppose that all random variables in the random vector $F$ are joint normally distributed, then there exists a preprocessing $S^F_{x_i}$ that is optimal.
\end{theorem}

\begin{proof}
A preprocessing $S^F_{x_i}$ can be obtained via multiple linear regression (after mean centering) with the model:
$$
F \setminus \{x_i\} =  \beta x_i  +\epsilon,
$$
where $\epsilon = S^F_{x_i}$, $x_i$ is a feature in $F$, and $\beta$ is the column vector of size $p-1$ containing regression coefficients $\beta_1,\beta_2,...,\beta_{p-1}$ that minimize the sum of squared errors between $x_i$ and a linear function of each other variable in $F \setminus \{x_i\}$.

To show that $S^F_{x_i}$ is an optimal preprocessing (Definition \ref{def:remove_dep}), it suffices to show that $S^F_{x_i} \indep X_i$ and that $I(Y;F)=I(Y;S^F_{x_i},X_i)$, since $S^F_{x_i}$ is a function of $F$ by construction. 

From the normal equations and the definition of covariance, we know that $Cov(S^F_{x_i},X_i)=0$, as shown in the proof of Theorem \ref{Lin_optimal}. Since $S^F_{x_i} = F\setminus\{x_i\}-\beta x_i$, and all features in $F$ are joint normally distributed, it follows that $(S^F_{x_i},x_i)$ is joint normally distributed as well, since $(S^F_{x_i},x_i)$ can be obtained via the linear transformation $AF=(S^F_{x_i},x_i)$, where the main diagonal entries of $A$ are $1$, the other $|F|-1$ entries of the column corresponding to $x_i$ are given by the entries of $-\beta$, and all other entries are $0$. Without loss of generality, we may reorder the columns of the matrix such that the last column is attributed to feature $x_i$, and write
\begin{align*}
    A = \begin{bmatrix} 
    1 & 0 & \dots & \dots & -\beta_1 \\
    0 & 1 & 0 & \dots & -\beta_2\\
    \vdots & & \ddots\\
    0 & 0 & \dots & \dots & 1
\end{bmatrix} \hspace{1cm} A^{-1} = \begin{bmatrix} 
    1 & 0 & \dots & \dots & \beta_1 \\
    0 & 1 & 0 & \dots & \beta_2\\
    \vdots & & \ddots\\
    0 & 0 & \dots & \dots & 1
\end{bmatrix}.
\end{align*}
Hence, $Cov(X_i, S^F_{x_i})=0 \implies S^F_{x_i} \indep X_i$ from the properties of multivariate Gaussians.

To prove the second claim $I(Y;F)=I(Y;S^F_{x_i},X_i)$, by Theorem \ref{mut_info_bounds}, it suffices to show that the map $h(F)=(S^F_{x_i},x_i)=AF$ is injective. This is immediate from the fact that the matrix $A$, defined above, is invertible and thus bijective. 


\end{proof}

\begin{theorem}[Elimination axiom assuming optimal transport with chaining]
\label{theorem:elimination}
    Let $x_i \in F$, $x_{p+1} \not \in F$, and $\nu(S)$ is positively linearly related to $I(Y;S)$. When preprocessing is performed using optimal transport with chaining, $U^{F,Y}_{\nu}(x_i) \leq U^{F \cup \{x_{p+1}\},Y}_{\nu}(x_i)$.
\end{theorem}

\begin{proof}
Let $S^{F \cup \{x_{p+1}\}}_{x_i}$ be the preprocessed version of $F \cup \{x_{p+1}\}$ relative to $x_i$ and let $S^F_{x_i}$ be the preprocessed version of $F$ relative to $x_i$. By optimal transport with chaining \citep{johndrow2019algorithm}, we may assume that $S^{F \cup \{x_{p+1}\}}_{x_i}$ obeys the form $S^{F \cup \{x_{p+1}\}}_{x_i} = S^F_{x_i} \cup \tilde{x}$ and that $S^F_{x_i}, X_i, \tilde{X}$ are mutually independent. It follows from the supermodularity of mutual information under independence (Theorem \ref{supermodularity_theorem}) that

\begin{align*}
    &U^{F \cup \{x_{p+1}\},Y}_{\nu}(x_i)=aI(Y; S^{F \cup \{x_{p+1}\}}_{x_i}, X_i) +c - (aI(Y; S^{F \cup \{x_{p+1}\}}_{x_i})+c)\\
    &= aI(Y; S^F_{x_i}, \tilde{X}, X_i)+c - (aI(Y; S^F_{x_i}, \tilde{X})+c)\\
    &\ge aI(Y; S^F_{x_i}, X_i)+c - (aI(Y; S^F_{x_i})+c)= U^{F,Y}_{\nu}(x_i).
\end{align*}

\end{proof}

\begin{lemma}[Equivalence of optimal preprocessing equivalence classes under redundant information and duplicate features of interest]
 If $\hat{x} \in \mathcal{I}(F)$, then $[\hat{S}^F_{x_i}] \equiv [\hat{S}^{F \cup \{\hat{x}\}}_{x_i}]$. If $\hat{x} = h(x_j)$ and $h$ is bijective, then $[\hat{S}^{F \cup \{\hat{x}\}}_{x_j}] \equiv  [\hat{S}^{F \cup \{\hat{x}\}}_{\hat{x}}]$.
    \label{lemma:equivalence_class}
\end{lemma}

\begin{proof}
Recall that an optimal preprocessing given a feature set $F$ and a feature of interest $x_i$ is defined in Definition \ref{def:remove_dep}. To prove $[\hat{S}^F_{x_i}] \equiv [\hat{S}^{F \cup \{\hat{x}\}}_{x_i}]$, we prove that for any $x_i \in F$ 
all optimal preprocessings $\hat{S}^F_{x_i}$ are also optimal preprocessings in $[\hat{S}^{F \cup \{\hat{x}\}}_{x_i}]$, and that all optimal preprocessings $\hat{S}^{F \cup \{\hat{x}\}}_{x_i}$ are also optimal preprocessings in $[\hat{S}^F_{x_i}]$.


We first note that properties 1 and 2 in Definition \ref{def:remove_dep} are equivalent for $\hat{S}^F_{x_i}$ and $\hat{S}^{F \cup \{\hat{x}\}}_{x_i}$. For property 1, a function with repeated arguments can be defined to be equal to the same function without repeated arguments. For property 2, the feature of interest $x_i$ is consistent across both optimal preprocessings, so both preprocessings are independent of $X_i$. Lastly, since mutual information is invariant under duplicate information and since $\hat{S}^F_{x_i}$ and $\hat{S}^{F \cup \{\hat{x}\}}_{x_i}$ each satisfy their respective 3rd property, 
\begin{equation}
    I(Y;F,\hat{X})= I(Y;\hat{S}^{F \cup \{\hat{x}\}}_{x_i}, {X_i}) = I(Y;F) = I(Y;\hat{S}^F_{x_i}, {X_i}) .
    \label{interchange_info}
\end{equation}
Hence, the final property is also equivalent across both equivalence classes of optimal preprocessings, and we conclude $[\hat{S}^F_{x_i}] \equiv [\hat{S}^{F \cup \{\hat{x}\}}_{x_i}]$.

Similarly, to prove that $[\hat{S}^{F \cup \{\hat{x}\}}_{x_j}] \equiv [\hat{S}^{F \cup \{\hat{x}\}}_{\hat{x}}]$ if $\hat{x} = h(x_j)$ for some bijective function $h$, we note that the first property is equivalent for both optimal preprocessings, since they take place over the same feature set $F \cup \{\hat{x}\}$. Then, we note that $\hat{S}^{F \cup \{\hat{x}\}}_{x_j} \indep X_j \implies \hat{S}^{F \cup \{\hat{x}\}}_{x_j} \indep h(X_j) = \hat{X}$, and similarly, $\hat{S}^{F \cup \{\hat{x}\}}_{\hat{x}} \indep \hat{X} \implies \hat{S}^{F \cup \{\hat{x}\}}_{\hat{x}} \indep h^{-1}(\hat{X}) = X_j$. Finally, since mutual information is invariant under homeomorphic reparametrization of marginal variables \cite{kraskov2004estimating},
\begin{align*}
    &I(Y;F) = I(Y; \hat{S}^{F \cup \{\hat{x}\}}_{\hat{x}}, \hat{X}) = I(Y; \hat{S}^{F \cup \{\hat{x}\}}_{\hat{x}}, h^{-1}(\hat{X})) = I(Y; \hat{S}^{F \cup \{\hat{x}\}}_{\hat{x}}, X_j)\\
    &I(Y;F)=I(Y; \hat{S}^{F \cup \{\hat{x}\}}_{x_j}, X_j) = I(Y; \hat{S}^{F \cup \{\hat{x}\}}_{x_j}, h(X_j)) = I(Y; \hat{S}^{F \cup \{\hat{x}\}}_{x_j}, \hat{X})
\end{align*}
Since all 3 properties are equivalent for both optimal preprocessings, we conclude that $[\hat{S}^F_{x_i}] \equiv [\hat{S}^{F \cup \{\hat{x}\}}_{x_i}]$.
\end{proof}

We note that preprocessings $S^F_{x_i}$ and $S^{F \cup \{\hat{x}\}}_{x_i}$ may be interchangeable without being optimal, and that the interchangeability of these preprocessings is a sufficient condition for UMFI satisfying the redundant information invariance axiom, as long as $\nu(\cdot)= I(Y; \cdot)$. For example, interchangeability of preprocessings also holds when the removal of dependencies on a feature $x_i$ is done in a pairwise fashion (see Algorithm \ref{algo:S*_OT}), as well as when preprocessings is performed via optimal transport with chaining \citep{johndrow2019algorithm}. 

\begin{theorem}[Blood relation axiom assuming faithfulness]
\label{theorem:blood}
Let $x_i \in F$, $\nu(S)$ is positively linearly related to $I(Y;S)$, and suppose that the data is generated from a structural causal model $C$ with corresponding directed causal graph $G$ so that the entailed distribution is faithful to $G$. Assume also that there exists an structural causal model with graph $G'$ that contains the variables $X_i,\ S^F_{x_i}$, and $Y$, where the distribution of all its variables is faithful to the graph $G'$.
If the preprocessing $S^F_{x_i} \indep X_i$, then $U^{F,Y}_{\nu}(x_i)>0$ if and only if $X_i \in BR_G(Y)$.
\end{theorem}

\begin{proof}
As shown in the proof of Theorem \ref{theorem:umfi_master} in the main text, we may apply the definition of UMFI and $\nu$, the assumption $S^F_{x_i} \indep x_i$, and properties of mutual information and conditional independence to obtain

$$U^{F,Y}_{\nu}(x_i)=0 \iff aI(Y;X_i|S^F_{x_i}) + c - c=0 \iff X_i \indep Y | S^F_{x_i} \iff X_i \indep (Y,S^F_{x_i}) \implies X_i \indep Y$$

We note that the last implication comes from the contraction axiom \citep{dawid1979conditional}, and it can be strengthened to an equivalence as long as we can prove $X_i \indep (Y,S^F_{x_i})$ using the assumptions $X_i \indep Y$ and $X_i \indep S^F_{x_i}$. Since $X_i$, $Y$, and $S_{x_i}^F$ belong to an SCM with graph $G'$, and the entailed distribution of this SCM is faithful to $G'$, then $X_i \indep Y$ and $X_i \indep S^F_{x_i}$ imply that $X_i$ is $d$-separated from both $Y$ and $S^F_{x_i}$ by $\emptyset$. Therefore, $X_i\indep (Y,S^F_{x_i})$, and $U^{F,Y}_{\nu}(x_i)=0 \iff X_i \indep Y$.

Also, if the data is faithful to the causal graph $G$, then $X_i \indep Y$ is equivalent to $X_i \not \in BR_G(Y)$, which would conclude the proof of the blood relation axiom. We explicitly provide the details.

If $X_i \not \in BR_G(Y)$, then $X_i \indep Y$ follows from the global Markov property and the fact that $X_i$ and $Y$ are d-separated by the empty set. Indeed, every path from $X_i$ to $Y$ must have at least one collider. We consider two cases. (1) The edge coming out of $Y$ is outgoing. Then since $X_i$ is not a descendent of $Y$, the path must reverse its orientation at some vertex before meeting $X_i$. That vertex is a collider. (2) The edge connecting to $Y$ points towards $Y$. Then the path must reverse its orientation at some point since $X_i$ is not an ancestor of $Y$. The path must then reverse another time because otherwise, $X_i$ would share a common ancestor with $Y$ (the vertex of the first reversal). The vertex with the second reversal is a collider.

Conversely, let $X_i \in BR_G(Y)$. By the faithfulness assumption, it suffices to show that $X_i$ and $Y$ are d-connected by the empty set. Since $X_i \in BR_G(Y)$, there are two possible cases: either there is a directed path between $X_i$ and $Y$, or $X_i$ and $Y$ share a common ancestor. In the first case, we simply choose the directed path between $X_i$ and $Y$ and observe that there cannot be a collider. Similarly, in the second case, we may pick the path beginning at $Y$ and trace it up to the common ancestor and then travel to $X_i$. There can be no colliders along the path since every vertex has at least one outgoing edge by construction. Also, the empty set cannot contain any non-colliders.

\end{proof}






\begin{theorem}[Blood relation axiom in the absence of interactions]
Suppose that there is no synergistic information $I_{syn}(Y; S^F_{x_i}, X_i)$ about $Y$ between $X_i$ and $S^F_{x_i}$ for all $x_i \in F$, and that $S^F_{x_i} \indep X_i$. Then, if the graphical model obeys the global Markov property and faithfulness and $\nu(S)$ is positively linearly related to $I(Y;S)$, then $U_\nu^{F,Y}>0$ if and only if $X_i \in BR_G(Y)$.
\label{non_gaussian_blood_relation}
\end{theorem}
\begin{proof}
As in the proof of Theorem \ref{theorem:blood}, it suffices to show that $I(Y;X_i|S^F_{x_i})=0$ if and only if $X_i \not \in BR_G(Y)$. We may rewrite $I(Y;X_i|S^F_{x_i})=0$ as $I(Y; S^F_{x_i}, X_i)=I(Y;S^F_{x_i})$.

Though it is fairly controversial \citep{williams2010nonnegative,griffith2014quantifying}, some definitions of partial information decomposition imply that independent predictors cannot contain redundant information between them \citep{kolchinsky2022novel,harder2013bivariate}. Using partial information decomposition \citep{williams2010nonnegative}, and since $S^F_{x_i} \indep X_i \implies I_{red}(Y; S^F_{x_i}, X_i)=0$, we may decompose $I(Y;S^F_{x_i}, X_i)$ as
\begin{align*}
    I_{unq}(Y;X_i)+I_{unq}(Y;S^F_{x_i})+I_{syn}(Y; S^F_{x_i}, X_i).
\end{align*}
where we note that, because of the lack of redundancy, $I(Y;X_i)=I_{uniq}(Y;X_i)$ and that $I(Y;S^F_{x_i})$ captures the unique information that $S^F_{x_i}$ shares with $Y$ as well as synergistic information within the random vector $S^F_{x_i}$ that is shared with $Y$. As proven in Theorem \ref{theorem:blood}, $I(Y;X_i)=0$ if $X_i \not \in BR_G(Y)$ and $I(Y;X_i)>0$ if $X_i \in BR_G(Y)$ by the global Markov property and faithfulness. Since $I_{syn}(Y; S^F_{x_i}, X_i)=0$ by assumption, this gives us the desired statement $I(Y;S^F_{x_i}, X_i)=I(Y;S^F_{x_i})$ if and only if $X_i \not \in BR_G(Y)$. 
\end{proof}

\section{Additional information about other feature importance methods}

Historically, feature importance methods were developed in the pursuit of scientific questions, but current research in this area typically focuses on model explainability or model optimization. Early forms of feature importance assessed the strength of the relationships between variables within animal biology or human psychology using methods such as the correlation coefficient \citep{galton1889co}, Spearman's rank correlation coefficient \citep{spearman1961general}, multiple linear regression \citep{darlington1968multiple}, and partial correlation \citep{wright1921correlation}. Although these methods are perfectly interpretable, they are inadequate for modelling and therefore explaining complex data, since they cannot quantify the unknown interactions between multiple features. To counteract this severe limitation, Breiman was instrumental with his introduction of variable importance within classification and regression trees \citep{breiman2017classification}. At that time, Breiman seemed more concerned about the true strength of the relationships between the explanatory variables and the response, as he posited that a feature that is related to the response should be given some importance even if it does not appear in the final model \citep{breiman2017classification}. However, starting with Breiman's random forests, feature importance began to prioritize machine learning model explanation rather than data exploration. A good overview of the properties of some popular feature importance metrics is shown in \citet{covert2020understanding}.

\section{Preprocessing methods for removing dependencies}
\label{sec:RemovingDependencies}

Finding information preserving independent representations of our data is the central step of UMFI. These representations were first considered for AI fairness and privacy algorithms in order to give unbiased predictions in the face of sensitive attributes. For example, if one wants to remove the influence of race on recidivism likelihood predictions, preprocessing methods can be used to alter the original dataset such that the set of predictors are independent of race. In the following subsections, we discuss how optimal transport and linear regression can be used for finding these representations.

\subsection{Optimal transport}

Most of the results and methods explained in this section can be found in \citet{johndrow2019algorithm}. In this section, we denote features in the feature set $F$ by $X_j$ or $X_i$ to emphasize that they are random variables, rather than the previously used $x_j$ and $x_i$, where the former is used to denote observations $x_j$ sampled from $X_j$ instead. To obtain a preprocessing $S^F_{X_i}$, we may remove the dependencies of $x_i$ from each $X_j \in F \setminus \{X_i\}$ with minimal information loss with respect to $X_j$. To do so using optimal transport, we consider the Monge problem:

\begin{align*}
    g_c(X_j, \tilde{X}_j) = \inf_{g: g(X_j) \sim \tilde{X}_j} \mathbb{E} [c(X_j,g(X_j))] = \inf_{g: g(X_j) \sim \tilde{X}_j} \int_{\mathbb{R}} c(x_j,g(x_j)) d\mu(x_j).
    \tag{2.1.1} \label{monge}
\end{align*}

The quantity $g_c(X_j, \tilde{X}_j)$ represents the transportation cost of moving $X_j$ to $\tilde{X}_j$ with respect to some cost function $c$, and in our case, we desire $\tilde{X}_j \indep X_i$. It is natural to use $c(x_j, \tilde{x}_j)= d^q(x_j, \tilde{x}_j)$, where $d$ is the Euclidean norm. The transportation cost is also given by the Wasserstein-$q$ distance, $g_c(X_j, \tilde{X}_j)=\mathcal{W}_q^q(X_j, \tilde{X}_j)$, defined below for one-dimensional distributions. 
\begin{align*}
        \mathcal{W}_q(X_j, \tilde{X}_j)^q = \int_{0}^{1}|F^\leftarrow(p) - \tilde{F}^\leftarrow(p)|^q dp,
    \end{align*}
where $F_j$ and $\tilde{F}_j$ are the CDFs of $X_j$ and $\tilde{X}_j$, and $F_j^\leftarrow(p)=\sup_{x_j \in \mathbb{R}} F_j(x_j) \le p$. It can be shown that given any continuous one dimensional distributions $X_j$ and $\tilde{X}_j$, the optimal transport map $g: X_j \to \tilde{X}_j$ is given by $g=\tilde{F}_j^{\leftarrow} \circ F_j$.

\begin{theorem} \label{univariate_OT}
Let $X$ be a r.v. with density $f$ and CDF $F$. Let $\tilde{X}$ have CDF $\tilde{F}$. Then $g=\tilde{F}^{\leftarrow} \circ F$ is the minimizer to \eqref{monge}. Hence, $g$ optimally transports $X$ to $\tilde{X}=\tilde{F}^{\leftarrow}(F(X))$.
\end{theorem}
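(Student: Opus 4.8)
The plan is to prove optimality in two stages: first verify that the proposed map $g = \tilde{F}^{\leftarrow} \circ F$ is \emph{feasible} (that it actually pushes $X$ onto $\tilde{X}$, so it is an admissible competitor in \eqref{monge}), and then show that its transportation cost exactly meets the universal lower bound supplied by the Wasserstein distance, so that no competing map can do strictly better.

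For feasibility I would invoke the probability integral transform. Since $X$ admits a density, its cdf $F$ is continuous, so $F(X) \sim \mathrm{Uniform}(0,1)$; applying the generalized inverse $\tilde{F}^{\leftarrow}$ to a uniform variable returns a variable with cdf $\tilde{F}$. Hence $g(X) = \tilde{F}^{\leftarrow}(F(X)) \sim \tilde{X}$, which confirms that $g$ satisfies the constraint $g(X) \sim \tilde{X}$ in \eqref{monge} and, in the application of interest, that $g(X) \indep Z$ whenever $\tilde{X} \indep Z$.

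For optimality I would directly evaluate the cost incurred by $g$ and match it to the Wasserstein distance. With the cost $c(x,\tilde{x}) = |x - \tilde{x}|^q$, setting $U = F(X)$ and using the almost-sure identity $X = F^{\leftarrow}(U)$, a change of variables gives
\[ \mbe[c(X, g(X))] = \mbe\bigl[\,|F^{\leftarrow}(U) - \tilde{F}^{\leftarrow}(U)|^q\,\bigr] = \int_0^1 |F^{\leftarrow}(p) - \tilde{F}^{\leftarrow}(p)|^q \, dp = \mathcal{W}_q(X,\tilde{X})^q. \]
Because the Monge infimum in \eqref{monge} coincides with $\mathcal{W}_q(X,\tilde{X})^q$ (the stated identity $g_c(X,\tilde{X}) = \mathcal{W}_q^q(X,\tilde{X})$ gives a lower bound on the cost of any admissible map), the fact that $g$ attains this value certifies that $g$ is a minimizer, completing the proof.

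The main obstacle is the careful justification of the almost-sure identity $X = F^{\leftarrow}(F(X))$ when $F$ fails to be strictly increasing: the quantile function $F^{\leftarrow}$ inverts $F$ only away from the (at most countable) flat pieces of $F$, but these flat pieces correspond to intervals assigned zero mass by the law of $X$, so the identity holds $X$-almost surely and the change of variables above is legitimate. If one preferred not to take the Wasserstein formula as given, the alternative route would be a monotone-rearrangement (Hardy--Littlewood) or cyclical-monotonicity argument, establishing that for the convex cost $|x - \tilde{x}|^q$ with $q \ge 1$ any optimal coupling must be comonotone and that the unique comonotone deterministic coupling is precisely $\tilde{F}^{\leftarrow} \circ F$; that monotonicity step would then be the technical crux.
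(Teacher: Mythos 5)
Your proof is correct and follows essentially the same route as the paper's: both evaluate the cost of $g=\tilde{F}^{\leftarrow}\circ F$ by writing $x = F^{\leftarrow}(F(x))$ and changing variables $p=F(x)$ to obtain $\mbe[|X-g(X)|^q]=\int_0^1|F^{\leftarrow}(p)-\tilde{F}^{\leftarrow}(p)|^q\,dp$, then conclude optimality from the previously stated identity $g_c(X,\tilde{X})=\mathcal{W}_q^q(X,\tilde{X})$. Your two additions --- the explicit feasibility check that $g(X)\sim\tilde{X}$ via the probability integral transform, and the almost-sure justification of $X=F^{\leftarrow}(F(X))$ on the flat pieces of $F$ --- are steps the paper leaves implicit, and they tighten rather than change the argument.
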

\begin{proof}
We show $\mathbb{E}[|X-g(X)|^q]=\int_{0}^{1}|F^\leftarrow(p) - \tilde{F}^\leftarrow(p)|^q dp$ for $g=\tilde{F}^{\leftarrow} \circ F$
\begin{align*}
    &\mathbb{E}[|X-g(X)|^q]=\int_{-\infty}^{\infty}|x-\tilde{F}^{\leftarrow}(F(x))|^q f(x)dx\\
    &= \int_{-\infty}^{\infty}|F^{\leftarrow}(F(x))-\tilde{F}^{\leftarrow}(F(x))|^q f(x)dx=\int_{0}^{1}|F^{\leftarrow}(p)-\tilde{F}^{\leftarrow}(p)|^q dp
\end{align*}
\end{proof}
\begin{theorem} \label{independent_OT}
Let $F_{j|x_i}(x)=P(X_j \le x_j | X_i=x_i)$ denote the CDF of $X_j|\{X_i=x_i\}$ . Then $g=\tilde{F}^\leftarrow \circ F_{j|x_i}$ optimally transports $X_j|\{X_i=x_i\}$ to $\tilde{X}_j \indep X_i$ for any CDF $\tilde{F}$
\end{theorem}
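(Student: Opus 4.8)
The plan is to reduce the statement to the univariate result of Theorem \ref{univariate_OT} applied slice-by-slice in $z$, and then to verify the independence claim separately, since that is the genuinely new content here.

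First I would fix a value $z$ and treat $X \mid \{Z = z\}$ as an ordinary univariate random variable with cdf $F_{X|z}$. Theorem \ref{univariate_OT} then applies verbatim, with source cdf $F_{X|z}$ and target cdf $\tilde F$: the map $g_z = \tilde F^\leftarrow \circ F_{X|z}$ is the minimizer of the corresponding Monge problem and transports $X \mid \{Z = z\}$ to a variable whose cdf is $\tilde F$. This handles the optimality assertion directly, since the theorem claims optimality conditionally on $\{Z = z\}$, which is exactly the per-slice statement already proved.

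The heart of the argument is the independence claim. I would argue via the probability integral transform: conditionally on $\{Z = z\}$, the variable $U := F_{X|z}(X)$ is uniform on $[0,1]$ (using continuity of $F_{X|z}$), and applying the quantile function gives $\tilde F^\leftarrow(U) \sim \tilde F$. The crucial point is that the conditional law of $\tilde X = \tilde F^\leftarrow(F_{X|Z}(X))$ equals $\tilde F$ for \emph{every} value $z$, with no residual dependence on $z$. Hence the conditional cdf $P(\tilde X \le t \mid Z = z) = \tilde F(t)$ is constant in $z$, which is precisely the statement that $\tilde X \indep Z$.

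The step I expect to require the most care is this independence verification, specifically justifying that the transformed variable shares the same conditional distribution $\tilde F$ across all $z$ and that such constancy of the conditional law yields genuine independence rather than merely identical marginals. I would also watch the generalized inverse $\tilde F^\leftarrow$ where $\tilde F$ has flat pieces or jumps, and the continuity of $F_{X|z}$ needed for the probability integral transform. Under the continuous-distribution hypotheses already in force for Theorem \ref{univariate_OT}, these technicalities are routine, so the substantive work is the clean decomposition into $z$-slices followed by the constant-conditional-law observation.
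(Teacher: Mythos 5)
Your proposal is correct and follows essentially the same route as the paper's proof: apply Theorem \ref{univariate_OT} to the conditional variable $X \mid \{Z=z\}$ for each fixed $z$, and conclude independence from the fact that the transported variable's conditional law is $\tilde{F}$ regardless of $z$. The paper states this in two sentences; your write-up simply makes explicit the probability integral transform and the constant-conditional-law-implies-independence step that the paper leaves implicit.
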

\begin{proof}
We apply Theorem \ref{univariate_OT} on the random variable $X_j| \{X_i=x_i\}$ and note that $X_j|\{X_i=x_i\}$ is independent of $X_i$. In particular, $g(X_j | X_i=x_i) \indep X_i$ for any choice of $\tilde{F}$.
\end{proof}
Theorem \ref{independent_OT} suggests an algorithm for transporting data $(x_{j1}, ..., x_{jn})$ sampled from $X_j$, to $(\tilde{x}_{j1}, ..., \tilde{x}_{jn}) \indep (x_{i1}, ..., x_{in})$. Since $x_{jk}$ is taken jointly with $x_{ik}$, as they are attributes coming from the $k$th sample in the dataset, then $x_{jk}$ is a realization of the distribution $X_j | \{X_i=x_{ik}\}$. Consequently, for each $k=1,...,n$, we should transport $x_{jk}$ to $\tilde{x}_{jk}=\tilde{F}^\leftarrow(F_{j|x_{ik}}(x_{jk}))$, where we may pick any CDF $\tilde{F}$ . This procedure can also adapted for features sampled from discrete r.v's, as shown in \citet{johndrow2019algorithm}.

\begin{algorithm}
\caption{Algorithm for removing dependencies of $X_i$ from $X_j$}\label{algo:remove_dependencies_OT}
\begin{algorithmic}
\REQUIRE{$X_j=[x_{j1}, ..., x_{jn}], X_i=[x_{i1}, ..., x_{in}]$, $X_j|(X_i=x_{ik}) \sim F_{j|x_{ik}}$, $\tilde{F}$ is a CDF}\\
\FOR{$k=1,...,n$}            
    \STATE $\tilde{x}_{jk} = \tilde{F}^\leftarrow(F_{j|x_{ik}}(x_{jk}))$
\ENDFOR
\RETURN $\tilde{X}_j=[\tilde{x}_{j1}, ..., \tilde{x}_{jn}]$
\end{algorithmic}
\end{algorithm}

We denote the result of the algorithm by $\tilde{X}_j=\tilde{F}^\leftarrow(F_{j|X_{i}}(X_j))$ and would ideally pick $\tilde{F}$ such that it minimizes the transportation cost $g_c(X_j, \tilde{X}_j)=g_c(X_j, \tilde{F}^\leftarrow(F_{j|X_{i}}(X_j)))$ across all CDFs $\tilde{F}$ in order to minimize information loss. However, in practice, the choice of $\tilde{F}$ does not matter much. In fact, as long as the support of $\tilde{F}$ is at least a large as the support of  $F_j$, the cdf of $X_j$, then any rank-based prediction rule, e.g. random forest, will be invariant to the choice of $\tilde{F}_j$ \citep{johndrow2019algorithm}. A standard choice for $\tilde{F}_j$ is $F_j$ so that we can recover the original quantiles of $X_j$.

Furthermore, $F_{j|x_{ik}}$ is not usually known and must be estimated from the data. For example, this can be done by splitting $X_i$ into $N$ quantiles and using the empirical CDF $P(X_j \le x_j | X_i \in x_{ik}\text{'s quantile} )$. The ability of this method to remove dependencies on $X_i$ from $X_j$ relies significantly on the accuracy of this estimate. 

We may iterate Algorithm \ref{algo:remove_dependencies_OT} over each feature in $F \setminus \{X_i\}$ to obtain pairwise independence between the transported variables $\tilde{X}_j$ and $X_i$. It is also possible to iterate Algorithm \ref{algo:remove_dependencies_OT} via chaining to achieve mutual independence between the transformed variables $\tilde{X}_j$ and $X_i$ \citep[Section 3.2]{johndrow2019algorithm}. However, this is computationally expensive, and pairwise independence should suffice for an accurate UMFI score, as will be explored further in Section \ref{sec:LR_vs_OT}. Step 2 of Algorithm \ref{algo:umfi} in the main paper can therefore be implemented with Algorithm \ref{algo:S*_OT}.

\begin{algorithm}[h!]
\caption{Algorithm for estimating $S^{F}_{X_i}$ via pairwise optimal transport}
\label{algo:S*_OT}
\begin{algorithmic}
\REQUIRE{$X_i=[x_{i1}, ..., x_{in}]$, $X_j=[x_{j1}, ..., x_{jn}]$ for $X_j$ in $F \setminus X_i$}
\STATE $S^{F}_{X_i} = \emptyset$
\FOR{$X_j$ in $F \setminus \{X_i\}$}            
    \STATE $\tilde{X}_j =$ output of Algorithm \ref{algo:remove_dependencies_OT} with $X_j$ and $X_i$
    \STATE add $\tilde{X}_j$ to $S^{F}_Z$
\ENDFOR
\RETURN $S^{F}_{X_i}$
\end{algorithmic}
\end{algorithm}

In other words, we may estimate $S^F_{X_i}$ as: 
$$S^{F}_{X_i}= \{F_{j}^\leftarrow(F_{j|X_i}(X_j)): X_j \in F \setminus \{X_i\} \}.$$

\subsection{Linear regression}
The most basic method for removing dependencies is linear regression. Even though it is quite simple, it can be shown to be optimal with a few assumptions (Theorem \ref{existence_optimal_preprocessing}). This preprocessing technique is implemented in the popular Python package \emph{fairlearn} \citep{bird2020fairlearn,scikitFairness}.

To reiterate, removing dependencies requires methods to make a feature or set of features $S$ independent of a protected attribute $x_i$, while keeping as much of the original information as possible. The overarching idea of linear regression as a preprocessing technique is that under the assumption that the residuals and the protected attribute are jointly Gaussian, the residuals can be utilized as a representation of $S$, which is independent of $x_i$.

\begin{theorem}
\label{Lin_optimal}
The residuals $\epsilon$ of a simple linear regression model have zero covariance with the predictor $X$.
\end{theorem}

\begin{proof}
(1) From the normal equations, the definition of covariance, and the fact that $\mathbb{E}[\epsilon]=0$, it follows that
\begin{align*}
    & Cov(X,\epsilon)=\mathbb{E}[X^T \epsilon] - \mathbb{E}[\epsilon]\mathbb{E}[X]= \mathbb{E}[ X^T \epsilon] = \mathbb{E}[X^T(Y-X \beta)]\\
    &=\mathbb{E}[X^T(Y- X(X^TX)^{-1}X^TY))]= \mathbb{E}[X^T Y- X^T X(X^TX)^{-1}X^TY]=\mathbb{E}[X^TY- X^T Y]=0
\end{align*}

\end{proof}

Thus, in step 2 of the algorithm for UMFI (Algorithm \ref{algo:umfi}), we can estimate
$$S^F_{X_i}= \{\epsilon_j=X_j - \beta_{0,j} -\beta_{1,j}X_i: X_j \in F \setminus \{X_i\}\}$$

where $\beta_{0,j}$ is the intercept term and $\beta_{1,j}$ is the slope term of the linear regression model $X_j=\beta_{0,j}+\beta_{1,j}X_i+\epsilon_j$.

\section{Experiments comparing linear regression and optimal transport}
\label{sec:LR_vs_OT}
In the following subsections, we compare the ability of linear regression and pairwise optimal transport to remove the information of a feature from data while distorting the original data as little as possible. It can be concluded that while linear regression works optimally when the data is jointly Gaussian, on real data, such as the BRCA dataset, pairwise optimal transport can find independent representations of the data, while linear regression fails (Section \ref{sec:removeDep}).

To implement UMFI paired with linear regression, we only remove dependencies when the regression slope coefficient is statistically significant (p-value $< 0.01$). To implement UMFI paired with pairwise optimal transport, when removing dependencies on the feature $x_i$ from the dataset, we estimate $F_{j|x_{ik}}$ by breaking up $x_i$ into quantiles of size $150$ and running linear regression on each quantile. The new independent predictors are then given by the values of the inverse empirical CDF of the residuals from the mentioned linear regression models.

\subsection{Removing dependencies}
\label{sec:removeDep}

It is crucial for our linear regression and optimal transport preprocessing methods to remove the information associated with the feature of interest, $x_i$, from the rest of the dataset $F \setminus \{x_i\}$. Therefore, we would like the preprocessed dataset $S^F_{x_i}$ to share zero mutual information with $x_i$. The mutual information $I(X_i; S^F_{x_i})$ is difficult to calculate, but it is closely related to the optimal predictor of $x_i$ given $S^F_{x_i}$ \citep{song2019learning}. For example, if $I(X_i; S^F_{x_i})=0$, as is desired, then the optimal predictor of $x_i$  will have zero accuracy when given $S^F_{x_i}$ as input. If the opposite is true and $S^F_{x_i}$ contains all of the information from $x_i$, then an optimal predictor of $x_i$ should be able to perfectly predict $x_i$ from the given information in $S^F_{x_i}$. In the following experiments, we assume that random forests can form the optimal predictor of $x_i$ given $S^F_{x_i}$. We use the OOB-$R^2$ coming from the random forest model as a approximate measure of the mutual information between $x_i$ and the transformed dataset $S^F_{x_i}$.

We used the BRCA dataset with 50 features to test the ability of optimal transport and linear regression to remove dependencies \citep{covert2020understanding,catav2020marginal}. All 50 features are continuous and the response is categorical. For each individual feature, we first use random forest OOB-$R^2$ to give a approximate measure of the mutual information $I(X_i; F \setminus \{x_i\})$ between the feature of interest $x_i$ and the other 49 features.  We then consider the case where the 49 remaining features are preprocessed to have dependencies on $x_i$ removed via linear regression or pairwise optimal transport. Similarly, random forest's OOB-$R^2$ is used to give a approximate measure of $I(X_i; S^F_{x_i})$.


The results are plotted in Figure \ref{fig:depRemove}. It is clear that the raw data (black line) shares considerable information across features. Most features can be predicted from the other untransformed features with an accuracy of $R^2>0.2$ and many can even be predicted with accuracies over $0.4$. Since the data has extremely nonlinear dependencies between features, simple linear regression is unable to remove all the mutual information between the protected attributes and the rest of the features. Indeed, the data certainly cannot be approximated with multivariate Gaussians. Conversely, pairwise optimal transport can successfully remove most of the mutual information present in the data. For all 50 features in the dataset, $x_i$ cannot be predicted successfully by random forest (OOB-$R^2 = 0$) from the other features after $F \setminus x_i$ is transformed with pairwise optimal transport.

\begin{figure}
  \centering
  \includegraphics[width=0.65\textwidth]{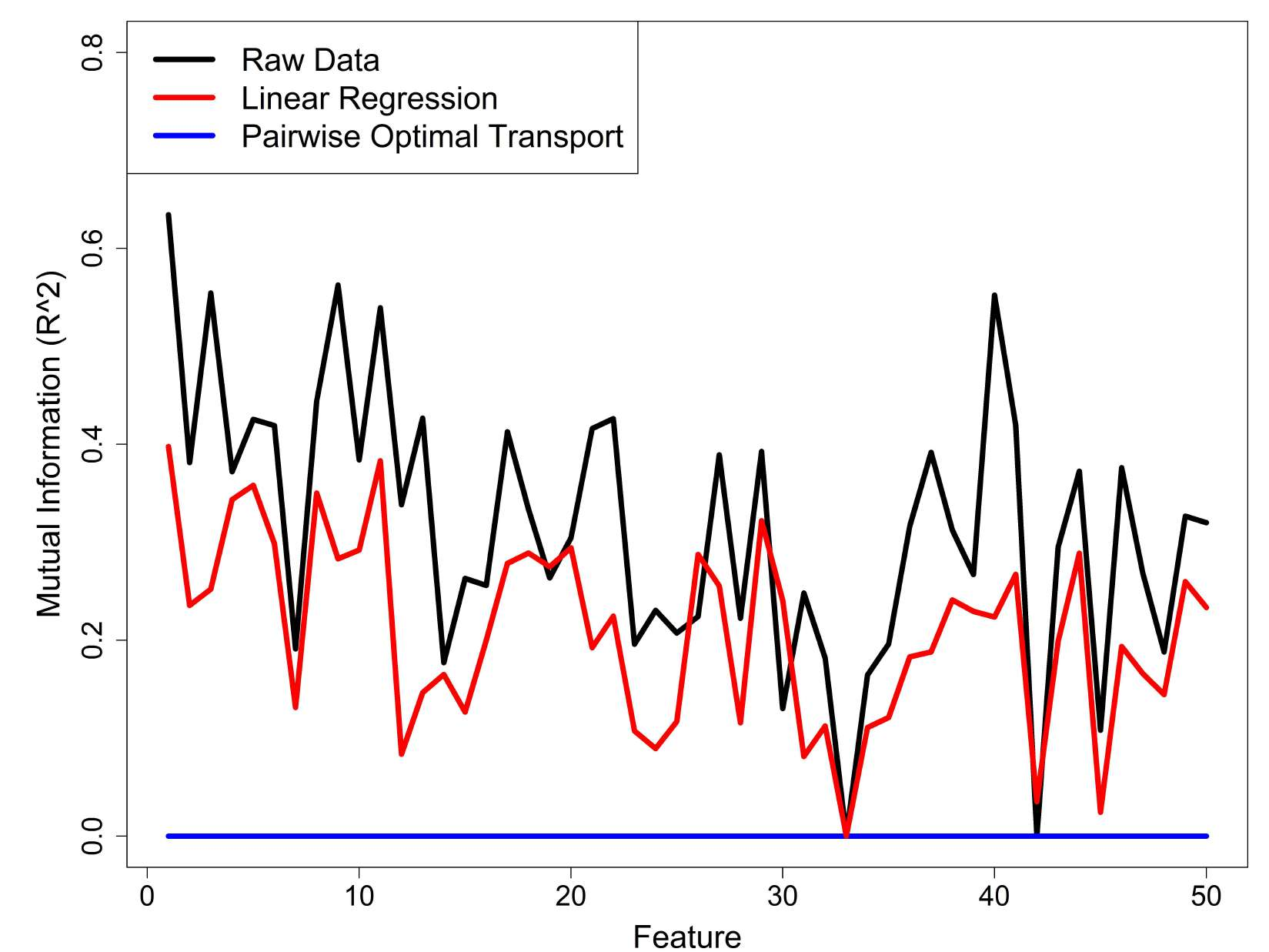}
  \caption{The coefficient of determination (random forest OOB-$R^2$) between the $i$th feature in the BRCA dataset and all other features is plotted (black) for each $ i \in \{1,2,...50\}$. The $R^2$ value between the $i$th feature and all other features after preprocessing with linear regression (red) and optimal transport (blue) is also plotted.}
  \label{fig:depRemove}
\end{figure}

\subsection{Distortion}
\label{sec:Distortion}
Not only do we require that the transformed features are independent of the feature of interest, but we also require that as much of the information present in the original data is preserved in the transformed data. To measure the amount of distortion imposed on the original data, we measure the dependence between the original and perturbed data using the maximal information coefficient \citep{kinney2014equitability}. For each feature in the BRCA dataset with 50 features \citep{covert2020understanding,catav2020marginal}, the information from the current feature is removed from all other features with either linear regression or pairwise optimal transport (Figure \ref{fig:distortion}).

Linear regression does not distort the transformed features in most cases. The dependence between the original and perturbed features usually remains near $1$, though the dependence does go as low as $0.42$ in one case (Figure \ref{fig:distortion}). While linear regression transformed these features with minimal distortion, these results are moot since linear regression failed to remove the original dependencies in a significant way, which was the main goal of the method (Figure \ref{fig:depRemove}).

\begin{figure}
  \centering
  \includegraphics[width=1\textwidth]{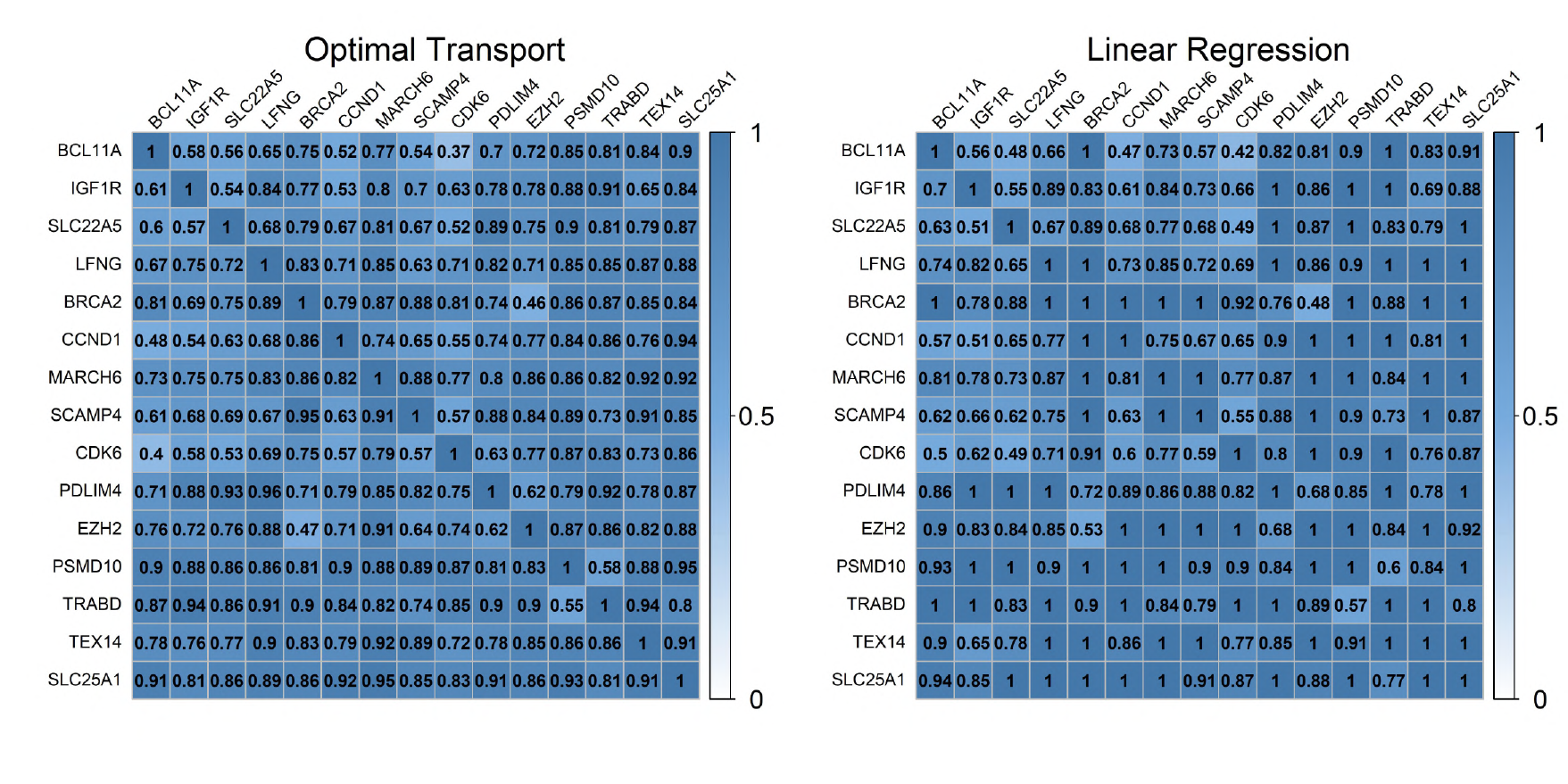}
  \caption{Cell $(i,j)$ indicates how similar the $j^{th}$ variable in the BRCA dataset is compared to its transformation via pairwise optimal transport or linear regression with respect to feature $i$. This is measured with the maximal information coefficient, which is comparable to $R^2$. To make the plots more clear and accessible, only the first 15 features are shown.}
  \label{fig:distortion}
\end{figure}

Compared to linear regression, pairwise optimal transport has a much more sizable effect on the distorted features, though this may have been necessary to completely remove dependence. The dependence between original and perturbed features mostly ranges from $0.6$-$0.9$, though some are as low as $0.37$ (Figure \ref{fig:distortion}). While only the first 15 features are shown, the results are similar for the other 35 features.

\section{Further feature importance experiments}
\label{sec:FurtherEXP_feature_imp}

This section is comprised of additional experiments performed on the simulated data introduced in Section \ref{sec:SimulatedData}, the BRCA dataset with permuted random genes, the original BRCA dataset with unpermuted random genes \citep{tomczak2015cancer,covert2020understanding,catav}, and the CAMELS hydrology dataset \citep{addor2017camels}. MCI and UMFI used either random forests or extremely randomized trees \citep{Breiman2001,geurts2006extremely}. Both of these, as well as ablation and permutation importance were implemented using the \emph{ranger} R package \citep{wright2015ranger}, while conditional permutation importance was implemented with the \emph{randomForest} and \emph{permimp} packages \citep{debeer2021package,liaw2002classification}. All experiments were run in Microsoft R Open Version 4.0.2. 

\subsection{Extra experiments on simulated data}
\label{sec: simulated_data1}

We repeat our previous experiments on simulated data from Section \ref{sec:SimulatedData} to test how ablation, permutation importance (PI), and conditional permutation importance (CPI) behave in the presence of nonlinear interactions (Section \ref{subsec: int1}), correlated interactions (Section \ref{subsec: corint1}), correlation (Section \ref{subsec: cor1}), and blood and non-blood related features (Section \ref{subsec:Blood_relation_exp1}). Further, we test how using extremely randomized trees instead of random forests for MCI and UMFI changes the results of the same simulation experiments. Although other methods such as XGBoost \citep{chen2015xgboost} could have been implemented for these experiments, XGBoost requires greater care when optimizing hyperparameters, so we chose to use extremely randomized trees instead, which is faster than random forests and provides similarly good predictions \citep{geurts2006extremely}. Both random forests and extremely randomized trees are not very sensitive to hyperparameter selection \citep{probst2019tunability}. For these simulation studies, we also perturb the size of the quantiles used by UMFI\_OT. We now use quantiles of size 30 instead of size 150. Quantiles of size 30 worked better on the hydrology data used in later experiments, so we test to see if the simulation results are sensitive to this choice in quantile size for dependency removal via optimal transport.

\subsubsection{Nonlinear interactions}
\label{subsec: int1}
The first experiment on simulated data handles the case where two variables, $x_1$ and $x_2$, interact in a nonlinear way in the response $Y$. As explained in Section \ref{sec:Interactions}, we should expect $x_1$ and $x_2$ to contribute more than half of the total importance, while $x_3$ and $x_4$ should be important, but less important compared to $x_1$ and $x_2$. Figure \ref{fig:int1} shows that ablation, PI, and CPI all provide accurate scores. 

When tested with extremely randomized trees, the nonlinear interactions simulation experiment results for MCI and UMFI, shown in Figure \ref{fig:int2}, remain mostly unchanged compared to the results from the experiment with random forests given in Figure \ref{fig:int}.

\subsubsection{Correlated interactions}
\label{subsec: corint1}
The second experiment considers the case where two correlated variables, $x_1$ and $x_2$, interact together in the response $Y$. Thus, as explained in Section \ref{sec:CorrelatedInteractions}, we should expect $x_1$ and $x_2$ to have more importance compared to $x_3$ and $x_4$. Figure \ref{fig:corInt1} shows that ablation, PI, and CPI all correctly weigh the importance of $x_1$ and $x_2$ as high relative to $x_3$ and $x_4$. The only notable difference is that the ablation method attributes an additional $\sim 3\%$ importance to each of $x_1$ and $x_2$ compared to PI, CPI, MCI, and UMFI (Figure \ref{fig:corInt1}).

When tested with extremely randomized trees instead of random forests, the correlated interaction simulation experiment results (Figure \ref{fig:corInt2}) for MCI and UMFI are similar to the earlier results shown in Figure \ref{fig:corInt}. MCI gave slightly more importance to $x_1$ and $x_2$ compared to $x_3$ and $x_4$, though the differences are seemingly insignificant. On the other hand, both UMFI methods gave significantly more importance to $x_1$ and $x_2$ compared to $x_3$ and $x_4$, as expected. 

\subsubsection{Correlation}
\label{subsec: cor1}
The third experiment tests how the metrics allocate importance to correlated features. As explained in Section \ref{sec:Correlations}, $x_1$ and $x_2$ should remain around the same relative importance, and $x_3=x_1 + \epsilon$, should have just slightly less importance compared to $x_1$ and $x_2$. Figure \ref{fig:cor1} indicates that CPI and ablation give near zero importance to the two heavily correlated features $x_1$ and $x_3$. This aligns with the discussion in Section \ref{sec: mutual_info_feature_importance} about methods motivated by feature selection since these methods base their scores on the importance of a feature conditioned on all other variables present in the model. Ablation performs similarly to CPI in this test, albeit with slightly less drastic results. Finally, we see that PI splits the importance detected from $x_1$ and $x_3$ proportionally across both features. This shows that PI can be viewed as a method for model explanation which in between the scientific inference and feature selection approaches. The scientific inference approaches (MCI and UMFI) allocate all of the redundant information to the feature. The feature selection approaches (CPI and ablation) allocate none of the redundant information to the feature. PI evenly splits the redundant information across the relevant correlated features.

When tested with extremely randomized trees, the correlation simulation experiment results (Figure \ref{fig:cor2}) for MCI and UMFI change slightly compared to the experiment with random forests in Figure \ref{fig:cor}. MCI works well, though it still gives some non-zero importance to $x_4$. With random forests, the relative importance of $x_4$ was usually above $5\%$, but with extremely randomized trees, the relative importance dropped below $5\%$. The performance of UMFI with linear regression got slightly worse as now the importance of $x_1$ is slightly greater than that of $x_2$ on average. The performance of UMFI with optimal transport changed for the better and now the importance of $x_1$ and $x_2$ are almost identical which was not true before. In this experiment, UMFI\_OT performed the best.

\subsubsection{Blood relation}
\label{subsec:Blood_relation_exp1}

For the last simulation experiment, we revisit the blood relation experiment performed in Section \ref{sec:Blood_relation_exp} using data generated from the causal graph in Figure \ref{fig:causalGraph_bloodrel}. The feature $S$ is unobserved, so the only blood related features to $Y$ in $F$ are $x_3$ and $x_4$. $x_3$ and $x_4$ should therefore be given high importance while $x_1$ and $x_2$ should receive zero importance. When tested on ablation, CPI, and PI, we notice that all three metrics fail to capture the desired importance, since they each give significant importance to $x_2$, which is not blood related to $Y$. We also note that this experiment provides an explicit example of UMFI not satisfying the marginal contribution axiom, which states that feature importance metrics should allocate at least as much importance as attributed by the ablation metric. Indeed, as shown in Figure \ref{fig:Blood}, UMFI gives around zero importance to non-blood related features $x_1$ and $x_2$, whereas ablation gives a significant portion of the importance to $x_2$.

When MCI and both implementations of UMFI were re-tested using extremely randomized trees instead of random forest, we observe that UMFI\_{LR} and UMFI\_{OT} both continue to give positive importance to the blood related features $x_3$ and $x_4$, while giving near-zero importance to the two remaining observed features (Figure \ref{fig:bloodrelation_ET}). However, we note that $x_3$ is given much more importance relative to $x_4$ when implemented with extremely randomized trees compared to random forests (Figure \ref{fig:Blood}). On the other hand, MCI gives positive importance to $x_2$ in this experiment. We note that MCI correctly gave $x_1$ almost zero importance while giving $x_3$ and $x_4$ significantly more importance compared to the random forest implementation. Across most simulation studies, it appears MCI performs slightly better using extremely randomized trees compared to random forests.

\begin{figure}[h!]
\label{fig:simulated_experiments_rf_othermethods}
\centering
\begin{subfigure}{0.35\linewidth}
  \includegraphics[width=\linewidth]{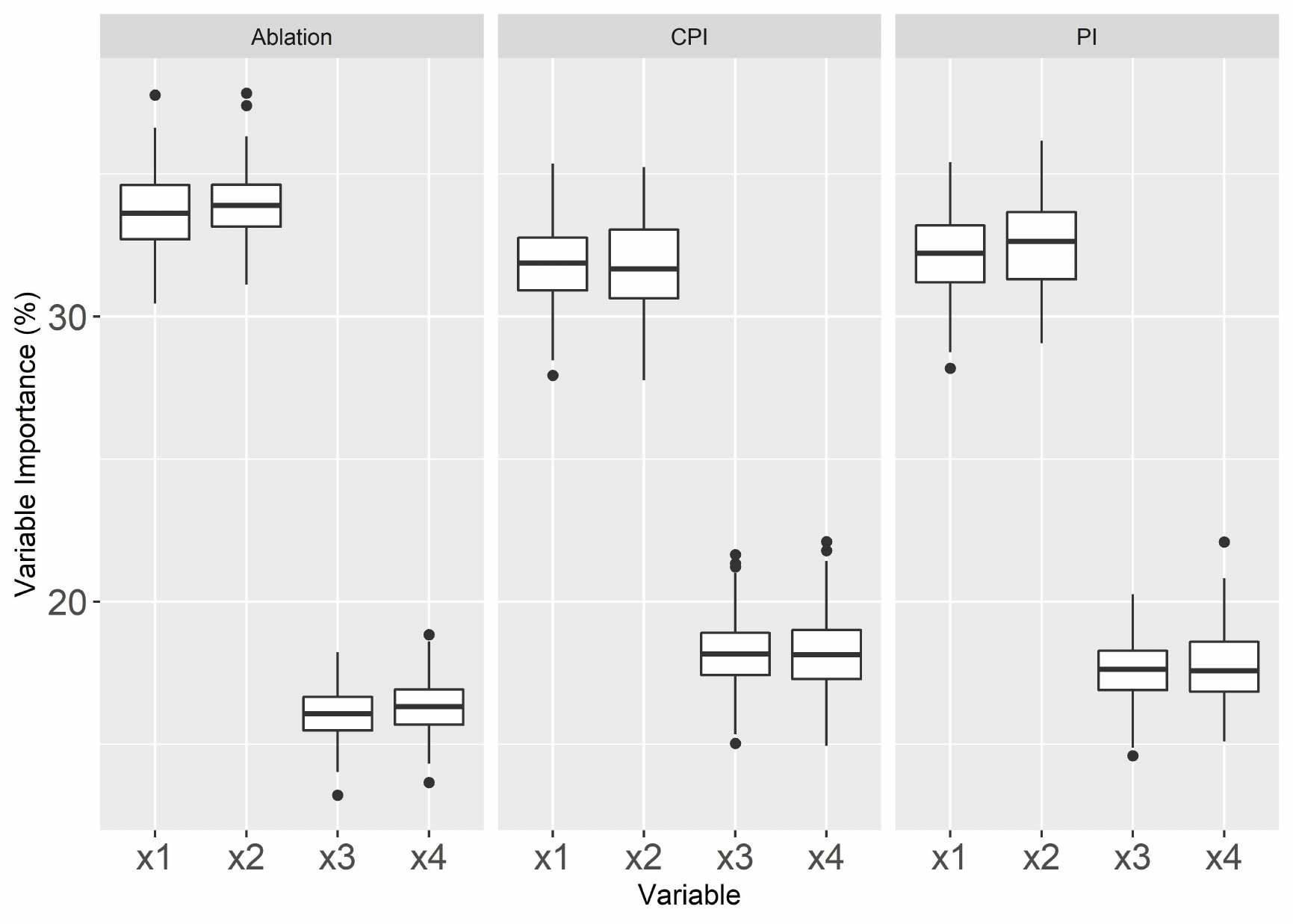}
  \caption{RF: Nonlinear interactions}
  \label{fig:int1}
\end{subfigure}
\begin{subfigure}{.35\linewidth}
  \includegraphics[width=\linewidth]{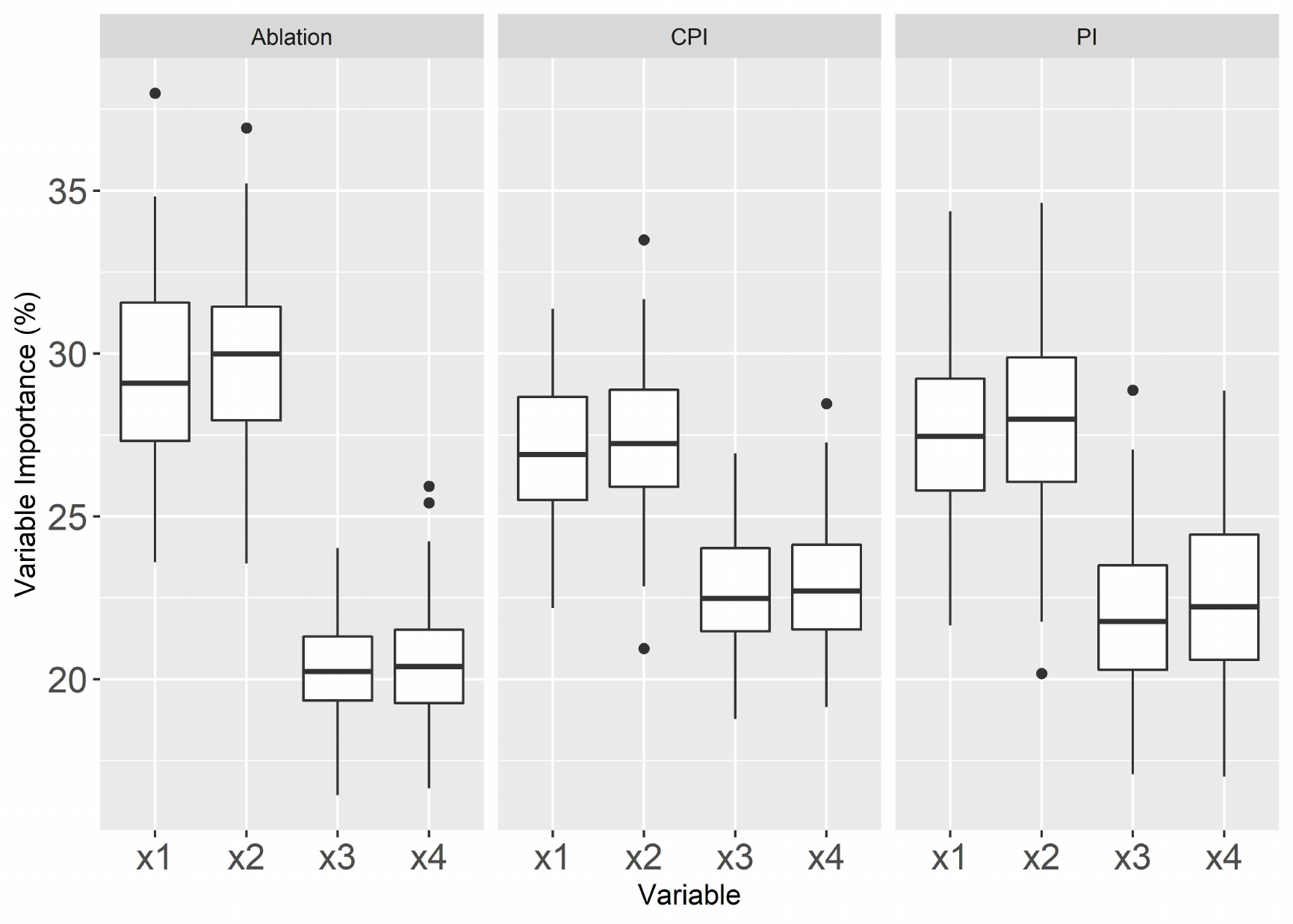}
  \caption{RF: Correlated interactions}
  \label{fig:corInt1}
\end{subfigure}\\[1ex]
\begin{subfigure}{.35\linewidth}
  \includegraphics[width=\linewidth]{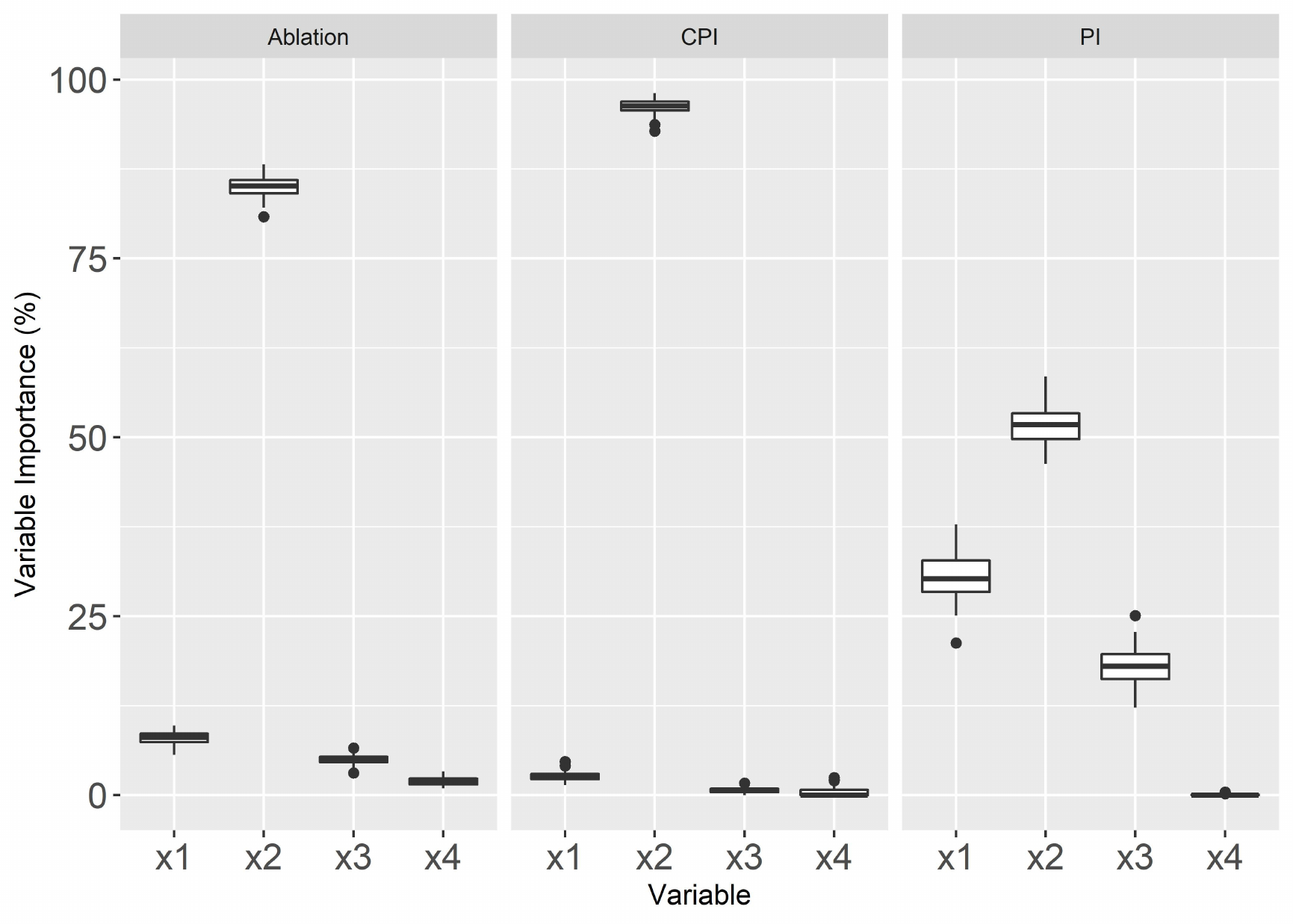}
\caption{RF: Correlation}
  \label{fig:cor1}
\end{subfigure}
\begin{subfigure}{.35\linewidth}
  \includegraphics[width=\linewidth]{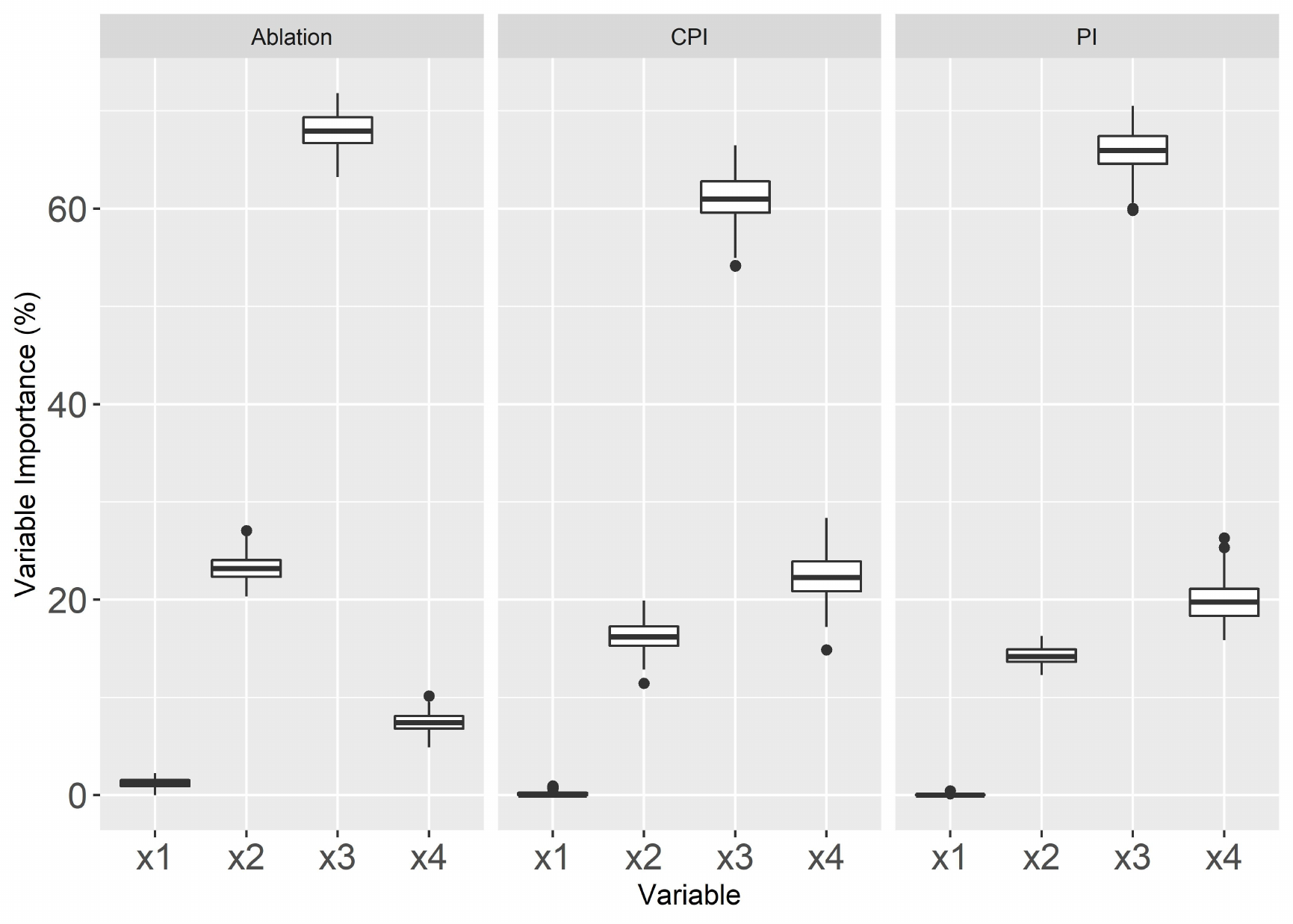}
\caption{RF: Blood relation}
  \label{fig:bloodrelation_othermeth}
\end{subfigure}\\[1ex]
\centering
\begin{subfigure}{0.35\linewidth}
  \includegraphics[width=\linewidth]{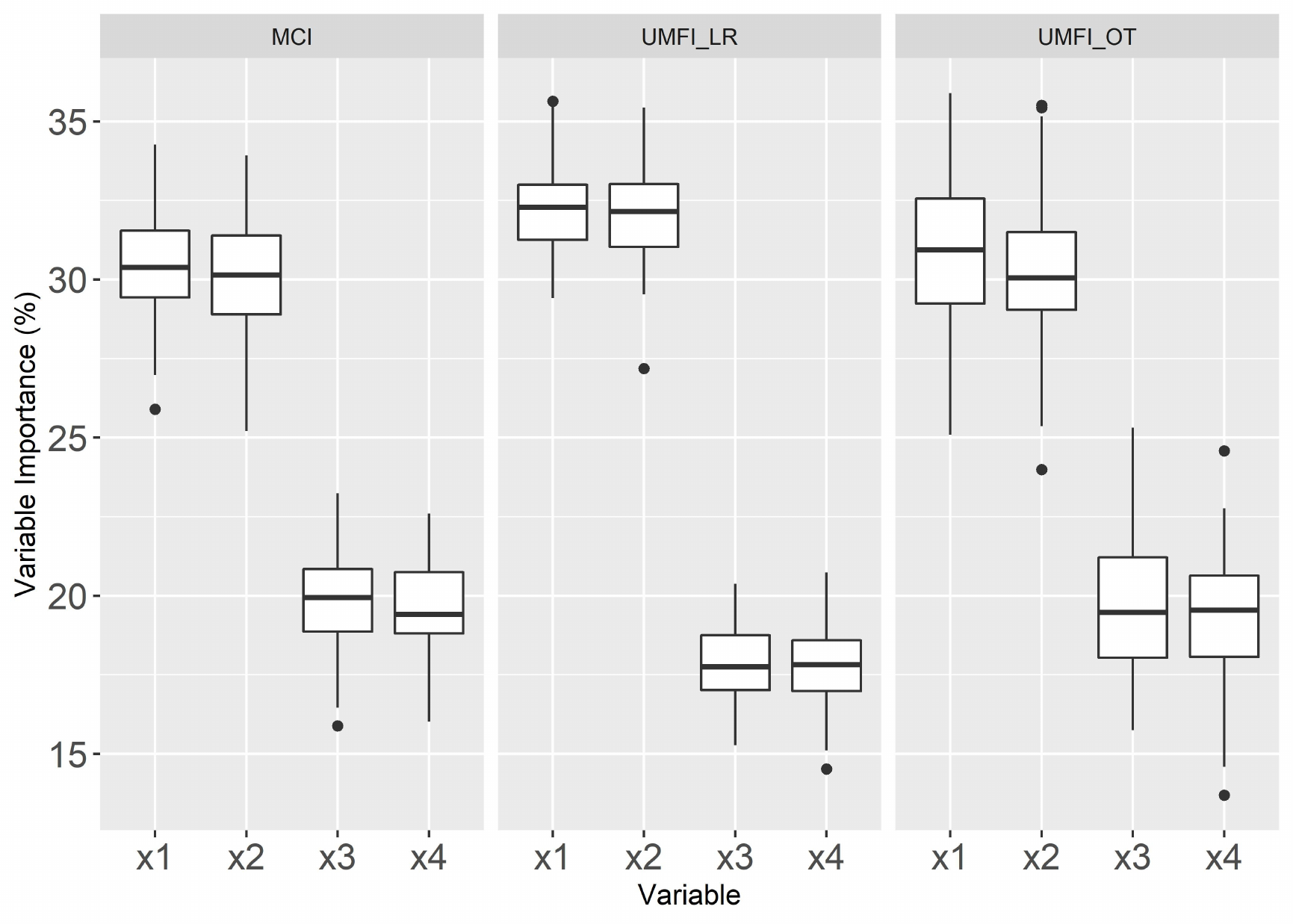}
  \caption{ET: Nonlinear interactions}
  \label{fig:int2}
\end{subfigure}
\begin{subfigure}{.35\linewidth}
  \includegraphics[width=\linewidth]{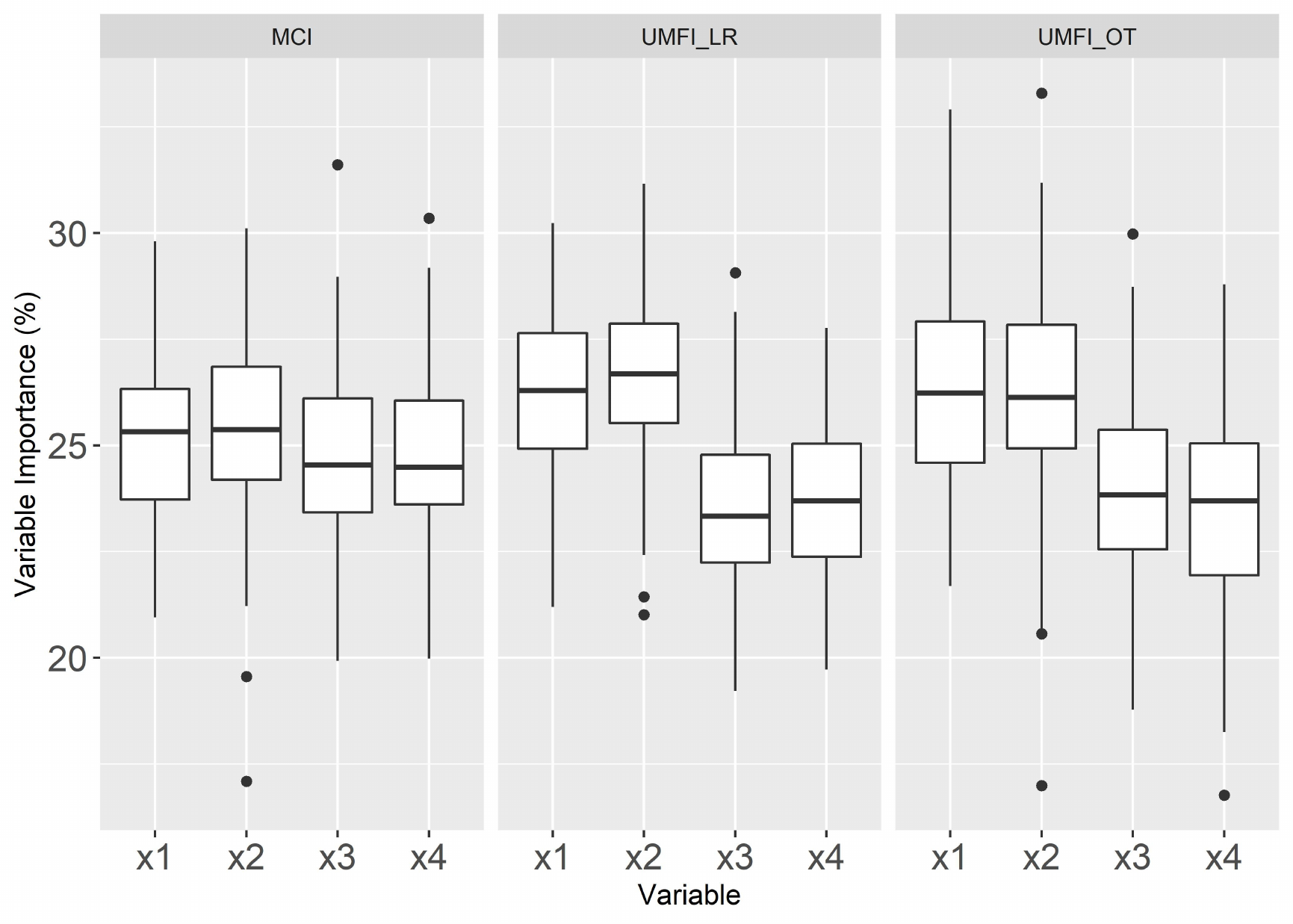}
  \caption{ET: Correlated interactions}
  \label{fig:corInt2}
\end{subfigure}\\[1ex]
\centering
\begin{subfigure}{.35\linewidth}
  \includegraphics[width=\linewidth]{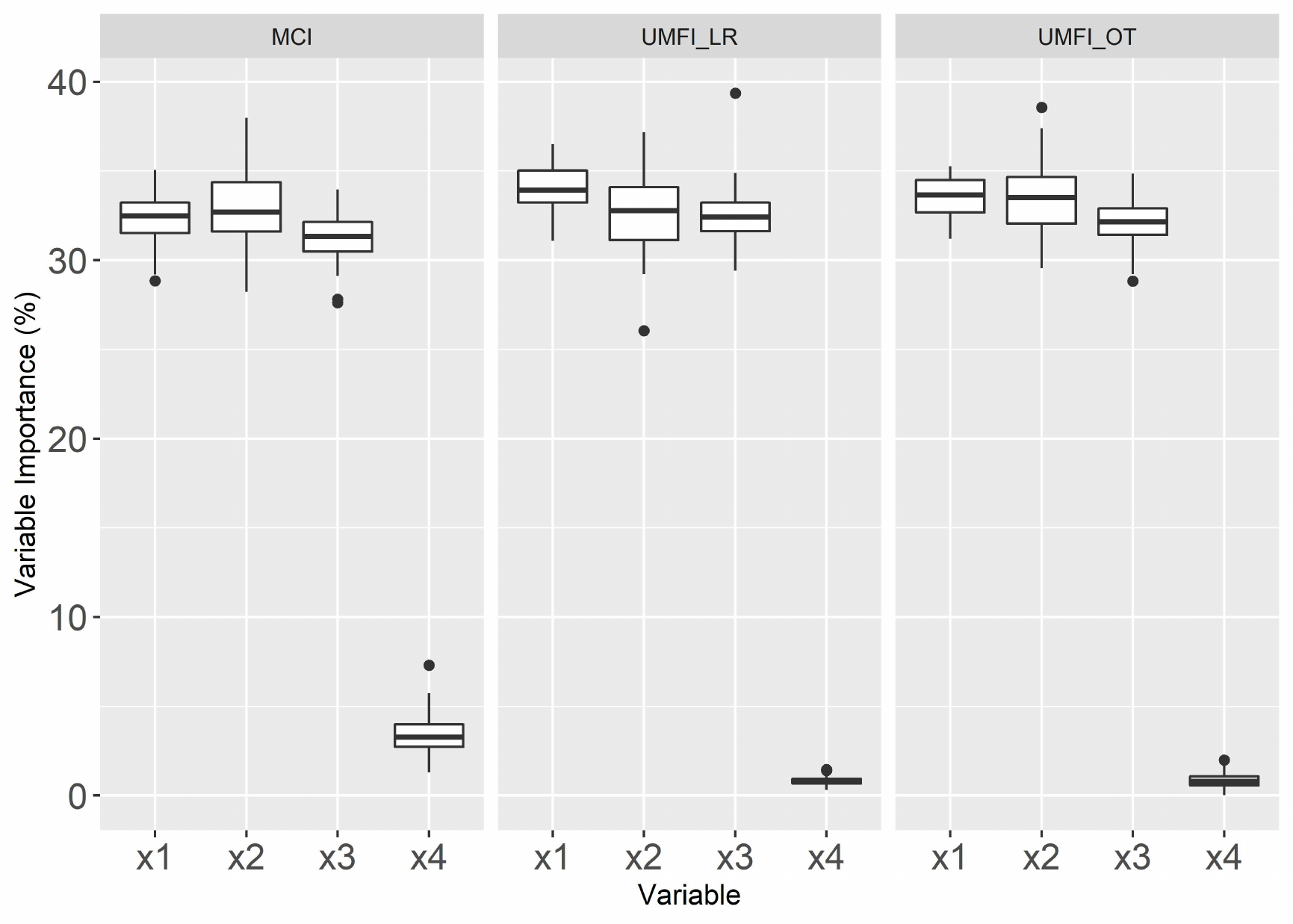}
  \caption{ET: Correlation}
  \label{fig:cor2}
\end{subfigure}
\begin{subfigure}{.35\linewidth}
  \includegraphics[width=\linewidth]{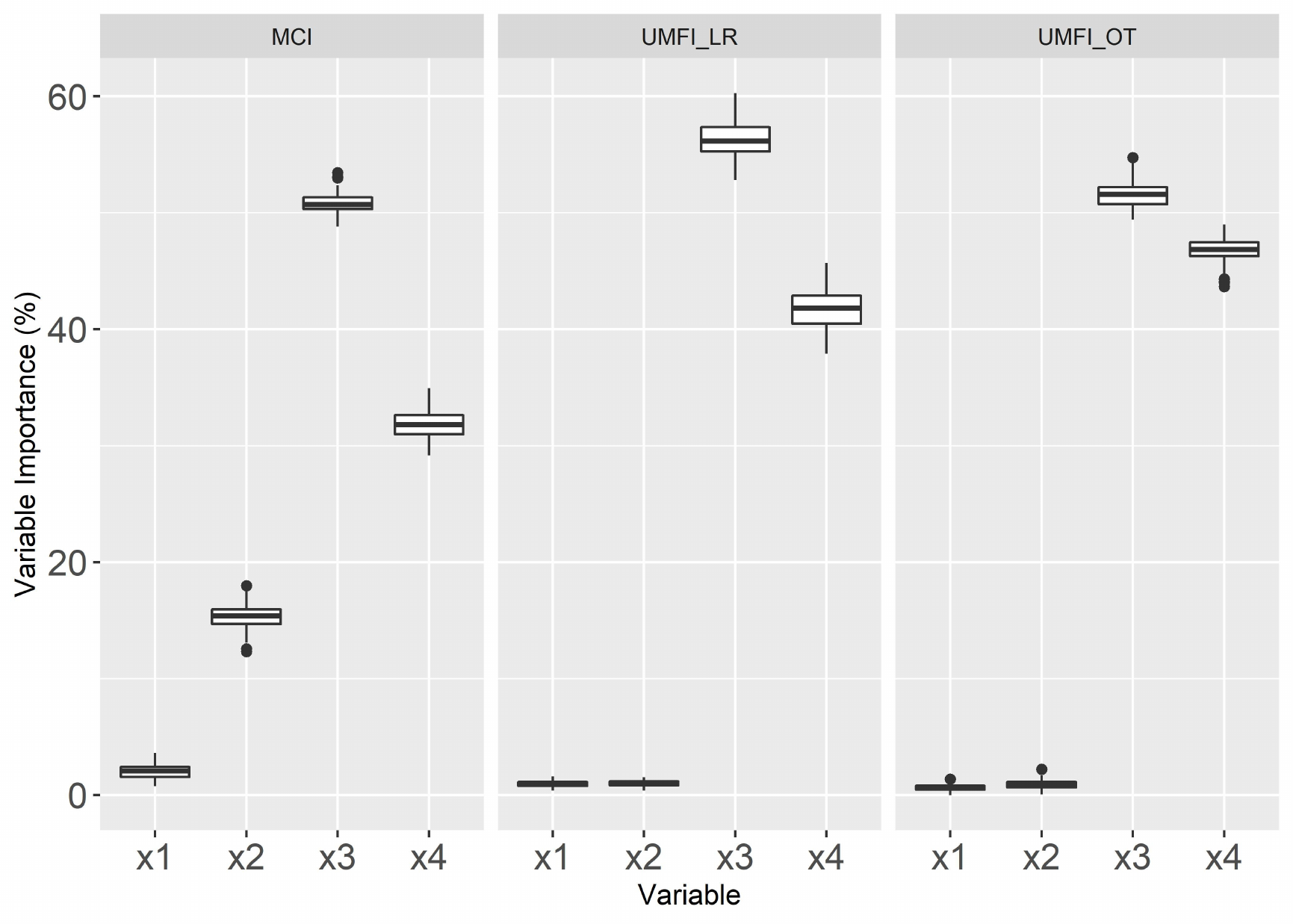}
  \caption{ET: Blood relation}
  \label{fig:bloodrelation_ET}
\end{subfigure}
\caption{Results for the experiments on simulated data from Subsection \ref{sec: simulated_data1}. The results for ablation, conditional permutation importance (CPI), and permutation importance (PI) were implemented with random forest (RF), and are shown in Figures \ref{fig:int1}, \ref{fig:corInt1}, \ref{fig:cor1}, and \ref{fig:bloodrelation_othermeth} . The results for MCI, UMFI\_{LR}, and UMFI\_{OT} were implemented with extremely randomized trees (ET), and are shown in Figures \ref{fig:int2}, \ref{fig:corInt2}, \ref{fig:cor2}, and \ref{fig:bloodrelation_ET}. Feature importance scores are shown as a percentage of the total for each of $x_1$ to $x_4$ from $100$ replications.}
\end{figure}

\subsection{Extra BRCA experiments with known ground-truth feature importance}

The following experiments are performed on the BRCA dataset with $571$ patients, each with one of four breast cancer subtypes, and $50$ continuous predictor genes. The experiments use the same setting as in Section \ref{sec:BRCA}, where the $40$ randomly chosen genes are also permuted so that the ground-truth feature importances are known. We observed that the overall classification accuracy of random forests for this dataset was $0.76$.

\subsubsection{Running 5000 iterations of UMFI}
\label{subsec:BRCA5000}
The original BRCA experiment conducted in Section \ref{sec:BRCA} showed that UMFI\_LR and UMFI\_OT performed impressively on real data, providing significantly more accurate feature importance scores than MCI after $200$ iterations of the experiment. Both UMFI\_LR and UMFI\_OT correctly gave high importance to the ten BRCA-associated genes, while giving zero median importance to about $80\%$ of the unassociated genes. Additionally, in an overnight study spanning less than ten hours, UMFI\_LR and UMFI\_OT displayed ideal results after running $5000$ iterations of the BRCA experiment. As shown in Figure \ref{fig:BRCA5000_fig}, both implementations of UMFI achieve $100$\% overall accuracy by giving high importance to the ten BRCA-associated genes and zero median feature importance to all $40$ unassociated genes. These results indicate that UMFI's relatively low computational cost can be leveraged via aggregation to achieve superior performance on complex data within a reasonable time budget. 

\begin{figure}[h!]
{\centering
\includegraphics[width=1\textwidth]{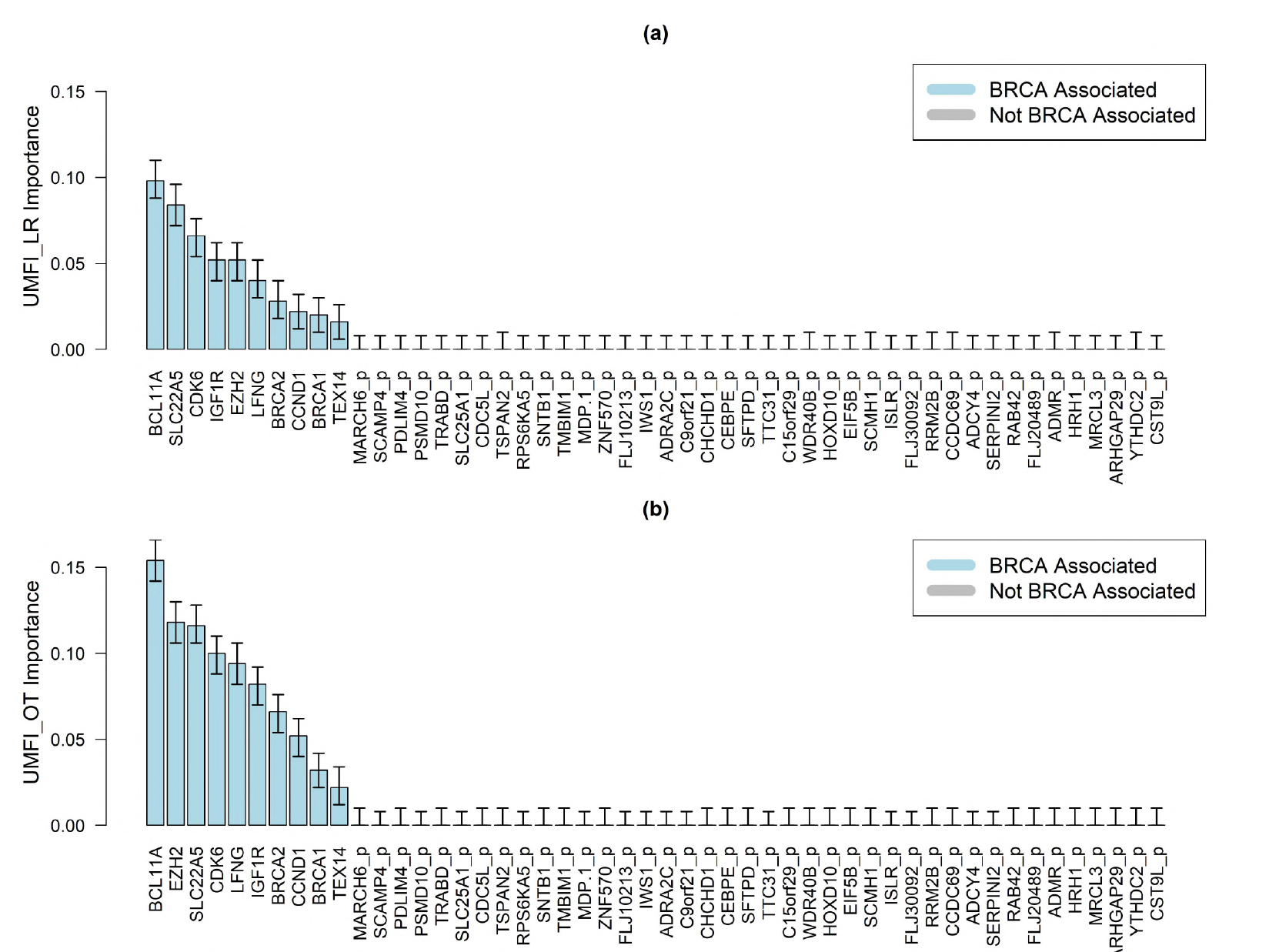}
 { \caption{Median feature importance scores provided by (a) UMFI with linear regression, and (b) UMFI with pairwise optimal transport, for each gene in the permuted BRCA dataset after $5000$ iterations. Genes colored in blue are known to be associated with breast cancer while genes colored in grey are random permutations of randomly selected genes, which we assume to be unassociated with breast cancer subtype. The first and third quantiles of the scores are visualized for each gene.}
  \label{fig:BRCA5000_fig}}
}
\end{figure}

\subsubsection{Ablation, PI, and CPI}
We also test the quality and robustness of other feature importance metrics including ablation, PI, and CPI, by running $200$ iterations of the BRCA experiment from Section \ref{sec:BRCA} for each method. Results are shown in Figure \ref{fig:brcarand_other_methods}. Ablation importance scores are small and have large uncertainties compared to its median importance scores, which makes the scores impractical to interpret. Eight of the ten important genes are identified by ablation, but all other genes are given exactly zero median importance. All ten important genes are given non-zero importance by CPI, however, some randomly permuted genes are given more importance than some genes known to be important, such as CDK6. PI gave more reliable and stable results compared to ablation and CPI in this experiment, exhibiting similar performance to UMFI\_{LR} and UMFI\_{OT} from the analogous experiment shown in Figure \ref{fig:BRCA}. We note that PI assigned zero importance to $29$ of the $40$ unassociated genes, making its TNR of $0.725$ slightly lower than UMFI in the analogous experiment from Section \ref{sec:BRCA}.

\begin{figure}[h!]
{\centering
\includegraphics[width=1\textwidth]{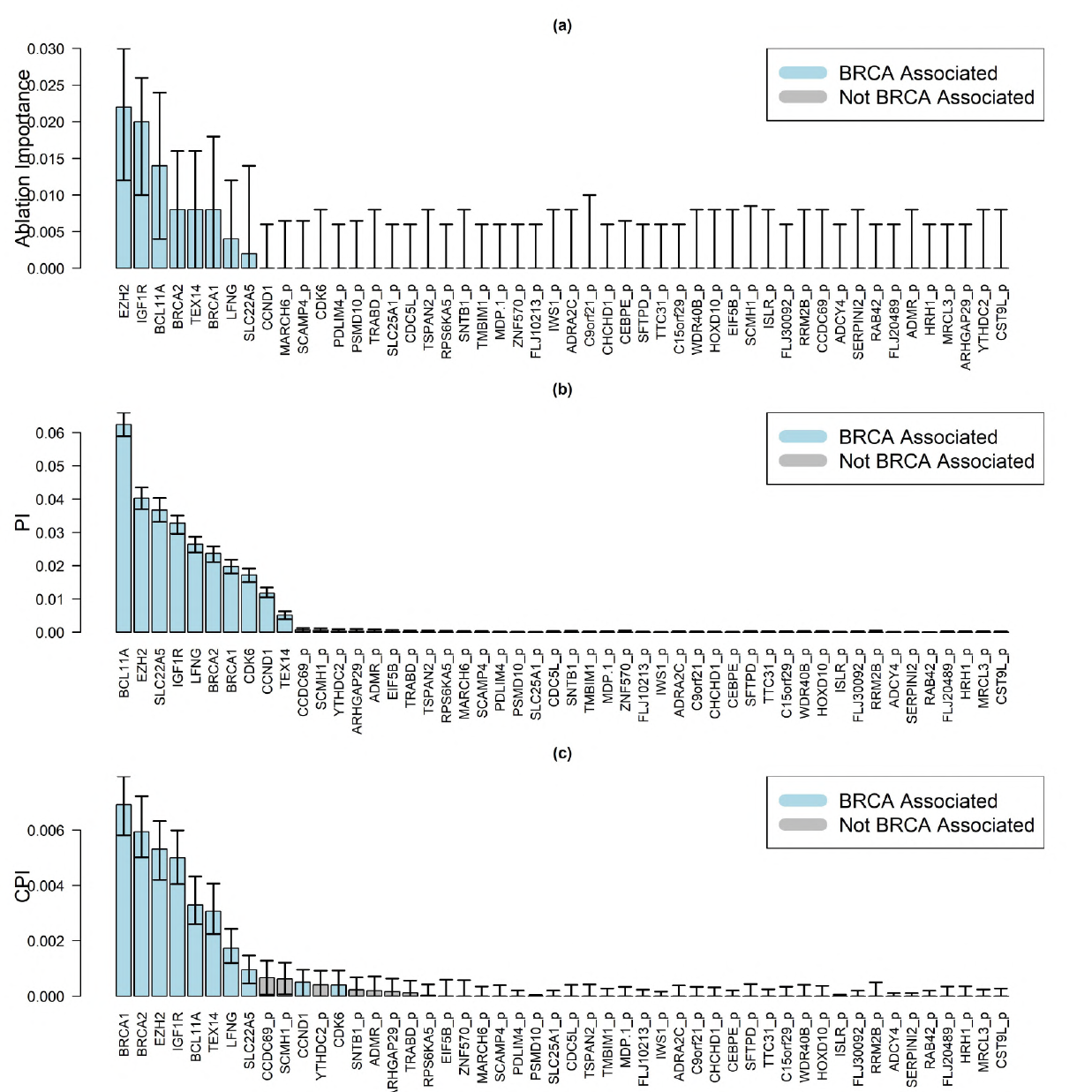}
 { \caption{Median feature importance scores provided by (a) ablation, (b) permutation importance, and (c) conditional permutation importance, for each gene in the permuted BRCA dataset after $200$ iterations. Genes colored in blue are known to be associated with breast cancer while genes colored in grey are random permutations of randomly selected genes, which we assume to be unassociated with breast cancer subtype. The first and third quantiles of the scores are visualized for each gene.}
  \label{fig:brcarand_other_methods}}
}
\end{figure}

\subsection{Experiments on unpermuted BRCA data}
Additional BRCA experiments were performed on the original unpermuted genes, as done in \citet{covert2020understanding} and \citet{catav}. The observed overall classification accuracy of random forests for this dataset was $0.79$.

Feature importance scores on this dataset were first computed with MCI, UMFI\_{LR}, and UMFI\_{OT} over $100$ iterations, as shown in Figure \ref{fig:mci_umfi_no_permutation}. The ordering of the BRCA associated genes is fairly similar across MCI and both UMFI methods. BCL11A and SLC22A5 are always the top two features and TEX14 is always the least important BRCA associated gene. While there are clear similarities in the results of all methods, the glaring difference is the number of features given zero importance. While MCI gives non-zero median importance to all $50$ features, $14$ features are given zero median importance by UMFI with linear regression, and $10$ features are given zero median importance by UMFI with pairwise optimal transport. It is unlikely that all $40$ randomly selected genes, which have not shown any association with breast cancer in previous studies, share information about breast cancer, so in this respect, we conclude that UMFI performs better than MCI.

\begin{figure}[h!]
{\centering
\includegraphics[width=\textwidth]{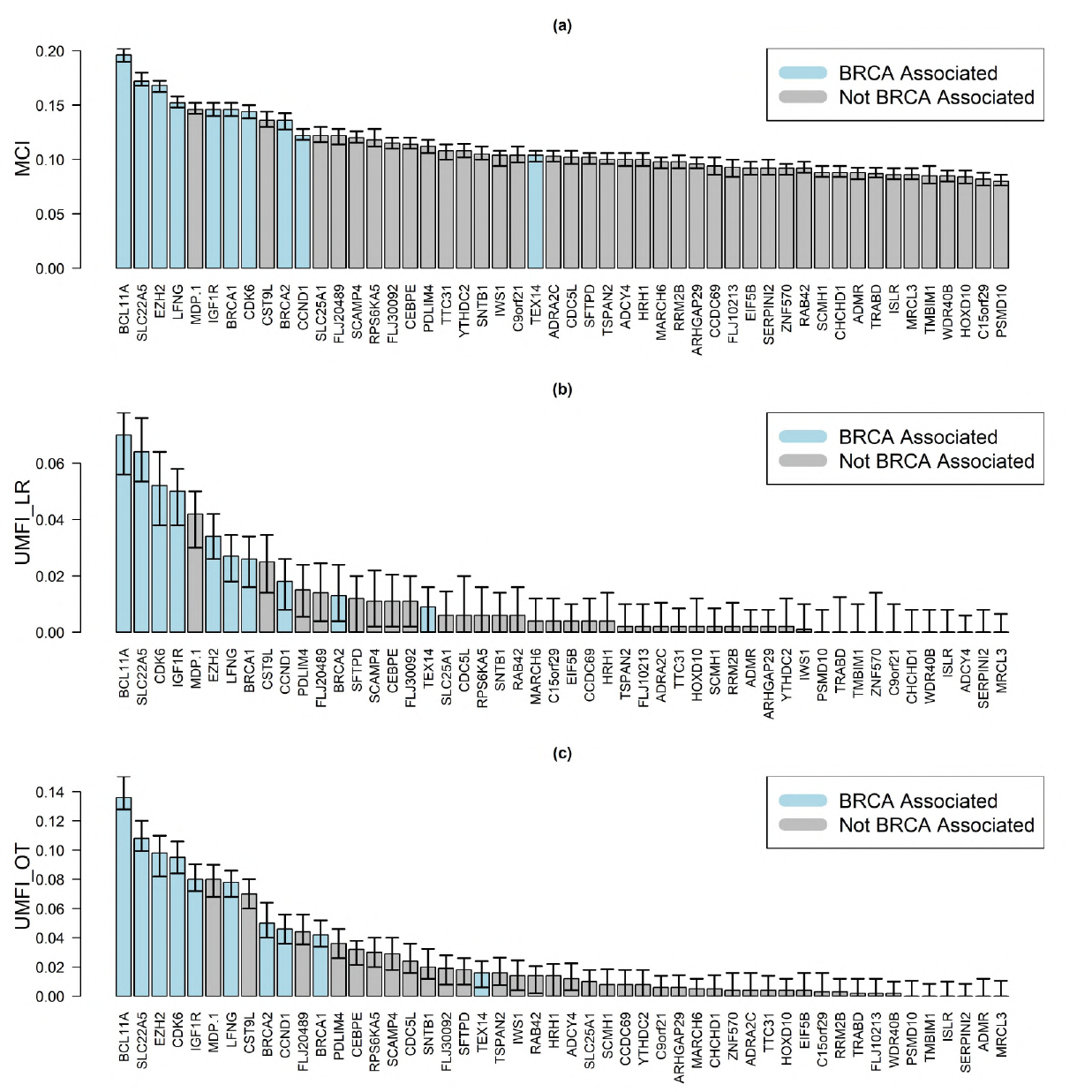}
 { \caption{Median feature importance scores provided by (a) MCI, (b) UMFI with linear regression, and (c) UMFI with pairwise optimal transport, for each gene in the unpermuted BRCA dataset after $100$ iterations. Genes colored in blue are associated with breast cancer while genes colored in grey are randomly selected genes. The first and third quantiles of the scores are visualized for each gene.}
  \label{fig:mci_umfi_no_permutation}}
}
\end{figure}
Feature importance scores on the unpermuted BRCA dataset were also computed with ablation, CPI, and PI over $100$ iterations, as shown in Figure \ref{fig:other_methods_no_permutation}. When also considering these results, we observe that MCI, UMFI, and PI give similar importance scores, while ablation and CPI performed significantly worse. Once again, ablation's high relative variance hampers its interpretability. Meanwhile, CPI gave by far the highest importance to SLC25A1, which is not known to have any association with breast cancer. In the results of MCI, UMFI, and PI, BCL11A is the most important while CST9L is always among the most important non-BRCA associated genes. Contrary to this, ablation and CPI give high importance to BRCA1, BRCA2, TEX14, EZH2, and IGF1R for BRCA associated genes, and SLC25A1 for non-BRCA associated genes.

\begin{figure}[h!]
{\centering
\includegraphics[width=\textwidth]{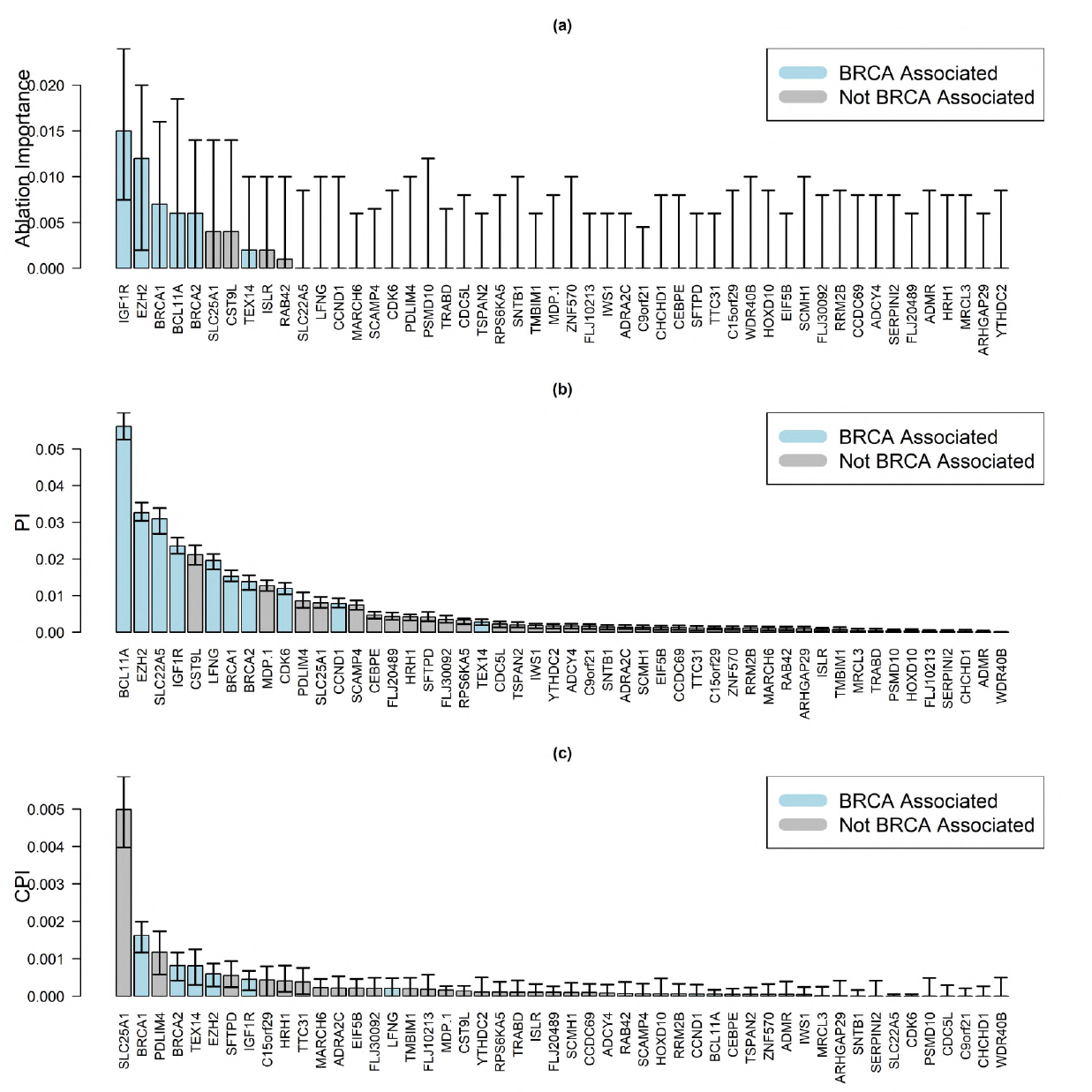}
 { \caption{Median feature importance scores provided by (a) ablation, (b) permutation importance, and (c) conditional permutation importance, for each gene in the unpermuted BRCA dataset after $100$ iterations. Genes colored in blue are associated with breast cancer while genes colored in grey are randomly selected genes. The first and third quantiles of the scores are visualized for each gene.}
 \label{fig:other_methods_no_permutation}}
 }
\end{figure}

\subsubsection{Computational complexity}
\label{sec:time1}

We compare the computational complexity of UMFI and MCI against the other feature importance methods that were explored in this section: ablation, PI, and CPI. To do so, we ran $10$ iterations of the BRCA experiment, which has $50$ features, each with $571$ observations. We recorded the average time for each method to compute feature importance for $5, 10, 15, 20, 25, 30, 35, 40, 45,$ and $50$ features. Figure \ref{fig:time_other_methods} shows that PI is the fastest method, processing $50$ features in $50$ milliseconds on average, followed by ablation ($50$ features in $1.8$ seconds), UMFI ($50$ features in $3$ seconds when parallelized), CPI ($50$ features in $30$ seconds), and finally MCI with soft 2-size submodularity ($50$ features in $205$ seconds).

\begin{figure}
\centering
{\includegraphics[width=0.6\textwidth]{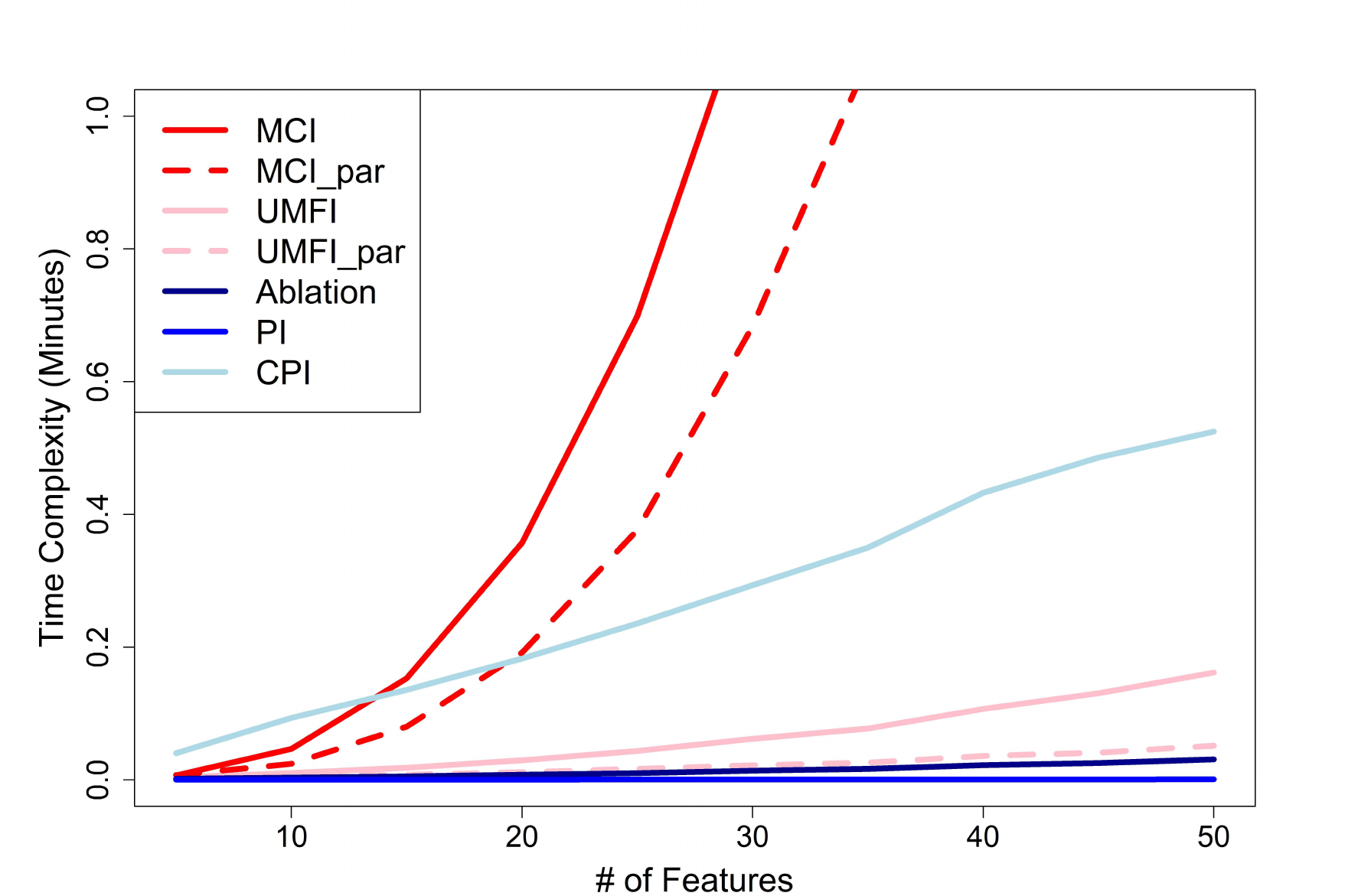}
 { \caption{The average computation time for each method to process $p$ features over $10$ iterations of the original BRCA data is plotted for each $p \in \{5, 10, 15, 20, 25, 30, 35, 40, 45, 50\}$. We assume soft 2-size submodularity to run the MCI results.}
  \label{fig:time_other_methods}}
}
\end{figure}

\subsection{Experiments on hydrology data}
\label{sec:hydrology_exp}
The final experiments for this study were conducted on a large-sample hydrology dataset called CAMELS \citep{addor2017camels}. This dataset records catchment averaged climate, soil, geology, topography, and land cover characteristics for $643$ catchments across the contiguous United States. With these, there are $29$ continuous explanatory variables. The response variable is averaged yearly streamflow, which is also continuous. Extremely randomized trees were used in this experiment with an overall OOB-$R^2$ of $0.91$.

Figure \ref{fig:hydrology_mutual_info}, which is analogous to Figure \ref{fig:depRemove} in Appendix \ref{sec:LR_vs_OT}, shows that both preprocessing methods fail to completely remove dependencies from the CAMELS dataset. This can likely be attributed to the fact that each feature is extremely dependent on the other explanatory features ($R^2 \geq 0.65$).

\begin{figure}[h!]
\centering
{\includegraphics[width=0.6\textwidth]{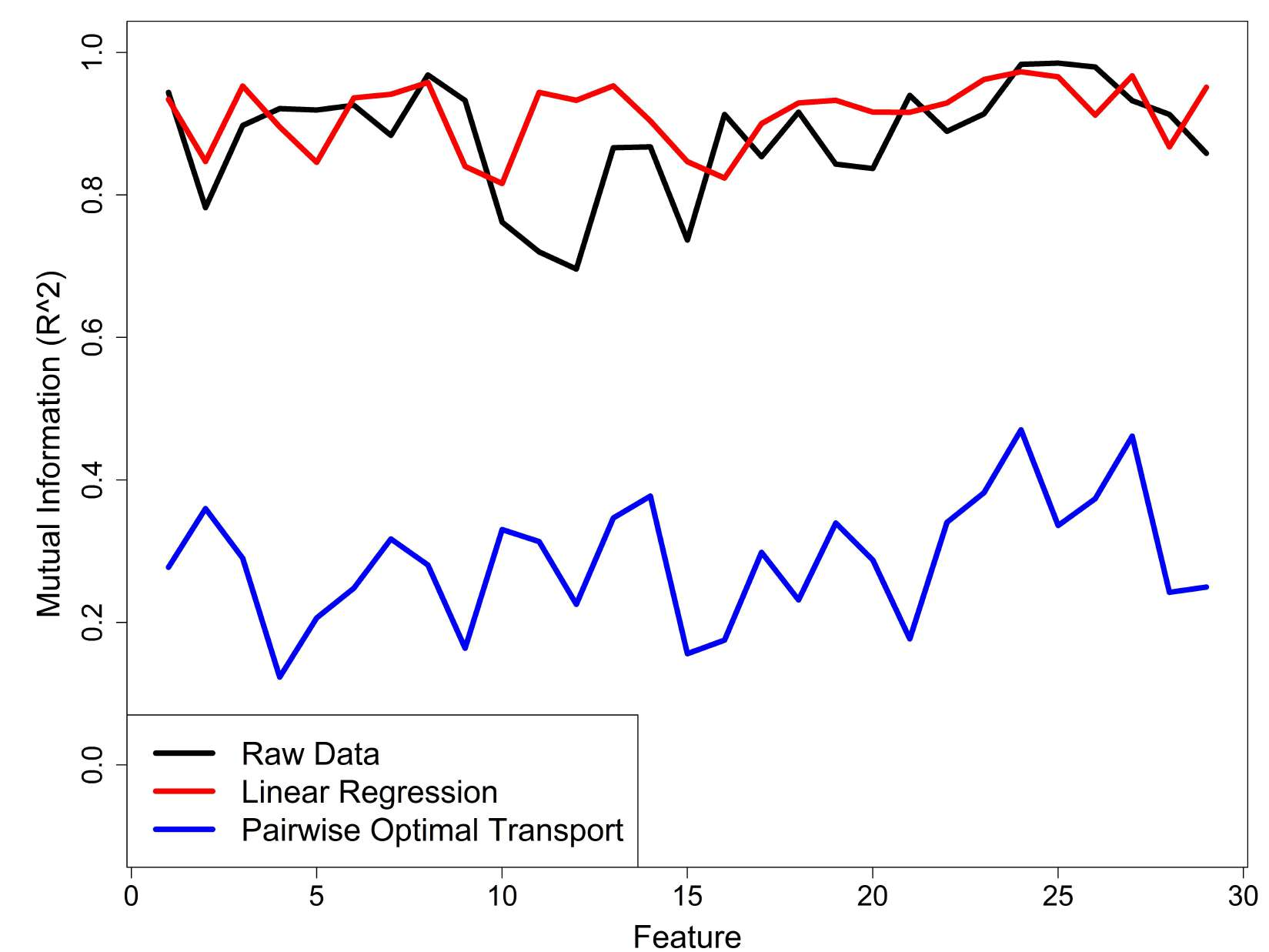}
 { \caption{The coefficient of determination (random forest OOB-$R^2$)  between the $i$th feature in the CAMELS dataset and all other features is plotted (black) for each $ i \in \{1,2,...30\}$. The $R^2$ value between the $i$th feature and all other features after preprocessing with linear regression (red) and optimal transport (blue) is also plotted.}
  \label{fig:hydrology_mutual_info}}
}
\end{figure}

The feature importance scores indicated in Figure \ref{fig:hydrology} show that mean precipitation and aridity index are the features with the strongest relationships with mean annual streamflow. Geology and soil attributes such as bedrock permeability and soil porosity are always among the least important features. These conclusions are in line with previous studies \citep{addor2018ranking,jehn2020using}, thus, even when dependencies can not be completely removed, UMFI can still provide reasonable measurements of feature importance.

\begin{figure}[h!]
{\centering
\includegraphics[width=\textwidth]{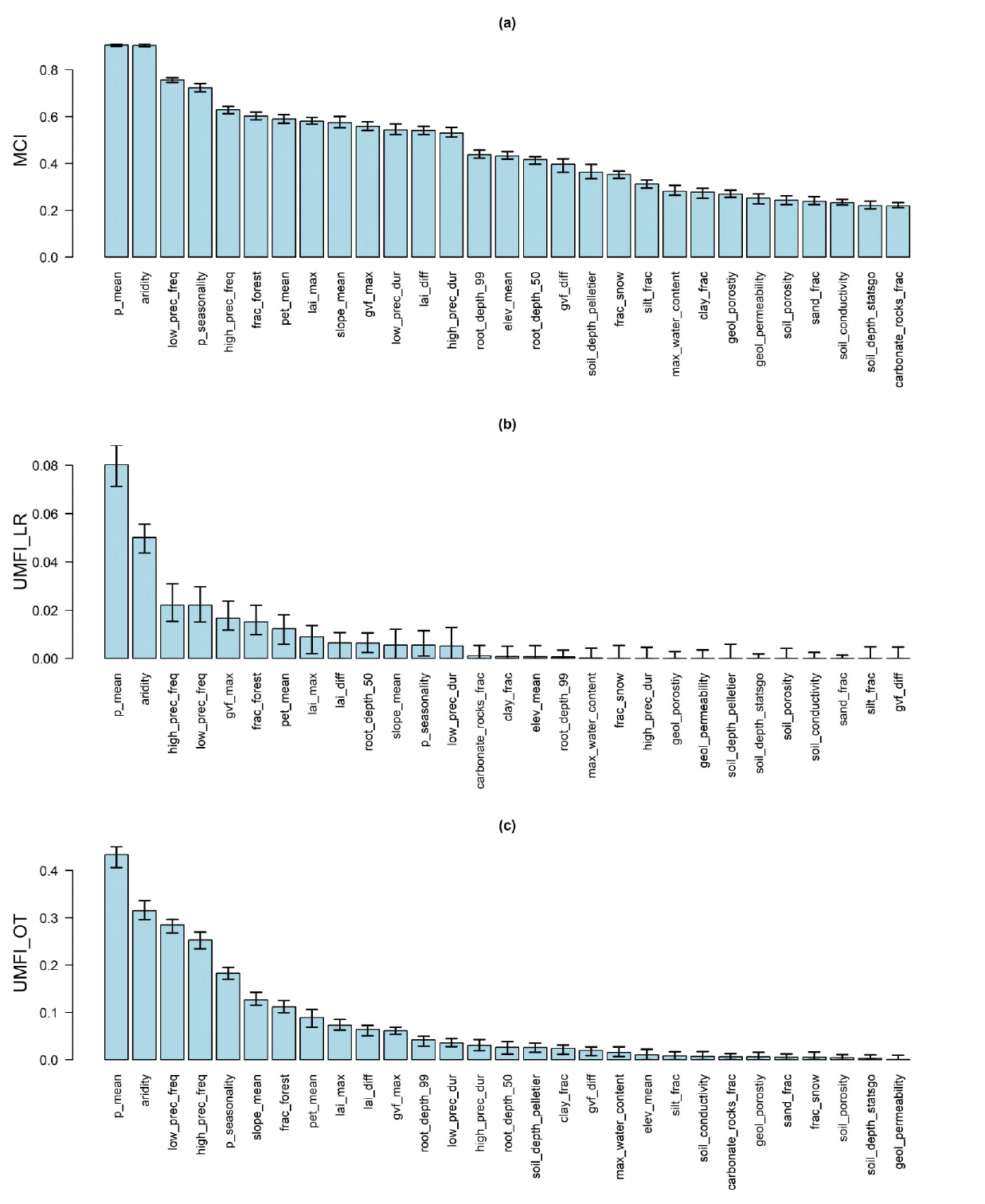}
 { \caption{Median feature importance scores provided by (a) MCI, (b) UMFI with linear regression, and (c) UMFI with pairwise optimal transport, for each explanatory variable in the CAMELS dataset, aggregated after $100$ iterations. The first and third quantiles of the scores are visualized for each feature.}
  \label{fig:hydrology}}
}
\end{figure}

\vfill

\end{document}